\newcommand{\LeftEqNo}{\let\veqno\@@leqno}
\crefname{ineq}{inequality}{inequalities}
\author{Scott Pesme}
\author{Nicolas Flammarion}
\affil{EPFL}
\title{Online Robust Regression via SGD on the $\ell_1$ loss}
\date{}
\begin{document}

\maketitle

\begin{abstract}

We consider the robust linear regression problem in the online setting where we have access to the data in a streaming manner, one data point after the other. More specifically, for a true parameter $\thet^*$, we consider the corrupted Gaussian linear model $y = \ps{x}{\thet^*} + \eps + \outl$ where the adversarial noise $b$ can take any value with probability $\eta$ and equals zero otherwise. We consider this adversary to be oblivious (i.e., $\outl$ independent of the data) since this is the only contamination model under which consistency is possible. Current algorithms rely on having the whole data at hand in order to identify and remove the outliers. In contrast, we show in this work that stochastic gradient descent on the $\ell_1$ loss converges to the true parameter vector at a  $\tilde{O}( 1 / (1 - \eta)^2 n )$ rate which is independent of the values of the contaminated measurements.   Our proof relies on the elegant smoothing of the non-smooth $\ell_1$ loss by the Gaussian data and a classical non-asymptotic analysis of Polyak-Ruppert averaged SGD. In addition, we provide experimental evidence of the efficiency of this simple and highly scalable algorithm. 
\end{abstract}

\section{Introduction}

Robust learning is a critical field that seeks to develop efficient algorithms that can recover an underlying model despite possibly malicious corruptions in the data. In recent decades, being able to deal with corrupted measurements has become of crucial importance. The applications are considerable, to name a few settings: computer vision \cite{wright2009robust,dang2015self,barron2019general}, economics~\cite{wooldridge1990unified,rousseeuw1999fast,zaman2001econometric}, astronomy~\cite{rousseeuw1987application}, biology~\cite{yeung6163reverse,stegle2008gaussian} and above all, safety-critical systems~\cite{carlini2016hidden,goodfellow2018making,eykholt2018robust}.

Linear regression being one of the most fundamental statistical model, the robust regression problem has naturally drawn substantial attention.
In this problem, we wish to recover a signal from noisy linear measurements where an unknown proportion $\outproportion$ has been arbitrarily perturbed.  %
Various models have been proposed to illustrate such contaminations. The broadest %
is to consider that the adversary is adaptive and is allowed to inspect the samples before changing a fraction $\eta$.  In this general framework, exact model recovery is not possible and  several robust algorithms have been proposed~\cite{chen2013robust,klivans2018efficient,diakonikolas2019sever,prasad2018robust,steinhardt2017resilience,charikar2017learning,liu2018high,liu2019high}. The information-theoretic optimal recovery guarantee has recently been reached by~\cite{diakonikolas2019efficient}. 
Another model is to consider an oblivious adversary, in this simpler context it is possible to consistently recover the model parameter and several algorithms have been proposed~\cite{bhatia2017consistent, suggala2019adaptive}.

However, none of these algorithms are suitable for online or large-scale problems~\cite{mcmahan2013ad,fritsch2015robust}.  Indeed, all of the suggested algorithms require handling the complete dataset, which is simply unrealistic in such settings. 
This is a considerable problem when we know that modern problems involve colossal datasets and that current machine learning methods are limited  by  the  computing  time  rather  than  the  amount of data~\cite{bottou2008tradeoffs}. Such considerations advocate the necessity of proposing practical, online and highly scalable robust algorithms, hence we ask the following question:

\begin{changemargin}{0.4cm}{0.4cm}
\textit{Can we design an efficient online algorithm for the robust regression problem ?}
\end{changemargin}

In this paper we answer by the affirmative for the online \textit{oblivious response corruption} model where we are given a stream of i.i.d. observation $(x_i,y_i)_{i \in \mathbb{N}}$ from the following generative model:
\[ y = \ps{x}{\thet^*} + \eps + b,
\]
where $\thet^* \in \R^d$ is the true parameter we wish to recover, $x$ is the Gaussian feature, $\eps$ is the Gaussian noise of variance $\sigma^2$ and $b$ is an adversarial 'sparse' noise supposed independent of the data $(x, \eps)$ such that $\mathbb{P} ( b \neq 0 ) = \eta$. In order to recover the parameter $\thet^*$,
we perform \textit{averaged SGD on the expected $\ell_1$ loss}  $\E{\abs{y - \ps{x}{\thet}} }$. 
Though this algorithm is very simple, we show that it successively handles the outliers in an online manner and consequently recovers the true parameter at the optimal non-asymptotic convergence rate $\tilde O(1/n)$ for any outlier proportion $\eta<1$. Such an algorithm is useful for abundant practical applications such as :
(a) detection of irrelevant measurements and systematic labelling errors~\cite{laska2009exact},
(b)  real time detection of system attacks such as frauds by click bots~\cite{haddadi2010fighting}
or malware recommendation rating-frauds~\cite{zhu2015discovery}, and (c) online regression with heavy-tailed noise \cite{suggala2019adaptive}.

The minimisation problem $\min_{\thet \in \R^d} \E{\abs{y - \ps{x}{\thet}} }$ is certainly not new and is also known as the Least Absolute Deviations (LAD) problem. While originally suggested by Boscovich in the mid-eighteenth century~\cite{bloomfield1983least}, it first appears in the work of Edgeworth~\cite{edgeworth1888new}. In contrast with least-squares, there is no closed form solution to the problem and, in addition, the non-differentiability of the $\ell_1$ loss prevents the use of fast optimisation solvers for large-scale applications~\cite{yang2010review}. However, if successively dealt with, the LAD problem has many advantages. Indeed, the $\ell_1$ loss is well known for its robustness properties~\cite{karmalkar2019compressed} and, unlike the Huber loss~\cite{huber1964robust}, it is parameter free which makes it more convenient in practice. In this context, using the SGD algorithm in order to solve the LAD problem is a very natural approach. We show in our analysis that, though the $\ell_1$ loss is not strongly convex, averaged SGD recovers a remarkable  $O (1 / n)$ convergence rate instead of the classical $O(1 / \sqrt{n})$ which is ordinary in the non-strongly-convex framework

With a convergence rate depending on the variance as  
 $O (\sigmun^2 d / (1 - \tilde{\outproportion})^2 n  ) $, the proposed algorithm has several major benefits: 
a) it is highly scalable and statistically optimal, b) it depends on the outlier contamination through an \textit{effective} outlier proportion  $\tilde{\eta}$ strictly smaller than $\outproportion$, which makes it \textit{adaptive} to the difficulty of the adversary,  c) it is relatively insensitive to the ill conditioning of the features and d) it is almost parameter free since it only requires in practice an upper-bound on the covariates' norm.
Though the algorithm is simple, its analysis is not and requires several technical manipulations based on recent advances in stochastic approximation~\cite{harvey2019tight}. Indeed, in the classical non-strongly-convex framework which we are in, the usual convergence rate is $O(1 / \sqrt{n})$ and not $O(1 / n)$ as we obtain.  Overall our analysis relies on the smoothing of the $\ell_1$ loss by the Gaussian data. This smoothing enables the retrieval of a fast $O ( 1 / n)$ rate thanks to the local strong convexity around $\thet^*$ and to Polyak-Ruppert averaging.

Our paper is organised as follows. %
We define the problem which we consider in \cref{section:problem_formulation}. We then describe the particular structure our function $f(\thet) := \EE [ | y - \ps{x}{\thet} |]$ enjoys in \cref{section:f}. Our main convergence guarantee result is given \cref{section:convergence_guarantee} followed by the sketch of proof in \cref{section:proof}. Finally, in \cref{section:exp}, we provide experimental validation of the performances of our method.

\subsection{Related work}\label{section:related_work}

\paragraph{Robust statistics.}
Classical robust statistics have a long history which begins with the seminal work by Tukey and Huber~\cite{tukey1960survey,huber1964robust}. They mostly focus on  the influence function~\cite{hampel2011robust}, the asymptotic efficiency~\cite{yohai1987high} as well as on the concept of breakdown point \cite{hampel1971general,donoho1983notion} which is the maximal proportion of outliers an estimate can tolerate before it breaks down.
However these approaches are purely statistical and the proposed estimators are unfortunately either not computable in polynomial time ~\cite{rousseeuw1984least,rousseeuw1985multivariate} or purely heuristical~\cite{fischler1981random}.

\paragraph{Recent advances in robust statistics.}
\cite{diakonikolas2016robust,lai2016agnostic}  are the first to propose robust estimators of the mean that can be computed in polynomial time and that have near optimal sample complexities. This leads to a recent line of work in the computer science community that provides recovery guarantees for a range of different statistical problems such as mean estimation or covariance estimation~\cite{diakonikolas2017being,diakonikolas2018list,diakonikolas2018robustly,kothari2018robust,hopkins2018mixture,steinhardt2017resilience,charikar2017learning}.
The robust linear regression problem is explored under general corruption models 
in the works of~\cite{chen2013robust,klivans2018efficient,diakonikolas2019sever,diakonikolas2019efficient,prasad2018robust,steinhardt2017resilience,charikar2017learning,liu2018high,liu2019high}.
The broadest and therefore hardest contamination model considers that the adversary has access to all the samples and can arbitrarily contaminate any fraction $\outproportion$. In this contamination framework the  minimax optimal estimation rate on $\Vert \hat \theta-\thet^*\Vert $ is  $ \tilde O( \sigma ( \eta +\sqrt{d/n}))$ where $\sigma^2$ is the variance of the Gaussian dense noise~\cite{chen2016general}. 
In \cite{diakonikolas2019efficient} this minimax bound is achieved under the assumption that the covariates follow a centered Gaussian distribution of covariance identity. For a general covariance matrix $H$, the sample complexity becomes $O(d^2 / \eta^2 )$, however they provide a statistical query lower bound showing that it may be computationally hard to approximate $\thet^*$ with less than $d^2$ samples.
We highlight the fact that if the computational issues are put aside, \cite{gao2020robust} provides for the weaker Huber $\varepsilon$ contamination model\footnote{This contamination model is weaker because the adversary is oblivious of the uncorrupted samples} a statistically optimal estimator which can however only be computed in exponential time. For more details on the current advances, see the recent survey~\cite{diakonikolas2019recent}.

\paragraph{Response corrupted robust regression.}
A simpler contamination model is to consider that the adversary can only corrupt the responses and not the features. In this framework two main approaches have been considered. \newline
\textbf{a)} The first approach is based on viewing the regression problem as $\min_{\theta, b: \Vert b\Vert_0\leq \eta n} \Vert y-X\theta-b\Vert_2$. However this is a non-convex and NP hard problem~\cite{studer2012recovery}. In order to deal with the problem, convex relaxations based on the $\ell_1$ loss~\cite{wright2010dense,nguyen2013robust,nguyen2013exact} and second-order-conic-programming (SOCP)~\cite{candes2008highly,chen2012fused} have been proposed and studied. Simultaneously, hard thresholding techniques were considered~\cite{bhatia2015robust,bhatia2017consistent,suggala2019adaptive}. However all of these approaches rely on manipulating the whole corruption vector ${\outl}$ and are therefore not easy to adapt to the online setting. \newline
\textbf{b)} The second approach relies on using a so-called robust loss. This is designated as the M-estimation framework~\cite{huber1964robust}: the least-squares problem is replaced by $\min_{\theta} \mathbb E_z [ \rho(\theta, z) ]$ where the loss function $\rho$ is chosen for its robustness properties. The $\ell_1$ loss or the Huber loss~\cite{huber1964robust} are classical examples of convex robust losses. The Huber loss is essentially an appropriate mix of the $\ell_2$ and the $\ell_1$ losses. On the other hand the Tukey biweight~\cite{tukey1960survey} is an example of a non-convex robust loss. The idea behind such losses is to give less weight to the outliers which 
have large residuals. Asymptotic normality of these M-estimators have been well studied in the statistical literature~\cite{huber1973robust,basset1978asymptotic,maronna1981asymptotic,pollard1991asymptotics,vaart1998asymptotic,geer2000empirical,tsakonas2014convergence} and their non-asymptotic performance have been recently investigated~\cite{loh2017statistical,zhou2018new,mukhoty2019globally,karmalkar2019compressed}.

We point out that the two mentioned approaches are related since they are duals. Minimising the Huber Loss is equivalent to the $\ell_1$ constrained problem $\min_{\theta, b} \Vert y-X\theta-b\Vert^2_2+\lambda  \|\outl\|_1$ for an appropriate $\lambda$ \cite{fuchs1999inverse,donoho2016high}. The estimation rates of $\Vert \hat \theta-\thet^*\Vert_2$ in both cases are  similar and typically $\tilde O( \sigma ( \sqrt{\eta} +\sqrt{d/n})) $. These rates were later improved to the optimal rate $\tilde O( \sigma ( \eta +\sqrt{d/n}))$ in \cite{dalalyan2019outlier}. When the adversary is in addition oblivious, consistency is possible and \cite{tsakonas2014convergence,bhatia2017consistent,suggala2019adaptive} show that there exists a consistent estimator with error $O(\sigma (1-\eta)^{-1} \sqrt{d/n})$.

\paragraph{Stochastic optimisation.} 
Stochastic optimisation has been studied in a variety of different frameworks such as that of machine learning \cite{bottou1998online,zhang2004solving}, optimisation \cite{nemirovski2008robust} and stochastic approximation \cite{benveniste1990adaptive}. The optimal convergence rates are known since \cite{NemYud83}: it is of $O(1/ \sqrt{n})$ in the general convex case and is improved to $O(1/ \mu n)$ if $\mu$-strong convexity is additionally assumed. These rates are obtained using the SGD algorithm. However the optimal step-size sequences depend on $f$'s convexity properties.
The idea of averaging the SGD iterates first appears in the works of \cite{polyak1990new} and \cite{ruppert1988efficient}.
This method is now referred to as Polyak-Ruppert averaging. Shortly after, \cite{polyak1992acceleration} provides asymptotic normality results on the probability distribution of the averaged iterates. This result is later generalised in \cite{bach2011non} where non-asymptotic guarantees are provided. In the smooth framework, the advantages of averaging are numerous. Indeed, it improves the global convergence rate and leads to the statistically optimal asymptotic variance rate which is independent of the conditioning $\mu$~\cite{bach2013non}. 
Moreover it has the significant advantage of providing an algorithm that adapts to the difficulty of the problem: with averaging, the same step-size sequence leads to the optimal convergence rate whether the function is strongly convex or not.  Averaging also displays another important property which will prove to be particularly relevant in our work: it leads to fast convergence rates in some cases where the function is only locally strongly convex around the solution, as for logistic regression~\cite{bach2014adaptivity}.

\section{Problem formulation}\label{section:problem_formulation}

Consider we have a stream of independent and identically distributed data points $(x_i, y_i)_{i \in \mathbb{N}}$ sampled from the following linear model:
\begin{equation}\tag{\textbf{A.1}}\label{eq:linearmodel}
 y_i = \ps{x_i}{\thet^*} + \eps_i + \outl_i ,  \LeftEqNo
\end{equation}
where $\theta^* \in \R^d$ is a parameter we wish to recover. The noises $\eps_i$ are considered as 'nice' noise of finite variance $\sigmun^2$. %
In contrast the outliers $\outl_i$ can be any adversarial 'sparse' noise.
In the online setting we define a sparse random variable as a random variable that equals $0$ with probability $(1 - \eta) \in (0, 1]$.

We investigate in this work the $\ell_1$ minimisation problem (a.k.a least absolute deviation):
\begin{equation}
 \min_{\thet\in\R^d} f(\theta):= \E{|y-\ps{x}{\thet}|},
\end{equation}
using the stochastic gradient descent algorithm~\cite{RobMon51} defined by the following recursion initialised at  $\thet_0\in\R^d$:
\begin{align}
\label{eq:sgd}
    \thet_n = \thet_{n-1} + \gamma_n \sgn{y_n - \ps{x_n}{\theta_{n - 1}}} x_n,
\end{align}
where $(\gamma_n)_{n\geq1}$ is a positive sequence named step-size sequence.
In this paper we mostly consider the averaged iterate $\btheta_n = \frac{1}{n} \sum_{i=0}^{n-1} \thet_i$ which can easily be computed online as $\btheta_n = \frac{1}{n} \thet_{n-1} + \frac{n-1}{n} \btheta_{n-1} $. Note that SGD is an extremely simple and highly scalable streaming algorithm. There are no parameters to tune, we will see further that a step-size sequence of the type $\gamma_n = 1 / \boundgrad \sqrt{n}$ leads to a good convergence rate. 

We make here the following assumptions:

\begin{enumerate}[leftmargin=9.2mm,label=(\textbf{A.\arabic*}),ref=A.\arabic*]
  \setcounter{enumi}{1}
\item
\label{as:gf}
\textbf{Gaussian features.} The features $x$ are centered Gaussian  $\sim \N(0, H)$ where $H$ is a positive definite matrix. We denote by $\mu$ its smallest eigenvalue and $\boundgrad = \text{trace} (H)$.
\item
\label{as:gn}
\textbf{Independent Gaussian dense noise.} The dense noise $\eps$ is a centered Gaussian $\N(0, \sigmun^2)$ where $\sigmun > 0$ and $\eps$ is independent of $x$.
\item 
\label{as:out}
\textbf{Independent sparse adversarial noise.} The adversarial noise $\outl$ is independent of $(x, \eps)$ and satisfies $\Prob{b \neq 0} = \eta \in [0, 1)$. We denote by $\tilde{\outproportion}  =  \outproportion \cdot \big( 1 - \EE_{\outl} \big[ \eexp \big(- \frac{\outl^2}{2 \sigmun^2}\big)\big]  \big) \in [0, \outproportion)$. 
\end{enumerate}
Under these assumptions $f'(\thet^*) = - \E{ \sgn{\eps + \outl} x } = - \E{ \sgn{\eps + \outl}} \E{ x } = 0  $. It therefore makes sense to minimise $f$ in order to recover the parameter $\thet^*$. Note that contrary to least-squares where a finite mean and variance are required, we do not make any assumptions on the moments of the noise $b$. The Gaussian assumptions on data are technical and made for simplicity.
However the continuous aspect of the features and the noise is essential in order to have a differentiable loss $f$ after taking the expectation. 
We point that the Gaussian assumptions is a classical assumption which is also made in the works of \cite{bhatia2015robust, bhatia2017consistent, suggala2019adaptive,karmalkar2019compressed,diakonikolas2019efficient}. However we do not make any additional hypothesis on matrix $H$ concerning its conditioning.
We point out that in our framework we must have $\eps$ and the adversarial noise $\outl$ independent of $x$ in order to have  $\text{argmin}_{\thet \in \R^d} \ \E{ \left | y - \ps{x}{\thet} \right | } = \thet^* $.
The parameter $\tilde{\outproportion} \in [0,  \outproportion)$ we introduce in Assumption~\ref{as:out} is the \textit{effective outlier proportion}. This quantity expresses the pertinent corruption proportion and will prove to be more relevant than $\eta$. Indeed, notice that in the simple case where $\Prob{\outl \ll \sigmun} \sim 1$ then  $\EE_{\outl}[\exp{- {\outl^2}/{\sigmun^2 }}] \sim 1$ and $\tilde{\outproportion} \sim 0$. Intuitively in this situation it makes sense to have an effective outlier corruption close to zero: if $\outl \ll \sigmun$ a.s., then the $\outl_i$'s do not disturb the recursion and can therefore be considered as non-adversarial noises. This last observation is however only valid in the oblivious framework.

\section{Gaussian smoothing and structure of the objective function $f$}\label{section:f}

In this section we show that $f$ and its derivatives enjoy nice properties and have closed forms which prove to be useful in analysing the SGD recursion \cref{section:proof}. 
We fist show
that though the $\ell_1$ loss is not smooth, it turns out that by averaging over the continuous Gaussian features $x$ and noise $\varepsilon$, the expected loss $f$ is continuously derivable.
\begin{lemma}\label{lemma:f}
Suppose that (\ref{eq:linearmodel}, \bref{as:gf}, \bref{as:gn}, \bref{as:out}) hold and let $\erf{\cdot}$ denote the Gauss error function. Then for all $\thet\in\R^d$:
\begin{align*}\textstyle
f(\thet)
= \Eb {\sqrt{\frac{2}{\pi}} \sqrt{\sigmun^2 + \sigmat^2} \exp{- \frac{\outl^2}{2 (\sigmun^2 + \sigmat^2)}} + \outl \erf{\frac{\outl}{\sqrt{2 (\sigmun^2 + \sigmat^2)}}} }.
\end{align*}
\end{lemma}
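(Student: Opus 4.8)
The plan is to reduce $f(\thet)=\E{\abs{y-\ps{x}{\thet}}}$ to the closed-form expectation of the absolute value of a one-dimensional Gaussian, and then to average over the adversarial noise $\outl$. First I would use \eqref{eq:linearmodel} to write the residual as
\[
 y-\ps{x}{\thet}=\ps{x}{\thet^*-\thet}+\eps+\outl .
\]
By \bref{as:gf} and \bref{as:gn} the variables $x$ and $\eps$ are independent centered Gaussians, so $\ps{x}{\thet^*-\thet}+\eps$ is again centered Gaussian with variance $\sigmun^2+\sigmat^2$, where $\sigmat^2:=(\thet^*-\thet)^\top H(\thet^*-\thet)$ is the variance of the projected feature $\ps{x}{\thet^*-\thet}$. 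Since by \bref{as:out} the noise $\outl$ is independent of $(x,\eps)$, conditionally on $\outl$ the residual is distributed as $\N(\outl,\sigmun^2+\sigmat^2)$.

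The computational core is the mean of a folded normal: for $W\sim\N(\mu,s^2)$ I would evaluate $\E{\abs{W}}$ by writing $W=\mu+sG$ with $G\sim\N(0,1)$ and splitting the integral of $\abs{\mu+sG}$ at $G=-\mu/s$. Using $\int_a^{\infty} g\,\phi(g)\,dg=\phi(a)$, with $\phi,\Phi$ the standard normal density and CDF, the two half-line integrals combine to
\[
 \E{\abs{W}}=\mu\,(1-2\Phi(-\mu/s))+2s\,\phi(\mu/s).
\]
Rewriting $2s\,\phi(\mu/s)=s\sqrt{2/\pi}\,\exp{-\frac{\mu^2}{2s^2}}$ and $1-2\Phi(-\mu/s)=\erf{\frac{\mu}{\sqrt{2}\,s}}$ puts this in the desired form
\[
 \E{\abs{W}}=s\sqrt{2/\pi}\,\exp{-\frac{\mu^2}{2s^2}}+\mu\,\erf{\frac{\mu}{\sqrt{2}\,s}}.
\]

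Finally I would instantiate this identity with $\mu=\outl$ and $s^2=\sigmun^2+\sigmat^2$ to obtain the conditional expectation $\E{\abs{y-\ps{x}{\thet}}\mid\outl}$, and then apply the tower property $f(\thet)=\Eb{\E{\abs{y-\ps{x}{\thet}}\mid\outl}}$, which reproduces the claimed formula verbatim. The only genuine computation is the folded-normal integral, which is elementary; the one point deserving care is the conditioning step---that one may integrate out $(x,\eps)$ for frozen $\outl$---which is exactly what the independence in \bref{as:out} guarantees. Everything else is a rewriting of the standard normal density and CDF in terms of $\erf$.
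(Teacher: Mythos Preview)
Your proposal is correct and follows essentially the same route as the paper: condition on $\outl$, observe that the residual is then $\N(\outl,\sigmun^2+\sigmat^2)$ by independence, invoke the folded-normal mean formula, and take the outer expectation over $\outl$. The only difference is cosmetic---the paper simply cites the folded-normal mean as a known identity, whereas you spell out the split-integral derivation.
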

We point out that the expectation $\Eb{\cdot}$ is taken only over the outlier distribution, the expectation over the Gaussian features and Gaussian noise having already been taken.   The proof relies on noticing that since $\outl$ is independent of $x$ and $\eps$, given outlier $\outl$, $y- \ps{x}{\thet} = \eps + \outl - \ps{x}{\thet - \thet^*}$  
follows a normal distribution $\N(\outl, \sigmun^2 + \sigmat^2)$. The absolute value of this Gaussian random variable follows a folded normal distribution whose expectation has a closed form~\cite{leone1961folded}.
\cref{lemma:f} exhibits the fact that though the $\ell_1$ loss is not differentiable at zero, taking its expectation over a continuous density makes the expected loss $f$ continuously differentiable. This is reminiscent of Gaussian smoothing used in gradient-free and non-smooth optimisation~\cite{nesterov2017random,duchi2012randomized}.

In the absence of contamination, i.e., when $b=0$ almost surely, the function simplifies to the pseudo-Huber loss function~\cite{charbonnier1994two} with parameter $\sigmun$, $f(\thet) = \sqrt{2/\pi} (\sigmun^2 + \sigmat^2)^{1/2}$.  More broadly, notice that $f(\thet) {\sim} \sqrt{{2}/{\pi}} \sigmat$  when $\sigmat \to + \infty$ and $f(\thet) - f(\thet^*) {\sim} \frac{1 - \tilde{\outproportion}}{\sqrt{2 \pi} \sigmun} \sigmat^2$ for $\sigmat\ll\sigmun$. This highlights the \emph{quadratic} behaviour of $f$ around the solution $\theta^*$ and its \emph{linear} behaviour far from it. This shows that though $f$ is not strongly convex on $\R^d$, it is locally strongly convex around $\thet^*$. Actually the two criteria $f(\thet) - f(\thet^*)$ and $\sigmat$ are closely related and we show  that the prediction error $\sigmat^2$ is  in fact $O \left ( f(\thet) - f(\thet^*) + (f(\thet) - f(\thet^*))^2 \right )$ (see \cref{lemma:upbound_sigma} in Appendix).

The next lemma exhibits the fact that $f'$ has a surprisingly neat structure.
\begin{lemma}\label{lemma:f'}
Suppose that (\ref{eq:linearmodel}, \bref{as:gf}, \bref{as:gn}, \bref{as:out}) hold. Then for every $\thet\in\R^d$:
\begin{align*}
f'(\thet) = \alpha(\sigmat) \  H (\thet - \thet^*), 
\end{align*}
with $\textstyle
\alpha(z) = \sqrt{\frac{2}{\pi}} \frac{1}{\sqrt{\sigmun^2 +z^2}  } \Ee{\outl}{\exp{- \frac{\outl^2}{2(\sigmun^2 + z^2)}}}$ for $z\in\R$. 
\end{lemma}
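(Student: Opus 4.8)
The plan is to differentiate the closed-form expression for $f$ obtained in \cref{lemma:f}, exploiting the fact that $f$ depends on $\thet$ only through the scalar $\sigmat^2 = \ps{\thet - \thet^*}{H(\thet - \thet^*)}$ (the variance of $\ps{x}{\thet - \thet^*}$). The only chain-rule ingredient I need is $\nabla_\thet \sigmat^2 = 2 H (\thet - \thet^*)$, so it suffices to compute the \emph{scalar} partial derivative of the integrand with respect to $s := \sigmun^2 + \sigmat^2$ and then multiply by $2 H(\thet - \thet^*)$ and take $\Eb{\cdot}$. Differentiation under $\Eb{\cdot}$ is justified by dominated convergence, the integrand and its $s$-derivative being bounded uniformly in $\outl$ on a neighbourhood of any fixed $\thet$.

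Writing the integrand of \cref{lemma:f} as $\phi(s, \outl) = \sqrt{2/\pi}\,\sqrt{s}\,\exp{-\outl^2/(2s)} + \outl\,\erf{\outl/\sqrt{2s}}$, I would differentiate both summands in $s$. The first term produces two pieces, one from $\sqrt{s}$ and one $\outl^2$-proportional piece from the exponent; the erf term, using $\erf{\cdot}' = (2/\sqrt{\pi})\exp{-\cdot^2}$, produces a single $\outl^2$-proportional piece. The crucial (and pleasant) observation is that, after the constants are lined up via $\sqrt{2/\pi}/2 = 1/\sqrt{2\pi}$, the $\outl^2$-piece of the first term cancels exactly against the derivative of the erf term, leaving only $\partial_s \phi = \tfrac12 \sqrt{2/\pi}\, s^{-1/2} \exp{-\outl^2/(2s)}$. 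Multiplying by $\nabla_\thet s = 2H(\thet-\thet^*)$ and taking $\Eb{\cdot}$ then yields precisely $\alpha(\sigmat)\, H(\thet - \thet^*)$.

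A more conceptual alternative would be to work directly from $f'(\thet) = -\E{\sgn{y - \ps{x}{\thet}}\, x}$, legitimate because $y - \ps{x}{\thet}$ has a continuous density so the non-differentiability of the absolute value occurs on a null set. Conditioning on $\outl$, the pair $(x, Z)$ with $Z = \eps + \outl - \ps{x}{\thet - \thet^*} \sim \N(\outl, s)$ is jointly Gaussian with $\mathrm{Cov}(x, Z) = -H(\thet-\thet^*)$. Decomposing $x$ along $Z$ and applying Gaussian integration by parts (Stein's lemma) to $\sgn{\cdot}$, whose distributional derivative puts mass $2$ at the origin, gives $\E{x\,\sgn{Z} \mid \outl} = -H(\thet-\thet^*)\, \sqrt{2/(\pi s)}\,\exp{-\outl^2/(2s)}$; averaging over $\outl$ reproduces the same formula and identifies $\alpha$.

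I expect the main obstacle to be one of rigour rather than of computation. In the first route it is correctly tracking the chain rule through $s$ and spotting the exact cancellation of the erf derivative against the exponent term; in the second route it is justifying the interchange of gradient and expectation across the non-smooth sign and applying integration by parts to a discontinuous integrand (equivalently, evaluating the density of $Z$ at the origin). The remaining algebra, in particular the constant identity $2/\sqrt{2\pi} = \sqrt{2/\pi}$, is routine.
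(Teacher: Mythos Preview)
Your primary route---differentiating the closed form of \cref{lemma:f} through the scalar $s=\sigmun^2+\sigmat^2$ and observing that the $\outl^2$-piece from the exponential exactly cancels the derivative of the $\erf{\cdot}$ term---is precisely what the paper does, and your computation is correct. The Stein's-lemma alternative you sketch is not in the paper but is a valid and pleasant conceptual shortcut that bypasses \cref{lemma:f} altogether.
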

This result is immediately obtained by deriving $f$'s closed form.
The gradient $f'$ benefits from a very specific structure: it is exactly the gradient of the $\ell_2$ loss with a scalar factor in front which depends on $\sigmun$, the outlier distribution and the prediction loss $\sigmat$. 
Note that the gradient is proportional to $H$, this  proves to be useful in our analysis.
Also note that the gradients are uniformly bounded over $\R^d$ since $\norm{f'(\theta)}_2^2 \leq \boundgrad $. This fact, already predictable from the expression of the stochastic gradients, stands in sharp contrast with the $\ell_2$ loss and illustrates the $\ell_1$ loss's robustness property. 

Finally the following lemma highlights $f$'s local strong convexity around $\thet^*$ which is essential to obtain the $O (1 / n )$ convergence rate.
\begin{lemma}\label{lemma:f''}
Suppose that (\ref{eq:linearmodel}, \bref{as:gf}, \bref{as:gn}, \bref{as:out}) hold. Then 
\[ 
f''(\thet^*) = \sqrt{\frac{2}{\pi}} \frac{1 - \tilde{\outproportion}}{\sigmun}   \ H.
\]
\end{lemma}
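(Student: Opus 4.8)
The plan is to obtain $f''$ by differentiating the closed form of the gradient supplied by \cref{lemma:f'} once more and then specialising at $\thet^*$. Recall that \cref{lemma:f'} gives $f'(\thet) = \alpha(\sigmat)\,H(\thet-\thet^*)$, where the scalar prefactor depends on $\thet$ only through the prediction variance $\sigmat^2 = (\thet-\thet^*)^\top H (\thet-\thet^*)$. The one point that needs care is that $\sigmat = \sqrt{(\thet-\thet^*)^\top H(\thet-\thet^*)}$ is \emph{not} differentiable at $\thet^*$; I would sidestep this by noting that $\alpha(z)$ involves $z$ only through $z^2$, so that, viewed as a function of $\thet$, the prefactor is really $\tilde\alpha\big((\thet-\thet^*)^\top H(\thet-\thet^*)\big)$ for a smooth one-dimensional map $\tilde\alpha$, and this composition is $C^\infty$ in a neighbourhood of $\thet^*$.

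With this reformulation $f'(\thet) = \tilde\alpha(s(\thet))\,H(\thet-\thet^*)$, where $s(\thet) = (\thet-\thet^*)^\top H(\thet-\thet^*)$, I would apply the product rule. The Hessian then splits into two terms: the outer product of the vector $H(\thet-\thet^*)$ with $\nabla_\thet \tilde\alpha(s) = 2\tilde\alpha'(s)\,H(\thet-\thet^*)$, and the term $\tilde\alpha(s)\,H$ arising from differentiating the linear factor. Evaluating at $\thet=\thet^*$, the first term vanishes identically, since it carries the factor $H(\thet-\thet^*)$ which is zero there (indeed $\nabla s$ itself vanishes at $\thet^*$). Hence only the second term survives and $f''(\thet^*) = \tilde\alpha(0)\,H = \alpha(0)\,H$.

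It then remains to evaluate $\alpha$ at $0$. Setting $z=0$ in the formula of \cref{lemma:f'} gives $\alpha(0) = \sqrt{2/\pi}\,\sigmun^{-1}\,\Ee{\outl}{\exp{-\frac{\outl^2}{2\sigmun^2}}}$, and I would finish by identifying this expectation factor with $1-\tilde\outproportion$. Concretely, isolating the atom $\{\outl=0\}$, which has mass $1-\outproportion$ and contributes a term equal to $1$, from the genuine corruptions $\{\outl\neq0\}$, which occur with probability $\outproportion$, turns the prefactor into $1 - \outproportion\big(1-\Eb{\exp{-\frac{\outl^2}{2\sigmun^2}}}\big) = 1-\tilde\outproportion$ by the definition of the effective outlier proportion in Assumption~\bref{as:out}. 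Substituting back yields $f''(\thet^*) = \sqrt{2/\pi}\,(1-\tilde\outproportion)\,\sigmun^{-1}\,H$, as claimed. The computation is elementary; the only genuine subtlety is the apparent non-smoothness at $\thet^*$, which is exactly why passing through the squared prediction error $\sigmat^2$ rather than $\sigmat$ is essential, together with the bookkeeping that converts the raw expectation into the effective proportion $\tilde\outproportion$.
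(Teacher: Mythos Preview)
Your proof is correct and follows essentially the same route as the paper: differentiate the closed form of $f'$ from \cref{lemma:f'} once more and evaluate at $\thet^*$, where the rank-one term carrying $H(\thet-\thet^*)$ vanishes and only $\alpha(0)\,H$ survives. Your extra care in rewriting $\alpha(\sigmat)$ as a smooth function of $\sigmat^2$ to sidestep the non-differentiability of $\sigmat$ at $\thet^*$ is a nice touch the paper leaves implicit, and your bookkeeping identifying $\Ee{\outl}{\exp{-\outl^2/2\sigmun^2}}$ with $1-\tilde\outproportion$ via the atom at $\outl=0$ is exactly what the paper does (tacitly).
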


\Cref{lemma:f''}  shows that $f$ is locally strongly convex around $\thet^*$ with local strong convexity constant $  \sqrt{ {2}{/\pi}} \frac{(1 - \tilde{\outproportion}) \mu}{\sigmun}$. 
We hence see the impact the effective outlier proportion $\tilde{\outproportion}$ has: the closer it is to one and the worse the local conditioning is.
Also note that if there were no additional noise $\eps$, which corresponds to $\sigmun \to 0$, then there is no smoothing of the $\ell_1$ loss anymore and the problem becomes non-smooth.

\section{Convergence guarantee}\label{section:convergence_guarantee}

The nice properties the function $f$ enjoys enables a clean analysis of the SGD recursion with decreasing step sizes. The convergence rate we obtain on the averaged iterate $\btheta_n=\frac{1}{n}\sum_{i=0}^{n-1} \thet_i$ is given in the following theorem.

\begin{theorem}
\label{th:cvgce_rate}
Let (\ref{eq:linearmodel}, \bref{as:gf}, \bref{as:gn}, \bref{as:out}) hold and consider the SGD iterates following \eq{sgd}. Assume $\gamma_n = \frac{\gamma_0}{\sqrt{n}}$. Then for all $n \geq 1$: 
\begin{align*} 
\E{ \norm{\btheta_n - \thet^*}^2_H }
&= O\Big(\frac{\sigmun^2 d }{(1-\tilde{\outproportion})^2n}\Big) 
+\tilde{O}\Big(\frac{\norm{\thet_0-\thet^*}^4 }{\gamma_0^2 (1-\tilde{\outproportion})^2 n}\Big)
+\tilde{O}\Big(\frac{\gamma_0^2 R^4 }{(1-\tilde{\outproportion})^2n}\Big) \\
& \qquad 
+ \tilde O \Big(\frac{\sigma^2}{\gamma_0^2 \mu^2( 1-\tilde{\outproportion})^3n^{3/2}} \Big (  \frac{\norm{\thet_0-\thet^*}^2}{\gamma_0} + \gamma_0 R^2 \Big)      \Big).
\end{align*}
\end{theorem}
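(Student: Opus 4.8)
The plan is to reduce the non-standard $\ell_1$ analysis to a classical Polyak-Ruppert averaging argument by exploiting the structure uncovered in \cref{lemma:f'}. The key observation is that $f'(\thet) = \alpha(\sigmat) H(\thet-\thet^*)$, so the SGD recursion can be written as a perturbation of a linear recursion. Writing the gradient estimate $g_n = -\sgn{y_n - \ps{x_n}{\theta_{n-1}}} x_n$, I would decompose it as $g_n = f'(\thet_{n-1}) + \xi_n$, where $\xi_n$ is a martingale-difference noise term with $\E{\xi_n \mid \mathcal F_{n-1}} = 0$. The recursion then reads $\thet_n - \thet^* = (\thet_{n-1}-\thet^*) - \gamma_n \alpha(\sigma_{\thet_{n-1}}) H(\thet_{n-1}-\thet^*) - \gamma_n \xi_n$. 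The central difficulty is that $\alpha(\sigmat)$ is \emph{not} a constant: it degenerates as $\sigmat \to \infty$ (the loss is only linear far from $\thet^*$) and equals its strong-convexity value $\sqrt{2/\pi}\,(1-\tilde\eta)/\sigmun$ only in the limit $\sigmat \to 0$ by \cref{lemma:f''}. So the recursion behaves like a linear strongly-convex one \emph{only locally}, and the whole argument must first establish that the iterates enter and remain in this local region.

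First I would prove a \textbf{rough convergence phase}. Using the uniform gradient bound $\norm{f'(\thet)}_2^2 \le R$ (noted after \cref{lemma:f'}) together with the fact that $f$ is convex with bounded stochastic gradients $\norm{g_n}_2^2 = \norm{x_n}_2^2$ (whose expectation is $R = \mathrm{trace}(H)$), a standard SGD-for-convex-functions analysis with $\gamma_n = \gamma_0/\sqrt n$ yields that $\E{\sigma_{\thet_n}^2} = \E{\norm{\thet_n - \thet^*}^2_H}$ decreases at the slow $O(1/\sqrt n)$ rate. This controls how fast the iterates approach the region where $\alpha$ is close to its optimal value. The point of this phase is to show that after enough steps $\alpha(\sigma_{\thet_{n-1}})$ is, with high probability or in expectation, bounded below by a constant multiple of $(1-\tilde\eta)/\sigmun$, so that the effective curvature of the recursion is genuinely strongly convex. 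I expect to need a tail/high-probability control here — this is where the recent results of \cite{harvey2019tight} are invoked, since we need more than an in-expectation bound to guarantee the curvature does not collapse on rare large-excursion events.

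Next comes the \textbf{local phase and averaging}. Once the iterates live in the locally-strongly-convex region, I would expand $\alpha(\sigma_{\thet_{n-1}}) = \alpha(0) + (\alpha(\sigma_{\thet_{n-1}}) - \alpha(0))$ so that the recursion becomes a linear strongly-convex recursion with Hessian $H^* := f''(\thet^*) = \sqrt{2/\pi}\,(1-\tilde\eta)\,H/\sigmun$ plus a \emph{second-order} error coming from the curvature mismatch $\alpha(\sigmat)-\alpha(0)$, which is $O(\sigmat^2)$ and hence negligible after the rough phase. For the averaged iterate $\btheta_n$ I would then run the classical Polyak-Ruppert analysis of \cite{bach2013non,polyak1992acceleration}: telescoping the recursion and summing gives $\btheta_n - \thet^* \approx \frac{1}{n}(H^*)^{-1}\big[(\thet_0-\thet^*) \text{-type boundary terms} - \sum \xi_i\big]$, whereupon the martingale noise contributes the optimal leading variance term $O\!\big(\sigmun^2 d / ((1-\tilde\eta)^2 n)\big)$ — the $(H^*)^{-1}$ factor producing the $(1-\tilde\eta)^{-2}$ and $\sigmun^2$ dependence, and $\mathrm{trace}((H^*)^{-1} \Sigma) \sim d$ the dimension dependence, where $\Sigma$ is the gradient-noise covariance.

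Finally I would \textbf{collect the lower-order terms}. The initial-condition contribution $\norm{\thet_0-\thet^*}$ enters through the boundary terms of the telescoping sum and, combined with the rate at which the rough phase shrinks the initial distance, produces the $\tilde O(\norm{\thet_0-\thet^*}^4/(\gamma_0^2(1-\tilde\eta)^2 n))$ term (the fourth power arising because the squared-distance bound is itself squared when feeding the curvature-mismatch estimate back in). The $\tilde O(\gamma_0^2 R^4 /((1-\tilde\eta)^2 n))$ term is the residual from the variance of the noise accumulated during the transient phase, and the final $n^{-3/2}$ term collects the higher-order cross-terms between the curvature mismatch and the boundary/noise contributions, scaled by $\mu^{-2}$ from inverting $H$ on its worst-conditioned direction. \emph{The main obstacle} I anticipate is the first phase: carefully quantifying that the non-constant curvature $\alpha(\sigma_{\thet_{n-1}})$ stays bounded away from its degenerate regime uniformly enough to import the constant-curvature averaging theory, and doing so with the logarithmic (hence the $\tilde O$) high-probability control of the iterates' excursions that \cite{harvey2019tight} supplies. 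Once that is in place, the averaging step is essentially the classical argument applied to the local Hessian $f''(\thet^*)$.
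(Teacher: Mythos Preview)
Your overall architecture (decompose the recursion into linear-plus-noise, apply Polyak--Ruppert) is correct in spirit, but the way you handle the nonlinearity is different from the paper and carries a genuine gap. You propose a two-phase argument: first establish that the iterates enter a neighbourhood of $\thet^*$ where $\alpha(\sigma_{\thet_{n-1}})$ is close to $\alpha(0)$, then import the constant-curvature averaging theory. The localization step is the crux, and as you yourself flag, it is the main obstacle: the iterates are not projected, the function is not globally strongly convex, and high-probability control from \cite{harvey2019tight} does not by itself prevent rare large excursions from contaminating an expectation bound. You do not explain how the bad-event contribution is absorbed, and your Taylor bound $\alpha(\sigma_\thet)-\alpha(0)=O(\sigma_\thet^2)$ is only local, so it gives you nothing on those excursions.

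The paper circumvents this entirely. The missing ingredient is a \emph{global} self-concordance-type inequality (their Lemma on $\alpha$, in the spirit of \cite{bach2014adaptivity}): $|\alpha(z)-\alpha(0)| \le C\,\tfrac{z}{\sigmun}\,\alpha(z)$ for all $z$. Multiplying by $\|H(\thet-\thet^*)\|_{H^{-1}} = \sigma_\thet$ gives $\|f'(\thet)-f''(\thet^*)(\thet-\thet^*)\|_{H^{-1}} \le \tfrac{C}{\sigmun}\langle f'(\thet),\thet-\thet^*\rangle$. The Taylor remainder is thus controlled not by $\sigma_\thet$ (which would require localization) but by the linear approximation $\langle f'(\thet),\thet-\thet^*\rangle$, whose average and second moment go to zero at rate $\tilde O(1/\sqrt n)$ by vanilla non-strongly-convex SGD analysis --- regardless of where the iterates are. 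This is exactly what \cref{lemma:aveitavegrad} expresses, and it removes any need to argue that the iterates stay local. The role of \cite{harvey2019tight}/\cite{shamir2013stochastic} in the paper is also different from what you describe: those techniques are used to obtain first and second moment bounds on the \emph{last-iterate} function value $f(\thet_n)-f(\thet^*)$ (needed to sharpen $\E\|\thet_n-\thet^*\|_H^2$ inside the Polyak decomposition), not to supply high-probability localization of the iterates.
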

For clarity the exact constants are not given here but can be found in the Appendix. Note that the result is given in terms of the Mahalanobis distance associated with the covariance matrix $H$ which corresponds to the classical prediction error. %
The overall bound has a dependency in the number of iterations of $O(1 / n)$, this is optimal for stochastic approximation even with strong-convexity~\cite{NemYud83}. We also point out that by a purely naive analysis that does not exploit the specificities of our problem 
we  could  easily obtain a $O(1 / \sqrt{n})$ rate, which is the common rate for non-strongly-convex functions.

Notice that the convergence rate is not influenced by the magnitude of the outliers but only by their \textit{effective} proportion $\tilde{\outproportion} \in [0, \outproportion)$ which is, as we could anticipate, more relevant than $\outproportion$. This effective outlier proportion can be considerably smaller than $\outproportion$ if a portion of the outliers behave correctly. Therefore
the algorithm \textit{adapts} to the difficulty of the adversary. We also point out that we recover the same dependency in the proportion of outliers as in \cite{tsakonas2014convergence,bhatia2017consistent,suggala2019adaptive} but with $\tilde{\outproportion}$ in our case, which is strictly better. The question of the optimality of the $1 / (1 - \tilde{\outproportion})^2$ constant is unknown and is an interesting open problem, a trivial lower bound being $1 / (1 - \tilde{\outproportion})$.
Furthermore, in the finite horizon framework where we have $N$ samples, the breakdown point we obtain is $\tilde{\eta} =  1 - \Omega (\frac{\ln^{3/2} N}{\sqrt{N}})$. This is better than what is obtain in \cite{suggala2019adaptive} ( $1 - \Omega (\frac{1}{\ln N})$ ) and the same (to a $\ln$ factor) as the asymptotic result from \cite{tsakonas2014convergence}.

We now take a closer look at each term. The first term is the dominant variance term and is of the form $ \frac{\sigmun^2 d}{n}$. It is statistically optimal in terms of $d$ and $n$ since it is also optimal in the simpler framework where there are no outliers~\cite{Tsy09}. The second term is the dominant bias term which only depends on the distance between the initial point $\thet_0$ and the solution $\thet^*$. 
Notice that it is proportional to $\Vert \thet_0 - \thet^* \Vert^4 $ as in \cite{bach2014adaptivity},
 however we believe the dependency in $\Vert \thet_0 - \thet^* \Vert$ could be improved to $O(\Vert \thet_0 - \thet^* \Vert^2)$ by further exploiting the local strong-convexity property.
The third term  is a by-product of our analysis and comes from the bound on the norm of the stochastic gradients. We conjecture that it could be possible to get rid of it.
We highlight that the three dominant $O(1/ n )$ 
terms are independent of the data conditioning constant $\mu$. 
Finally, the two last terms are higher order terms that depend on  $\mu$, they are correcting terms as in \cite{bach2011non} due to the fact that our function is not quadratic. 
Note that all three dominant $O(1 / n)$ terms are independent of $\mu$, this is obtained thanks to $f$'s structure and is due to the fact that $f'$ is proportional to $H$ around $\thet^*$, as in the least-squares framework.
We clearly see the benefits of this result in the experiments \cref{section:exp}: unlike the algorithms from \cite{bhatia2017consistent, suggala2019adaptive} which solve successive least-squares problems and are therefore very sensitive to $H$'s conditioning, our algorithm is way less impacted by the ill conditioning.

We underline the fact that the algorithm is parameter free: neither the knowledge of $\sigmun$ nor of the outlier proportion $\eta$ are required.
Also, note that there is no restriction on the value of $\gamma_0$ but in practice setting it to $O(1 / \boundgrad )$ leads to the best results.
We believe that instead of considering the $\ell_1$ loss we could have considered the Huber loss and followed the same technical analysis to obtain a $O(1 / n)$ rate. However as shown in the experiments \cref{section:exp}, considering the Huber loss \textit{does not} improve the rate and requires an extra parameter that must be tuned.

\section{Sketch of proof}\label{section:proof}

We provide an overview of the arguments that constitute the proof of Theorem~\ref{th:cvgce_rate}, the full details can be found in the Appendix. We bring out three key steps. First, using the structure of our problem we relate the behaviour of the averaged iterate $\btheta_n$ to the average of the gradient $\bar f'(\thet_n) = \frac{1}{n}  \sum_{i=0}^{n-1} f'(\thet_i)$ (see Lemma~\ref{lemma:aveitavegrad}). Then we show that $\bar f'(\thet_n)$ converges to $0$ at the rate $O(1/n)$ (see Lemma~\ref{lemma:pol}). Finally we control the additional terms using generic results that hold for non-strongly-convex and non-smooth functions (see Lemma~\ref{lemma:remind}). Technical difficulties arise from (a) the fact that we consider a decreasing step-size sequence, which is necessary in order to have a fully online algorithm and (b) the fact that we want to obtain a leading order term $O(1/n)$ independent of the conditioning constant $\mu$.

First we use $f$'s specific structure to bound the distance between  $\btheta_n$ and the solution $\thet^*$.%
\begin{lemma}\label{lemma:aveitavegrad}
Let (\ref{eq:linearmodel},  \bref{as:gf}, \bref{as:gn}, \bref{as:out}) hold. Then for any sequences $(\thet_i)^{n-1}_{i=0}\in \R^{dn}$ their average $\btheta_n=\frac{1}{n}\sum_{i=0}^{n-1} \thet_i$ satisfies:
\begin{align*} 
\textstyle
\E{ \norm{\btheta_n - \thet^*}^2_H  }
\leq \frac{2 \sigmun^2}{( 1 - \tilde{\outproportion})^2 }   \E{ \hnorm { \frac{1}{n} \sum_{i=0}^{n-1} f'(\thet_i) }^2} + \frac{800}{( 1 - \tilde{\outproportion})^2} \left ( \ln \frac{2}{1 - \outproportion} \right )^2  \E{ \left (  \frac{1}{n}\sum_{k=0}^{n-1 } \ps{ \grad(\thet_{i})}{ \thet_{i}-\thet^*} \right )^2 }.
\end{align*}
\end{lemma}
This result follows from the inequality $\|f'(\thet) -f''(\thet^*)(\thet-\thet^*) \|_{H^{-1}} \leq  \frac{20}{\sigma_1}  (  \ln \frac{2}{1 - \outproportion}) \ps{f'(\thet)}{\thet-\thet^*}$ (see proof in Appendix) which  upper-bounds  the remainder of the first-order Taylor expansion of
the gradient by $f$'s linear approximation $\ps{f'(\thet)}{\thet-\thet^*}$. This inequality is crucial in our analysis since in the non-strongly-convex framework, the averaged linear approximations always converge to zero while the iterates $\thet_i$ \textit{a priori} do not converge to $\theta^*$.
This inequality is highly inspired by the self-concordance property of the logistic loss used in~\cite{bach2014adaptivity} but is simpler in our setting thanks to an ad hoc analysis. Note that the result from \cref{lemma:aveitavegrad} is valid for any sequence $(\thet_i)_{i \geq 0}$ and not only the one issued from the SGD recursion. The two quantities that we therefore need to control are clear. The first one is central as it leads to the final dominant variance term, the second one is technical and less important, it is left for the end of the section. We first show that  $\norm{\bar f'(\thet_n)}^2$, the square norm of the average of the gradients, converges at rate $O(1/ n )$.
\begin{lemma}
\label{lemma:pol}
Let (\ref{eq:linearmodel}, \bref{as:gf}, \bref{as:gn}, \bref{as:out})  and consider the SGD iterates following \eq{sgd}. Assume $\gamma_n = \frac{\gamma_0}{\sqrt{n}}$. Then for all $n \geq 1$ : 
\begin{align*}\label[ineq]{eq:polyak}
\E{ \norm{\frac{1}{n}  \sum_{i=0}^{n-1} f'(\thet_i)}^{2}_{H^{-1}}}  
\leq \frac{16d}{n} & + \frac{4}{n \gamma_0^2} \E{\norm{\thet_n-\thet^*}_{H^{-1}}^2}  + \frac{4}{n^2 \gamma_0^2}  \norm{\thet_0-\thet^*}_{H^{-1}}^2 \\ 
& + \frac{4}{n^2} \Big ( \sum_{i=1}^{n-1} \E{ \hnorm{\thet_i-\thet^*}^2 }^{1/2} \left ( \frac{1}{\gamma_{i+1}} - \frac{1}{\gamma_{i}} \right ) \Big )^2.  
\end{align*}
\end{lemma}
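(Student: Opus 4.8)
The plan is to use the SGD recursion to turn the average of the gradients into a summation-by-parts (Abel) identity plus a martingale term, and then bound the resulting pieces one by one. First I would read off from \eq{sgd} that the stochastic gradient used at step $i+1$ is $g_{i+1} := -\sgn{y_{i+1}-\ps{x_{i+1}}{\thet_i}}\,x_{i+1}$, so the recursion reads $\thet_{i+1}=\thet_i-\gamma_{i+1}g_{i+1}$, i.e. $g_{i+1}=-\tfrac{1}{\gamma_{i+1}}(\thet_{i+1}-\thet_i)$. Since $f(\thet)=\E{|y-\ps{x}{\thet}|}$ has gradient $f'(\thet)=\E{-\sgn{y-\ps{x}{\thet}}x}$, conditioning on the natural filtration $\mathcal F_i$ generated by the first $i$ samples gives $\E{g_{i+1}\mid\mathcal F_i}=f'(\thet_i)$. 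Thus $\xi_{i+1}:=g_{i+1}-f'(\thet_i)$ is a martingale difference sequence, and the decomposition $f'(\thet_i)=g_{i+1}-\xi_{i+1}$ separates the quantity of interest into a telescoping part and a noise part.

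Second, I would apply summation by parts to $\sum_{i=0}^{n-1}g_{i+1}=-\sum_{i=0}^{n-1}\tfrac{1}{\gamma_{i+1}}(\thet_{i+1}-\thet_i)$. Collecting the coefficient of each $\thet_i$ yields the boundary terms at $\thet_0$ and $\thet_n$ together with an interior sum weighted by the step-size differences. The crucial observation is that these coefficients sum to zero — the interior telescope equals $\tfrac{1}{\gamma_n}-\tfrac{1}{\gamma_1}$, which cancels the two boundary coefficients — so I may subtract $\thet^*$ from every $\thet_i$ for free and recenter the identity at the solution:
\[
\sum_{i=0}^{n-1}g_{i+1}
=\frac{1}{\gamma_1}(\thet_0-\thet^*)-\frac{1}{\gamma_n}(\thet_n-\thet^*)+\sum_{i=1}^{n-1}\Big(\frac{1}{\gamma_{i+1}}-\frac{1}{\gamma_i}\Big)(\thet_i-\thet^*).
\]
This is the heart of the argument: in the non-strongly-convex regime the iterates $\thet_i$ need not converge to $\thet^*$, yet this boundary/telescoping structure still forces the averaged gradient to be small.

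Third, I would combine the three recentered terms with the noise term $-\tfrac1n\sum_{i=0}^{n-1}\xi_{i+1}$ into four summands and apply $\hnorm{v_1+v_2+v_3+v_4}^2\le 4\sum_{j}\hnorm{v_j}^2$, which produces the overall constant $4$. The two boundary terms are immediate after substituting $\gamma_1=\gamma_0$ and $\gamma_n^{-2}=n/\gamma_0^2$, yielding the $\tfrac{4}{n^2\gamma_0^2}\norm{\thet_0-\thet^*}^2_{H^{-1}}$ and $\tfrac{4}{n\gamma_0^2}\E{\norm{\thet_n-\thet^*}^2_{H^{-1}}}$ terms. For the interior sum I would use that $\tfrac{1}{\gamma_{i+1}}-\tfrac{1}{\gamma_i}\ge 0$ (the step sizes decrease) to apply the triangle inequality and then Minkowski's inequality in $L^2(\Omega)$, pulling the expectation inside the finite sum to obtain the last term $\tfrac{4}{n^2}\big(\sum_{i=1}^{n-1}\E{\hnorm{\thet_i-\thet^*}^2}^{1/2}(\tfrac{1}{\gamma_{i+1}}-\tfrac{1}{\gamma_i})\big)^2$. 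For the martingale term, orthogonality of increments gives $\E{\hnorm{\sum_i\xi_{i+1}}^2}=\sum_i\E{\hnorm{\xi_{i+1}}^2}$, and here the Gaussian structure pays off cleanly: since the sign factor lies in $\{\pm 1\}$ we have $\hnorm{g_{i+1}}^2=\hnorm{x_{i+1}}^2$, whose expectation is $\operatorname{trace}(H^{-1}H)=d$; by (conditional) Jensen $\E{\hnorm{f'(\thet_i)}^2}\le d$ as well, so $\E{\hnorm{\xi_{i+1}}^2}\le 4d$. Dividing by $n^2$ and multiplying by $4$ gives the $\tfrac{16d}{n}$ variance term.

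I expect the main obstacle to be the recentering step: one must verify carefully that the Abel coefficients indeed sum to zero so that subtracting $\thet^*$ is legitimate, and one must track the positivity of the step-size differences under $\gamma_n=\gamma_0/\sqrt n$ so that Minkowski may be applied without sign issues. The martingale variance computation is then routine once one notes that the $H^{-1}$ norm exactly cancels the feature covariance $H$ to leave $d$; everything else is bookkeeping of constants, which I would relegate to the appendix.
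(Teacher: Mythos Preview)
Your proposal is correct and follows essentially the same route as the paper: the Abel/summation-by-parts identity after recentering at $\thet^*$, the four-term split with constant $4$, Minkowski on the interior sum, and martingale orthogonality for the noise. The only cosmetic difference is the bound on $\E{\hnorm{\xi_{i+1}}^2}$: the paper uses the explicit structure $f'(\thet)=\alpha(\sigmat)H(\thet-\thet^*)$ to get $\hnorm{f'(\thet)}^2=\alpha(\sigmat)^2\sigmat^2\le 2/\pi\le 1$ and hence $\E{\hnorm{\xi_{i+1}}^2}\le 2(d+1)$, whereas you invoke conditional Jensen to get $\E{\hnorm{f'(\thet_i)}^2}\le d$ and hence $\E{\hnorm{\xi_{i+1}}^2}\le 4d$; both yield the same $16d/n$ after the factor $4$.
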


The proof technique of the inequality is similar to those of~\cite{polyak1992acceleration,bach2011non,flammarion2017stochastic} and relies on  the classical expansion $\sum_{i=1}^{n} f'_{i}(\thet_{i-1}) = \sum_{i=1}^{n} \tfrac{\thet_{i-1}-\thet_{i}}{\gamma_i}$. We stress out the fact that from here, one could simply choose to upper bound $\EE [\hnorm{\thet_i-\thet^*}^2]$ using classical non-strongly convex bounds which would lead to $\EE[ \hnorm{\thet_i-\thet^*}^2] \leq \hnorm{\thet_0 - \thet^*}\!+\gamma_0^2 d \ln e i$ (see Appendix for more details). However re-injecting such general bounds into \cref{lemma:pol} would lead to a final bound on $\|\btheta_n - \thet^*\|_H^2 $ with a leading bias term $O \left ( 1 / \mu n \right )$. In order to get rid of this dependency in $\mu$ we need to exploit $f$'s structure and obtain a tighter bound on $ \EE[ \hnorm{\thet_n - \thet^*}^2] $.  
In the following lemma we provide a sharper bound on $ \EE[ \norm{\thet_n - \thet^*}_H^2] $ as well as give a bound on the second residual term from \cref{lemma:aveitavegrad}.
\begin{lemma}
\label{lemma:remind}
Let (\ref{eq:linearmodel}, \bref{as:gf}, \bref{as:gn}, \bref{as:out}) and consider the SGD iterates following \eq{sgd}. Assume $\gamma_n = \frac{\gamma_0}{\sqrt{n}}$. Then for all $n \geq 1$: 
\[
\textstyle \E{ \left (  \frac{1}{n}\sum_{k=0}^{n-1 } \ps{ \grad(\thet_{i})}{ \thet_{i}-\thet^*} \right )^2 } \leq \MACROfbarsquare{n},
\]
and 
\[
\textstyle
\E{ \norm{\thet_n - \thet^*}^2_{H}} = 
      \frac{3 \sigmun \ln(en)}{(1 - \tilde{\eta}) \sqrt{n}} \left [ \frac{3 \Vert \thet_0 - \thet^* \Vert^2}{\gamma_0} + 4  \gamma_0 R^2 \ln(en)  \right ] + O \left( \frac{1}{n} \right ).
\]
\end{lemma}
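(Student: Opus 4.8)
The plan is to treat the two statements separately, both departing from the elementary one-step expansion of the squared distance. Writing $g_i = -\sgn{y_i - \ps{x_i}{\thet_{i-1}}} x_i$ for the stochastic gradient, so that $\thet_i = \thet_{i-1} - \gamma_i g_i$ and $\E{g_i \mid \mathcal F_{i-1}} = f'(\thet_{i-1})$, the identity $\norm{\thet_i - \thet^*}^2 = \norm{\thet_{i-1}-\thet^*}^2 - 2\gamma_i \ps{g_i}{\thet_{i-1}-\thet^*} + \gamma_i^2 \norm{g_i}^2$ is the common starting point. Its conditional expectation is the usual descent inequality, phrased in terms of $\delta_i := \ps{f'(\thet_i)}{\thet_i-\thet^*} \ge 0$, while its pathwise version, summed and telescoped, yields $\sum_{i=1}^n \gamma_i \ps{g_i}{\thet_{i-1}-\thet^*} = \tfrac12 \big( \norm{\thet_0-\thet^*}^2 - \norm{\thet_n-\thet^*}^2 + \sum_{i=1}^n \gamma_i^2 \norm{g_i}^2 \big)$. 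Throughout I would use $\E{\norm{g_i}^2} = \text{trace}(H) = R$ and, in the Mahalanobis geometry, $\E{\norm{g_i}^2_H} = \text{trace}(H^2) \le R^2$.

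For the first bound I would exploit that the step-sizes decrease: since $\gamma_{i+1} = \gamma_0/\sqrt{i+1}$ and $\delta_i \ge 0$, one has the pointwise domination $\tfrac1n \sum_{i=0}^{n-1} \delta_i \le \tfrac1{\gamma_0 \sqrt n} \sum_{i=0}^{n-1} \gamma_{i+1} \delta_i$. Substituting the telescoping identity for $\sum \gamma_{i+1}\delta_i$ (after splitting $g_i$ into its mean $f'(\thet_{i-1})$ and the martingale increment $\xi_i = g_i - f'(\thet_{i-1})$), squaring, and taking expectations, the square of the average is controlled by $\tfrac1{\gamma_0^2 n}$ times the expected square of $\norm{\thet_0-\thet^*}^2 + \sum \gamma_i^2 \norm{g_i}^2 - 2\sum \gamma_i \ps{\xi_i}{\thet_{i-1}-\thet^*}$. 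The martingale term contributes its predictable quadratic variation $\sum \gamma_i^2 \E{\norm{\xi_i}^2 \norm{\thet_{i-1}-\thet^*}^2}$, which I would bound using the crude a priori estimate $\E{\norm{\thet_i-\thet^*}^2} = O(\norm{\thet_0-\thet^*}^2 + \gamma_0^2 d \ln(ei))$ available for non-strongly-convex SGD; the $\tfrac1{\gamma_0^2 n}$ prefactor then turns each piece into an $O(1/n)$ term, up to the logarithmic factors coming from $\sum \gamma_i^2 \sim \gamma_0^2 \ln(en)$. This is where the $\norm{\thet_0-\thet^*}^4/\gamma_0^2$ bias and the gradient-norm by-product originate.

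For the second bound I would deliberately avoid a direct recursion on $\norm{\thet_n-\thet^*}^2_H$, since the cross term $\alpha(\sigmat)\norm{\thet-\thet^*}^2_{H^2}$ can only be lower bounded by $\mu\,\alpha(\sigmat)\norm{\thet-\thet^*}^2_H$ and would reintroduce the conditioning $\mu$ into the leading order. Instead I would pass through the objective values: by the relation between prediction error and suboptimality (\cref{lemma:upbound_sigma}), $\sigmat^2 \lesssim \tfrac{\sigmun}{1-\tilde{\eta}}(f(\thet)-f(\thet^*)) + (f(\thet)-f(\thet^*))^2$, whose leading coefficient $\tfrac{\sigmun}{1-\tilde{\eta}}$ is exactly the inverse local strong convexity of \cref{lemma:f''}. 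Taking expectations reduces the task to a last-iterate guarantee on $\E{f(\thet_n)-f(\thet^*)}$ for the convex objective $f$. Here I would invoke the tight non-strongly-convex last-iterate analysis of \cite{harvey2019tight}, specialised to $\gamma_n = \gamma_0/\sqrt n$, which produces the $\tfrac{\ln(en)}{\sqrt n}\big( \tfrac{\norm{\thet_0-\thet^*}^2}{\gamma_0} + \gamma_0 (\text{gradient second moment}) \ln(en) \big)$ shape; the $R^2$ factor enters through the $H$-weighted second moment $\text{trace}(H^2) \le R^2$, while the residual $(f(\thet)-f(\thet^*))^2$ term, being quadratic in a quantity that is already $O(\ln(en)/\sqrt n)$ in expectation, collapses into the stated $O(1/n)$.

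The main obstacle is the last-iterate control of $\E{f(\thet_n)-f(\thet^*)}$. The averaged gap is standard and immediate from telescoping, but the last iterate requires the delicate suffix-summation argument of \cite{harvey2019tight}, and a genuine complication in our setting is that the stochastic gradients $g_i = \pm x_i$ are unbounded Gaussian vectors: the argument must be carried out with second-moment control rather than the almost-sure Lipschitz bounds usually assumed, which is what forces the extra logarithmic factor and the appearance of $\text{trace}(H^2)$ in place of a uniform gradient bound. Carrying the explicit constants through both the last-iterate estimate and the conversion from $f(\thet)-f(\thet^*)$ to $\sigmat^2$ is the remaining bookkeeping.
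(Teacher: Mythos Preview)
Your route for the first inequality is correct and actually cleaner than the paper's. The paper rearranges the one-step identity with the weight $\gamma_i^{-1}$, which after summation produces an Abel term $\sum_k \norm{\thet_k-\thet^*}^2(\gamma_{k+1}^{-1}-\gamma_k^{-1})$; it then sets up an auxiliary sequence $A_n$ dominating $2\sum_k\delta_k$ and proves a recursion on $\E{A_n^2}/(n+1)$, feeding in the crude iterate bounds $C_n,D_n$. Your domination $\tfrac{1}{n}\sum_i\delta_i \le \tfrac{1}{\gamma_0\sqrt n}\sum_i\gamma_{i+1}\delta_i$ (valid because $\gamma_{i+1}\ge\gamma_n$) bypasses the Abel step entirely: the weighted sum telescopes pathwise, the dropped $-\norm{\thet_n-\thet^*}^2$ keeps the right-hand side nonnegative so squaring is legitimate, and the three pieces (initial distance, $\sum\gamma_i^2\norm{x_i}^2$, martingale) are bounded exactly as you outline. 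Both arguments need the same a priori bound $\E{\norm{\thet_i-\thet^*}^2}\le \norm{\thet_0-\thet^*}^2+\gamma_0^2 R^2\ln(ei)$ for the martingale quadratic variation; yours just reaches the answer with less machinery and, as a bonus, without the extra logarithmic factor that the paper's recursion picks up on the $\norm{\thet_0-\thet^*}^4$ term.

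For the second inequality your high-level plan matches the paper's: pass through \cref{lemma:upbound_sigma} and control $f(\thet_n)-f(\thet^*)$ by a last-iterate argument in the spirit of \cite{shamir2013stochastic,harvey2019tight}. The gap is in the sentence ``the residual $(f(\thet)-f(\thet^*))^2$ term, being quadratic in a quantity that is already $O(\ln(en)/\sqrt n)$ in expectation, collapses into the stated $O(1/n)$.'' That inference is false: $\E{X}=O(n^{-1/2})$ does not give $\E{X^2}=O(n^{-1})$. You genuinely need a second-moment last-iterate bound $\E{(f(\thet_n)-f(\thet^*))^2}=\tilde O(1/n)$, and this is not a throwaway --- in the paper it occupies the longer half of the relevant lemma, requiring one to square the suffix-averaging identity, control the cross terms, and in particular bound $\E{\big(\sum_t \tfrac{1}{t(t-1)}\sum_k M_k^{n-t}\big)^2}$ via a martingale expansion together with the already-established first-moment last-iterate rate fed back into the argument. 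So your plan is right but incomplete: add ``prove $\E{(f(\thet_n)-f(\thet^*))^2}=\tilde O(1/n)$ by squaring the Shamir--Zhang decomposition'' to your list of obstacles. (A small side remark: the $R^2$ in the displayed bound is the paper's notation for $\text{trace}(H)=\E{\norm{x}^2}$, the Euclidean gradient second moment; the last-iterate analysis is run entirely in the Euclidean norm, and $\text{trace}(H^2)$ never enters.)
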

The proof of the first  inequality follows~\cite{bach2014adaptivity}. It uses classical moment bounds in the non-strongly-convex and non-smooth case. The proof of the second  inequality is more technical. It relies on the fact that $\hnorm{\thet_n - \thet^*}^2$ can be upper bounded by  $O([f(\thet_n)-f(\thet^*)] + [f(\thet_n)-f(\thet^*)]^2)$ (see Lemma~\ref{lemma:upbound_sigma} in the Appendix) which is due to $f$'s particular structure. We then upper bound $[f(\thet_n)-f(\thet^*)]$'s first and second moment. To do so we follow the recent proof techniques on the convergence of the final iterate from~\cite{shamir2013stochastic,harvey2019tight}.
In our framework there are a few additional technical difficulties coming from the fact that: (a) a decreasing step-size sequence is considered, (b) our iterates are not restricted to a predefined bounded set since no projection is used and (c) our gradients are not almost surely bounded but have bounded second moments. 
We point out that $f$'s local strong convexity around $\thet^*$ is not exploited to prove \cref{lemma:remind}, hence we could expect a better dependency in $n$ if this local property was appropriately used, we leave this as future work.

Combining \cref{lemma:remind} with \cref{lemma:pol} and injecting into \cref{lemma:aveitavegrad} concludes the proof.

\section{Experiments}\label{section:exp}

In this section we illustrate our theoretical results. We consider the experimental framework of~\cite{suggala2019adaptive} using synthetic datasets. The inputs $x_i$ are i.i.d.~from $\mathcal{N}(0, H)$ where $H$ is either the identity matrix (conditioning $\kappa = 1$) or a p.s.d matrix with eigenvalues $(1 / k)_{1\leq k \leq d}$ and random eigenvectors ($\kappa = 1 / d$). The outputs $y_i$ are generated following $y_i = \ps{x_i}{\theta^*} + \varepsilon_i+\outl_i$
where $(\varepsilon_i)_{1 \leq i \leq n}$ are i.i.d.~from $\mathcal{N}(0, \sigma^2)$ and the $b_i$'s are defined according to the following contamination model: for $\eta > 0.5$,  a set of $n/4$ corruptions are set to $1000$, another $n/4$  are set to $\sqrt{1000}$ and the rest (to reach proportion $\eta > 0.5$) are sampled from $\mathcal U([1,10])$. All results are averaged over five replications.

\vspace*{3.12pt}
 \begin{figure}[h]
\centering
\begin{minipage}[c]{.35\linewidth}
\includegraphics[width=\linewidth]{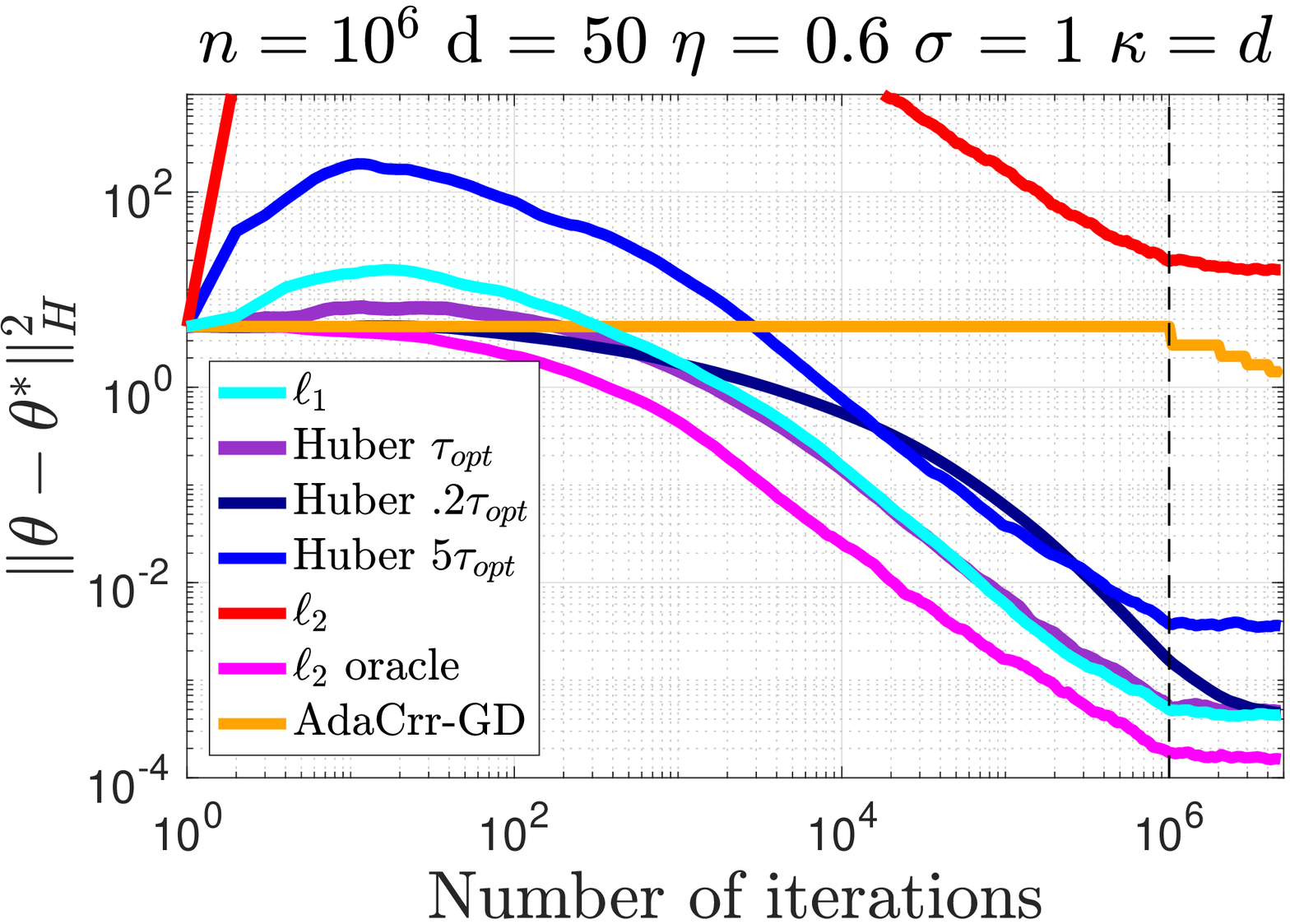}
   \end{minipage}
    \hspace*{-19pt}
   \begin{minipage}[c]{.35\linewidth}
\includegraphics[width=\linewidth]{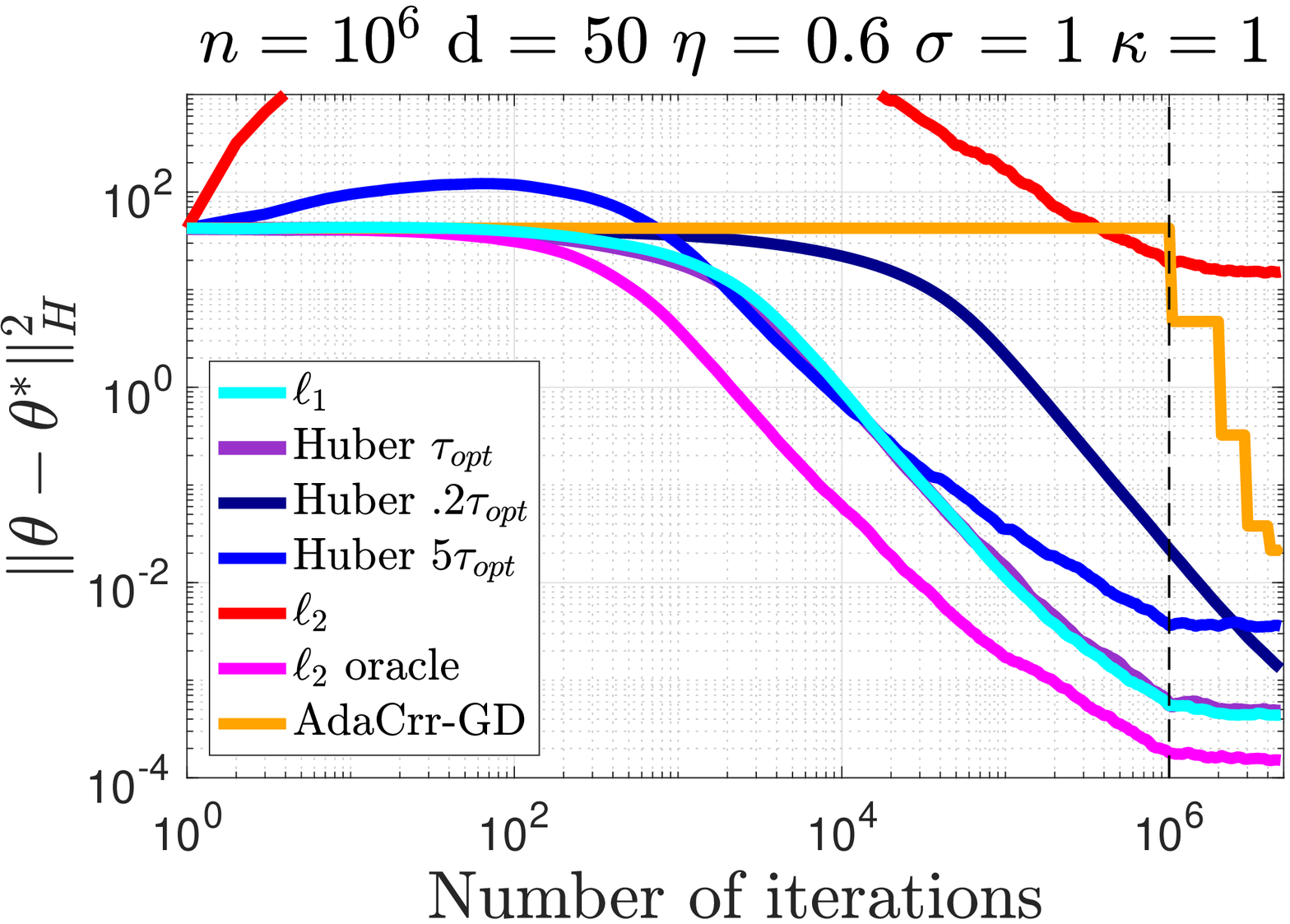}
   \end{minipage}
    \hspace*{-19pt}
   \begin{minipage}[c]{.35\linewidth}
\includegraphics[width=\linewidth]{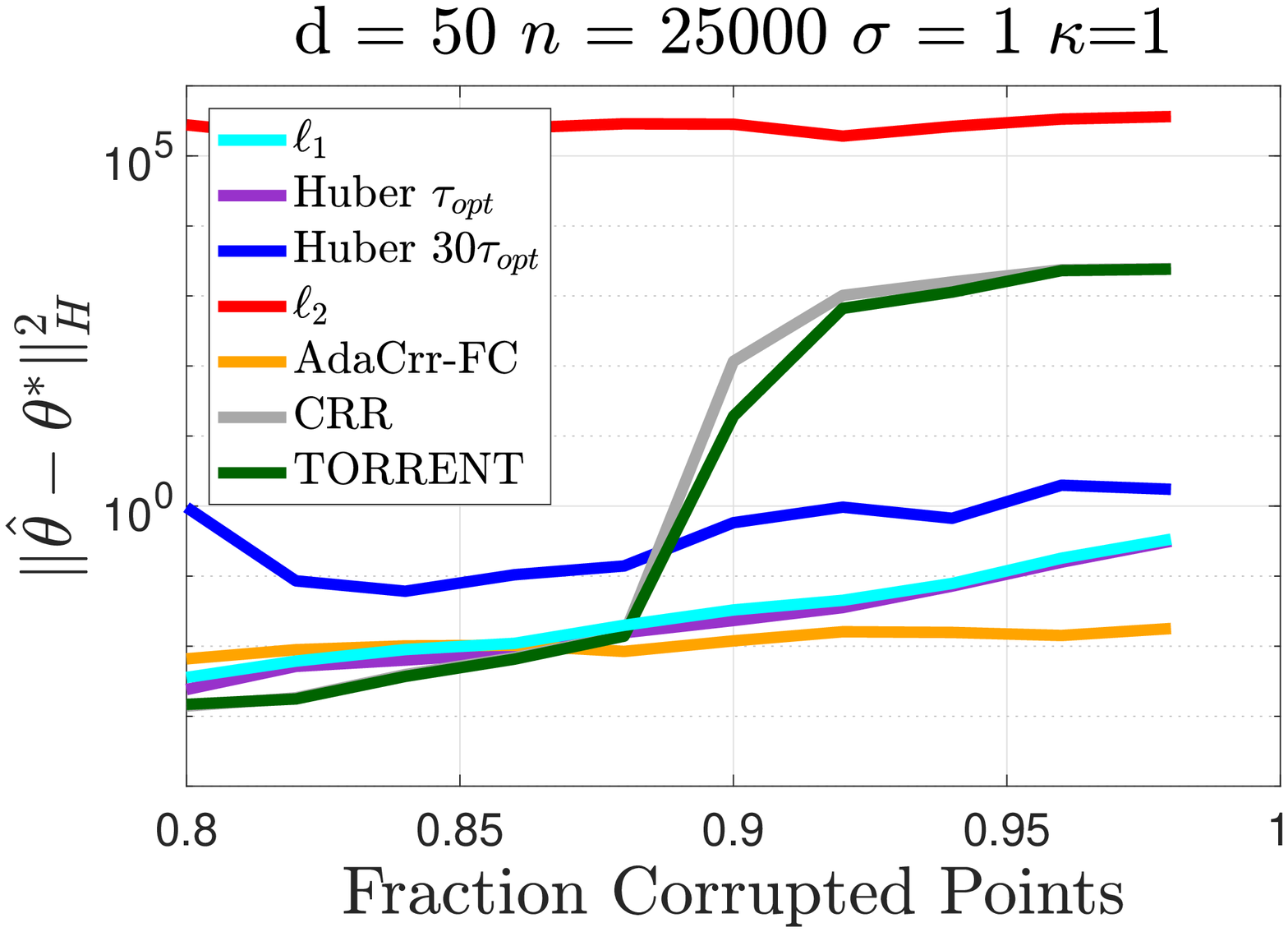}
   \end{minipage}
  \caption{Online robust regression on synthetic data. Left and middle: Convergence rates for a fixed $\eta$ and for two different conditioning of $H$. The dashed line marks the first pass over the data. Right: Estimation performance when varying the portion of corruption $\eta$.}
     \label{fig:synthetic}
\end{figure}

\textbf{Online robust regression.}
We plot the convergence rate of averaged SGD on different loss functions: 
the $\ell_1$ loss, the $\ell_2$ loss and the Huber loss for which we consider various parameters. 
We also consider the AdaCRR-GD algorithm from~\cite{suggala2019adaptive}. These curves are compared to an oracle algorithm which corresponds to least-squares regression using constant step-size averaged SGD \cite{bach2013non} and where all the corrupted points have been discarded (hence a rate of $O(1 / (1 -\outproportion) n)$ ). 
Since AdaCRR-GD is an offline algorithm that needs all the data to perform a single gradient step we let all algorithms  perform $5$ passes over the dataset (passes without replacement). In the SGD setting this corresponds to a total of $5  n$ iterations. On the plots we represent by a vertical dashed line the first effective pass over the dataset.
\Cref{fig:synthetic} in the left and middle plots  are shown the experimental results for two different conditioning of $H$: $\kappa = 1$ and $\kappa = 1 / d$. Notice that independently of the conditioning our algorithm converges at rate $O(1/n)$ and almost matches the performance obtained by the oracle algorithm. Using the Huber loss leads to mixed results: if the parameter is well tuned to $\tau_{opt}$ then the performance is similar to that of the $\ell_1$ loss, but if the parameter is set too large ($5\tau_{opt}$) then the convergence is slow and ends at a sup-optimal point, if it set too small ($.2\tau_{opt}$) the convergence is slow. Indeed the Huber loss with parameter $\tau$ is equivalent to  $\tau \|\cdot\|_1$  when the parameter $\tau$ goes to $0$. Therefore doing SGD on the Huber loss for $\tau \to 0$ is equivalent to performing SGD on the $\ell_1$ loss with the smaller step-size sequence $(\tau \gamma_n)_{n \geq0}$.  On the other hand, SGD on the $\ell_2$ loss is as predicted not competitive at all. AdaCRR-GD needs to wait a full pass before performing one single step and is in all cases much slower than SGD. Moreover notice that AdaCRR-GD is very sensitive to the conditioning of the covariance matrix: the convergence is much slower for a badly conditioned problem. Indeed in this case the convergence of the gradient descent subroutine used in the algorithm becomes sublinear and it significantly degrades the overall performance. On the other hand the performance of SGD on the $\ell_1$ loss is not affected by the conditioning.

\textbf{Breakdown point and recovery guarantees.} 
In this setting, the number of samples $n$ is fixed and we modify the outlier proportion $\outproportion$.
We compare our algorithm to different baselines: $\ell_2$ regression,  Huber regression with a well tuned parameter $\tau_{opt}$ and  with a larger parameter $30\tau_{opt}$,   Torrent~\cite{bhatia2015robust},  CRR~\cite{bhatia2015robust}, and AdaCRR~\cite{suggala2019adaptive}. The details on their implementation are provided in the Appendix. The results are shown \Cref{fig:synthetic}, right plot.
Notice that averaged SGD on the $\ell_1$ obtains comparable results to Huber regression with parameter $\tau_{opt}$ and to AdaCRR, this without having any hyperparameter to tune. Note also that if the parameter of the Huber loss is set too high then the performance is degraded.
The other methods are as expected not competitive. 

\section{Conclusion}
In this paper, we studied the response robust regression problem with an oblivious adversary. We showed that by simply performing SGD with Polyak-Ruppert averaging on the $\ell_1$ loss $\EE [|y - \ps{x}{\thet}]$ we successively recover the parameter $\thet^*$ with an optimal $O(1 / n)$ rate. The experimental results on synthetic data shows the superiority of our algorithm and its clear advantage for high-scale and online settings.
\newline
There are several interesting future directions to our work. One would be to consider other corruption models in the online setting. %
It would also be interesting to see if we can combine our approach with~\cite{agarwal2012stochastic,gaillard2017sparse} in order to get results in the case where $\thet^*$ is sparse.

\section{Broader Impact}
As discussed in the introduction, the algorithm we propose can be useful in many practical applications such as :
(a) detection of irrelevant measurements and systematic labelling errors~\cite{laska2009exact},
(b)  detection of system attacks such as frauds by click bots~\cite{haddadi2010fighting}
or malware recommendation rating-frauds~\cite{zhu2015discovery}, and (c) online regression with heavy-tailed noise \cite{suggala2019adaptive}.

\small
\bibliographystyle{abbrv}
\bibliography{bio}
\normalsize
\clearpage

\appendix

\section{Higher-order moment bounds}

In this section we prove classical moment bounds on the SGD iterates following \cref{eq:sgd} with the decreasing step-size sequence $\gamma_n = \gamma_0 / \sqrt{n}$. The following results are highly inspired from \cite{bach2011non}, \cite{bach2014adaptivity} and \cite{shamir2013stochastic} with the slight technical differences that: the iterates are not bounded since no projection is used, the stochastic gradients are not almost surely bounded and a decreasing step-size is considered.

\begin{itemize}
    \item \Cref{app_lemma:moment_theta} we give second and fourth moment bounds on $\norm{\theta_n - \theta^*}$.
    \item \Cref{app_lemma:moment_f_bar} we give first and second moment bounds on $f(\bar{\theta}_n) - f(\theta^*)$.
    \item \Cref{app_lemma:shamir} we give first and second moment bounds on $f(\theta_n) - f(\theta^*)$.
\end{itemize}

We start by providing second and fourth moment bounds on $\norm{\theta_n - \theta^*}$ in the following lemma.
\begin{lemma}\label{app_lemma:moment_theta}
Let (\ref{eq:linearmodel}, \bref{as:gf}, \bref{as:gn}, \bref{as:out}) hold and consider the SGD iterates following \eq{sgd}. Assume $\gamma_n = \frac{\gamma_0}{\sqrt{n}}$. Then:
\[
\E{\norm{\thet_n-\thet^*}^2}\leq \MACROCn{n} := C_n,
\]
\[
\E{\norm{\thet_n-\thet^*}^4}\leq \MACRODn{n} := D_n.
\]
\end{lemma}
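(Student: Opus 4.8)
The plan is to run the standard one-step energy expansion for SGD, using only the two structural facts already at our disposal. Write the stochastic gradient as $g_n := -\sgn{y_n - \ps{x_n}{\thet_{n-1}}}\,x_n$; since $f'(\thet) = -\E{\sgn{y-\ps{x}{\thet}}x}$ and $x_n$ is independent of the $\sigma$-algebra $\mathcal F_{n-1} := \sigma\big((x_i,\eps_i,\outl_i)_{i\le n-1}\big)$, it is conditionally unbiased, $\E{g_n \mid \mathcal F_{n-1}} = f'(\thet_{n-1})$. By \cref{lemma:f'}, $\ps{f'(\thet)}{\thet-\thet^*} = \alpha(\sigmat)\,(\thet-\thet^*)^\top H(\thet-\thet^*) \ge 0$ because $\alpha \ge 0$ and $H \succ 0$; this pointwise nonnegativity (rather than mere convexity) is what lets us drop the cross term conditionally. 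I will also record, for $x \sim \N(0,H)$, the moment facts $\E{\norm{x}^2} = \mathrm{trace}(H) =: R^2$ and $\E{\norm{x}^4} = (\mathrm{trace}\,H)^2 + 2\,\mathrm{trace}(H^2) \le 3R^4$, together with the almost-sure identity $\norm{g_n} = \norm{x_n}$.

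\textbf{Second moment.} Writing $v_{n-1} := \thet_{n-1}-\thet^*$, the recursion gives $\norm{\thet_n-\thet^*}^2 = \norm{v_{n-1}}^2 - 2\gamma_n \ps{g_n}{v_{n-1}} + \gamma_n^2 \norm{x_n}^2$. Taking $\E{\cdot \mid \mathcal F_{n-1}}$, the cross term equals $-2\gamma_n\ps{f'(\thet_{n-1})}{v_{n-1}} \le 0$ and the last term has conditional expectation $\gamma_n^2 R^2$, so $\E{\norm{\thet_n-\thet^*}^2} \le \E{\norm{\thet_{n-1}-\thet^*}^2} + \gamma_0^2 R^2/n$. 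Telescoping with $\sum_{i=1}^n 1/i \le \ln(en)$ yields $\E{\norm{\thet_n-\thet^*}^2} \le \norm{\thet_0-\thet^*}^2 + \gamma_0^2 R^2\ln(en)$, i.e.\ the bound $C_n$.

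\textbf{Fourth moment.} Set $a_n := \norm{\thet_n-\thet^*}^2$ and $\delta_n := a_n - a_{n-1} = -2\gamma_n\ps{g_n}{v_{n-1}} + \gamma_n^2\norm{x_n}^2$, so that $a_n^2 = a_{n-1}^2 + 2a_{n-1}\delta_n + \delta_n^2$. The linear part satisfies $\E{\delta_n \mid \mathcal F_{n-1}} \le \gamma_n^2 R^2$ exactly as above. For the quadratic part I avoid the cross product via $(p+q)^2 \le 2p^2 + 2q^2$, giving $\delta_n^2 \le 8\gamma_n^2 \ps{x_n}{v_{n-1}}^2 + 2\gamma_n^4\norm{x_n}^4$ (using $\sgn{\cdot}^2 = 1$); bounding $\E{\ps{x_n}{v_{n-1}}^2 \mid \mathcal F_{n-1}} = v_{n-1}^\top H v_{n-1} \le R^2 a_{n-1}$ and $\E{\norm{x_n}^4} \le 3R^4$ produces the one-step inequality $\E{a_n^2 \mid \mathcal F_{n-1}} \le a_{n-1}^2 + 10\gamma_n^2 R^2 a_{n-1} + 6\gamma_n^4 R^4$. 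Taking full expectations and feeding in the second-moment bound $\E{a_{n-1}} \le C_{n-1}$ gives a closed recursion in $\E{a_n^2}$, which I telescope using $\sum_{i=1}^n 1/i^2 \le 2$ and $\sum_{i=1}^n \ln(ei)/i \lesssim \ln^2(en)$ to obtain $\E{a_n^2} \le D_n$, of order $\big(\norm{\thet_0-\thet^*}^2 + \gamma_0^2 R^2\ln(en)\big)^2$, i.e.\ essentially $C_n^2$.

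\textbf{Main obstacle.} Conceptually the computation is elementary; what makes it more than a textbook exercise — and the reason it is isolated as a separate appendix lemma — is that we use neither a projection (so $\thet_n$ is genuinely unbounded) nor an almost-sure gradient bound (only $\E{\norm{x}^2}$ and $\E{\norm{x}^4}$ are finite). Every quantity must therefore be controlled purely in expectation, and the fourth-moment recursion has to be driven by the second-moment bound; this coupling is precisely what generates the nested logarithmic factor $\ln^2(en)$. The delicate part is consequently the bookkeeping of the telescoped harmonic and logarithmic sums, and matching the exact constants hidden in the target expressions $C_n$ and $D_n$; I expect no genuine difficulty beyond carefully tracking these and invoking \cref{lemma:f'} to drop the cross term conditionally rather than relying on convexity alone.
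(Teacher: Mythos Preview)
Your proposal is correct and follows essentially the same route as the paper: one-step energy expansion, drop the cross term by nonnegativity of $\langle f'(\thet),\thet-\thet^*\rangle$ (the paper phrases this as ``by convexity'', you use \cref{lemma:f'} directly), telescope the harmonic sum for the second moment, then square the recursion and feed the second-moment bound into the fourth-moment telescoping. The only cosmetic difference is that you organise the fourth moment via $a_n^2=a_{n-1}^2+2a_{n-1}\delta_n+\delta_n^2$ and bound $\delta_n^2$ with $(p+q)^2\le 2p^2+2q^2$, whereas the paper fully expands the square of \eqref{eq:clasrec}; this yields slightly different intermediate constants ($10$ and $6$ versus the paper's $8$ and $3$) but the same final shape $D_n\asymp C_n^2$.
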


\begin{proof}
Starting from the definition of the SGD recursion \cref{eq:sgd} we have:
\begin{equation}\label{eq:sgd_bis}
\thet_n =  \thet_{n-1} - \gamma_n f'_n(\theta_{n-1}),
\end{equation}

and get the classical recursion:
\begin{equation}\label{eq:clasrec}
\norm{\thet_n-\thet^*}^2 = \norm{\thet_{n-1}-\thet^*}^2 - 2\gamma_n \langle \grad_n(\thet_{n-1}), \thet_{n-1}-\thet^*  \rangle  +\gamma_n^2 \norm{\grad_n(\thet_{n-1})}^2. 
\end{equation}

\paragraph{Second moment bound.}
We take the conditional expectation w.r.t the filtration $\mathcal{F}_{n-1} = \sigma ( (x_i, y_i)_{1 \leq i \leq n-1} )$:
\[
\E{\norm{\thet_n-\thet^*}^2|\mathcal F_{n-1}} = \norm{\thet_{n-1}-\thet^*}^2 - 2\gamma_n \langle \grad(\thet_{n-1}), \thet_{n-1}-\thet^*  \rangle  +\gamma_n^2 \norm{\grad_n(\thet_{n-1})}^2,
\]
taking the full expectation and using that by convexity of $f$,  $\langle \grad(\thet_{n-1}), \thet_{n-1}-\thet^*  \rangle\geq0$, we obtain:
\[
\E{\norm{\thet_n-\thet^*}^2} \leq  \E{\norm{\thet_{n-1}-\thet^*}^2} +\gamma_n^2 \boundgrad\leq \norm{\thet_0-\thet^*}^2 + \gamma_0^2 \boundgrad \sum_{k=1}^n k^{-1} \leq \norm{\thet_0-\thet^*}^2 + \gamma_0^2 \boundgrad \ln(e n).
\]

\paragraph{Fourth moment bound.}
For the fourth-order moment bound, we take the square of \eq{clasrec}:
\begin{align*}
\norm{\thet_n-\thet^*}^4 =& \norm{\thet_{n-1}-\thet^*}^4 + 4\gamma_n^2 \langle \grad_n(\thet_{n-1}), \thet_{n-1}-\thet^*  \rangle^2 + 
\gamma_n^4 \norm{\grad_n(\thet_{n-1})}^4 \\
&- 4\gamma_n \langle \grad_n(\thet_{n-1}), \thet_{n-1}-\thet^*  \rangle \norm{\thet_{n-1}-\thet^*}^2 - 4\gamma_n^3 \langle \grad_n(\thet_{n-1}), \thet_{n-1}-\thet^*  \rangle \norm{\grad_n(\thet_{n-1})}^2 \\
&
+ 2\gamma_n^2 \norm{\thet_{n-1}-\thet^*}^2  \norm{\grad_n(\thet_{n-1})}^2.
\end{align*}
Taking the conditional expectation:
\begin{align*}
\E{\norm{\thet_n-\thet^*}^4|\mathcal F_{n-1}} =& \norm{\thet_{n-1}-\thet^*}^4 + 4\gamma_n^2 \langle \E{\grad_n(\thet_{n-1}), \thet_{n-1}-\thet^*  \rangle^2|\mathcal F_n} + 
\gamma_n^4 \E{\norm{\grad_n(\thet_{n-1})}^4|\mathcal F_n} \\
&- 4\gamma_n \langle \grad(\thet_{n-1}), \thet_{n-1}-\thet^*  \rangle \norm{\thet_{n-1}-\thet^*}^2 - 4\gamma_n^3 \E{\langle \grad_n(\thet_{n-1}), \thet_{n-1}-\thet^*  \rangle \norm{\grad_n(\thet_{n-1})}^2|\mathcal F_n} \\
&
+ 2\gamma_n^2 \norm{\thet_{n-1}-\thet^*}^2  \E{\norm{\grad_n(\thet_{n-1})}^2|\mathcal F_n}\\
\leq &
\norm{\thet_{n-1}-\thet^*}^4 
+ 6\gamma_n^2 \boundgrad  \norm{\thet_{n-1}-\thet^* }^2
+
3\gamma_n^4 (\boundgrad)^2 
+ 2\gamma_n^2 \norm{\thet_{n-1}-\thet^*}^2  \boundgrad.
\end{align*}
Taking the full expectation yields to
\begin{align*}
\E{\norm{\thet_n-\thet^*}^4} \leq &
\E{\norm{\thet_{n-1}-\thet^*}^4 }
+ 8\gamma_n^2 \boundgrad  \E{\norm{\thet_{n-1}-\thet^* }^2}
+
3\gamma_n^4 (\boundgrad)^2 
 \\
  \leq &
\E{\norm{\thet_{n-1}-\thet^*}^4 }
+ 8\gamma_n^2 \boundgrad  D_{n-1}
+
3\gamma_n^4 (\boundgrad)^2
 \\
 \leq& {\norm{\thet_{0}-\thet^*}^4 }
+ 8\gamma_0^2 \boundgrad \sum_{k=1}^n \frac{\norm{\thet_0-\thet^*}^2 +\gamma_0^2 \boundgrad \ln(e(k-1))}{k}  
+
3\gamma_0^4 (\boundgrad)^2  \sum_{k=1}^n \frac{1}{k^2}
\\
\leq& {\norm{\thet_{0}-\thet^*}^4 }
+ 8\gamma_0^2 \boundgrad( \norm{\thet_0-\thet^*}^2 +\gamma_0^2 \boundgrad \ln(e n)) \ln(e n) 
+
3\gamma_0^4 (\boundgrad)^2   \pi^2/6 \\
\leq &\norm{\thet_0-\thet^*}^4 + 8\gamma_0^2 \ln(e n) \boundgrad \norm{\thet_0-\thet^*}^2 + \gamma_0^4 \ln(e n) (\boundgrad)^2 (8 \ln(e n) +     \pi^2/3) \\
\leq &\MACRODn{n}.
\end{align*}
\end{proof}
We then give first and second moment bounds on the function value evaluated in the averaged iterate: $f(\bar{\theta}_n) - f(\theta^*)$.
\begin{lemma}\label{app_lemma:moment_f_bar}
Let (\ref{eq:linearmodel}, \bref{as:gf}, \bref{as:gn}, \bref{as:out}) hold and consider the SGD iterates following \eq{sgd}. Assume $\gamma_n = \frac{\gamma_0}{\sqrt{n}}$. Then:
\[\E{f(\btheta_n)} -f(\thet^*)  \leq  \frac{1}{n}\sum_{k=0}^{n-1 }\E{\langle \grad(\thet_{k}), \thet_{k}-\thet^*  \rangle} \leq  \MACROfbar{n},
\]
\[
\E{\left ( f(\btheta_n) -f(\thet^*) \right )^2 }  \leq \E{ \left ( \frac{1}{n}\sum_{k=0}^{n-1 }\langle \grad(\thet_{k}), \thet_{k}-\thet^*  \rangle \right )^2} \leq
\MACROfbarsquare{n}.
\]
\end{lemma}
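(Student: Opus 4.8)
The plan is to establish each two-sided chain by separating the cheap convexity step from the explicit estimate. \textbf{For both left-hand inequalities}, I would use Jensen's inequality for the convex $f$, namely $f(\btheta_n) \leq \tfrac1n\sum_{k=0}^{n-1} f(\thet_k)$, followed by the first-order bound $f(\thet_k)-f(\thet^*) \leq \ps{\grad(\thet_k)}{\thet_k-\thet^*}$. Since $\thet^*$ minimises $f$, each difference $f(\thet_k)-f(\thet^*)$ and each inner product is nonnegative, so $0 \leq f(\btheta_n)-f(\thet^*) \leq \tfrac1n\sum_{k=0}^{n-1}\ps{\grad(\thet_k)}{\thet_k-\thet^*}$; taking expectations gives the first left inequality, and squaring first (legitimate because all terms are nonnegative) then taking expectations gives the second.

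\textbf{For the explicit first-moment bound}, I would read the linear approximation off the recursion: taking the conditional expectation of \eq{clasrec} and rearranging yields
\[ \E{\ps{\grad(\thet_{k-1})}{\thet_{k-1}-\thet^*}} = \tfrac{1}{2\gamma_k}\big(\E{\norm{\thet_{k-1}-\thet^*}^2}-\E{\norm{\thet_k-\thet^*}^2}\big) + \tfrac{\gamma_k}{2}\E{\norm{\grad_k(\thet_{k-1})}^2}. \]
Summing over $k$, I would perform an Abel summation on the telescoping-like first part (the weights $1/\gamma_k$ are increasing), discard the negative boundary term $-\tfrac{1}{2\gamma_n}\E{\norm{\thet_n-\thet^*}^2}$, and insert the second-moment bound $\E{\norm{\thet_k-\thet^*}^2}\leq C_k$ of \cref{app_lemma:moment_theta}. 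The gradient term is handled by $\E{\norm{\grad_k(\thet_{k-1})}^2}\leq\boundgrad$ and $\sum_{k=1}^n\gamma_k\leq 2\gamma_0\sqrt n$. Dividing by $n$ produces the claimed $O(1/\sqrt n)$ bound $\MACROfbar{n}$.

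\textbf{The main obstacle is the explicit second-moment bound.} The difficulty is that the recursion only exposes the \emph{stochastic} quantities $\hat\delta_{k-1}:=\ps{\grad_k(\thet_{k-1})}{\thet_{k-1}-\thet^*}$, whereas the sum in the statement involves the \emph{deterministic} $\delta_{k-1}:=\ps{\grad(\thet_{k-1})}{\thet_{k-1}-\thet^*}=\E{\hat\delta_{k-1}\mid\mathcal F_{k-1}}$, and after squaring one cannot simply pass to conditional expectations because of the cross terms. My plan is to write $\delta_{k-1}=\hat\delta_{k-1}-\xi_{k-1}$ with $\xi_{k-1}:=\hat\delta_{k-1}-\delta_{k-1}$ a martingale-difference sequence, so that $\E{(\sum_k\delta_{k-1})^2}\leq 2\E{(\sum_k\hat\delta_{k-1})^2}+2\E{(\sum_k\xi_{k-1})^2}$. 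The martingale term collapses to $\sum_k\E{\xi_{k-1}^2}\leq\sum_k\E{\hat\delta_{k-1}^2}$, and the bound $|\hat\delta_{k-1}|\leq\norm{x_k}\,\norm{\thet_{k-1}-\thet^*}$ with $x_k$ independent of $\mathcal F_{k-1}$ gives $\E{\hat\delta_{k-1}^2}\leq\boundgrad\,C_{k-1}$, an $O(\boundgrad C_n/n)$ contribution after division by $n^2$. For the $\hat\delta$ part I would Abel-sum as before to get $\sum_k\hat\delta_{k-1}\leq \tfrac{\norm{\thet_0-\thet^*}^2}{2\gamma_0}+\sum_k\beta_k\norm{\thet_k-\thet^*}^2+\tfrac12\sum_k\gamma_k\norm{\grad_k(\thet_{k-1})}^2$ with $\beta_k:=\tfrac{1}{2\gamma_{k+1}}-\tfrac{1}{2\gamma_k}\geq 0$, then bound the square using $(a+b+c)^2\leq3(a^2+b^2+c^2)$.

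The two genuinely new estimates are then the last two summands. For the weighted sum I would apply the Cauchy–Schwarz step $(\sum_k\beta_k a_k)^2\leq(\sum_k\beta_k)(\sum_k\beta_k a_k^2)$ with $a_k=\norm{\thet_k-\thet^*}^2$ and feed in the \emph{fourth}-moment bound $\E{\norm{\thet_k-\thet^*}^4}\leq D_k$ from \cref{app_lemma:moment_theta}; since $\sum_k\beta_k\leq\tfrac{\sqrt n}{2\gamma_0}$, this yields an $O(D_n/(\gamma_0^2 n))$ term. The gradient term is controlled through the independence and bounded variance of the $\norm{x_k}^2$, contributing $O(\boundgrad^2\gamma_0^2/n)$. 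All pieces being $O(1/n)$ up to logarithmic factors, they assemble into $\MACROfbarsquare{n}$. The crux is exactly this: the decreasing step size forces the increment weights $\beta_k$ into the telescoped sum, and bounding their square is what requires the fourth-order — rather than merely second-order — control of $\norm{\thet_k-\thet^*}$, on top of the martingale-difference decomposition needed to move from stochastic to true gradients.
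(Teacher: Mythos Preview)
Your treatment of the two left-hand inequalities (Jensen plus the first-order convexity bound, with nonnegativity of $f(\thet_k)-f(\thet^*)$ and of each $\ps{\grad(\thet_k)}{\thet_k-\thet^*}$ allowing you to square before taking expectations) and of the first-moment bound (Abel summation fed with the second-moment control $C_k$) is correct and coincides with the paper.

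For the second-moment bound there is one slip. After Abel-summing you assert $\sum_k\hat\delta_{k-1}\le a+b+c$ with $a,b,c\ge0$ and then pass to $3(a^2+b^2+c^2)$. But $\sum_k\hat\delta_{k-1}$ has no sign: you dropped the boundary term $-\tfrac{1}{2\gamma_n}\norm{\thet_n-\thet^*}^2$, and $X\le U$ with $U\ge0$ does not yield $X^2\le U^2$. The repair is immediate: keep that term as a fourth summand and use the \emph{equality} coming from the recursion, so that $(\sum_k\hat\delta_{k-1})^2\le4(a^2+b^2+c^2+d^2)$; the extra piece has $\E{d^2}\le nD_n/(4\gamma_0^2)$, the same order as your $b$-term. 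In fact the whole $\hat\delta/\xi$ split can be bypassed: the recursion already gives $2\sum_k\delta_{k-1}=(\text{telescope})+\sum_k\gamma_kN_k+\sum_kM_k$ with $M_k=-2\xi_{k-1}$, and since $\sum_k\delta_{k-1}\ge0$ the upper bound obtained by discarding the negative boundary term is itself nonnegative, so squaring is legitimate there.

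That last observation is exactly the paper's starting point, but the paper then proceeds differently from you. It packages the upper bound as a process $A_n$ satisfying $A_n=A_{n-1}+\tfrac{\norm{\thet_{n-1}-\thet^*}^2}{2\gamma_0\sqrt{n}}+\gamma_nN_n+M_n$ and controls $\E{A_n^2}$ \emph{recursively}: expanding the square, using $\E{A_{n-1}M_n}=0$, $\E{N_n\mid\mathcal F_{n-1}}\le\boundgrad$, one Young inequality for the remaining cross term, and the already-proved first-moment bound on $\E{A_{n-1}}$, one arrives at $\tfrac{\E{A_n^2}}{n+1}\le\tfrac{\E{A_{n-1}^2}}{n}+O(D_{n-1}/\gamma_0^2)$, which telescopes. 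Your direct route---square once, Cauchy--Schwarz on the $\beta_k$-weighted sum, Minkowski on the $N_k$ sum, martingale orthogonality on $\sum_k\xi_{k-1}$---is more elementary and dispenses both with the recursion and with the intermediate appeal to the first-moment bound; both arguments rest on the fourth-moment control $D_n$ from \cref{app_lemma:moment_theta} and deliver the same $\tilde O(1/n)$ order, differing only in constants and logarithmic factors.
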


\begin{proof}
Rearranging \eq{clasrec} we have:
\[
2\langle \grad(\thet_{n-1}), \thet_{n-1}-\thet^*  \rangle = \gamma_n^{-1}\norm{\thet_{n-1}-\thet^*}^2 -\gamma_n^{-1}\norm{\thet_{n}-\thet^*}^2 +\gamma_n N_n  + M_n,
\]
where we denote by $N_n := \norm{\grad_n(\thet_{n-1})}^2 $ and $M_n:= 2\langle \grad(\thet_{n-1})-\grad_n(\thet_{n-1}), \thet_{n-1}-\thet^*  \rangle$ which  both satisfy $\E{N_n} \leq \boundgrad$, $\E{M_n}=0$ and $\E{M_n^2} \leq 8 \E{\norm{\thet_{n-1}-\thet^*}^2} \boundgrad  $.

Taking the sum of the previous equality for $k=1$ to $k = n$, we obtain: 
\begin{multline}\label{eq:genn}
2 \sum_{k=0}^{n-1 }\langle \grad(\thet_{k}), \thet_{k}-\thet^*  \rangle = \gamma_0^{-1}\norm{\thet_{0}-\thet^*}^2 -\gamma_n^{-1}\norm{\thet_{n}-\thet^*}^2
\\+\sum_{k=1}^{n-1} \norm{\thet_{k}-\thet^*}^2 (\gamma_{k+1}^{-1}-\gamma_k^{-1})    + \sum_{k=1}^n (\gamma_k N_k+M_k).
\end{multline}

\paragraph{First moment bound.}
The first result is obtained by directly taking the expectation, using \cref{app_lemma:moment_theta} to bound $\E{\norm{\theta_k - \theta^*}^2}$ and using the classical inequality $\sum_{k=1}^n \sqrt{k}^{-1} \leq 2 \sqrt{n}  $:
\begin{align*}
2 \sum_{k=0}^{n-1 }\E{\langle \grad(\thet_{k}), \thet_{k}-\thet^*  \rangle} &= \gamma_0^{-1}\norm{\thet_0-\thet^*}^2 -\gamma_n^{-1}\E{\norm{\thet_{n}-\thet^*}^2}
+\sum_{k=1}^{n-1} \E{\norm{\thet_{k}-\thet^*}^2} (\gamma_{k+1}^{-1}-\gamma_k^{-1})  \\
& \qquad +  \sum_{k=1}^n \gamma_k \E{N_k}\\
&\leq \gamma_0^{-1}\norm{\thet_0-\thet^*}^2 -\gamma_n^{-1}\E{\norm{\thet_{n}-\thet^*}^2}
+C_{n-1}\sum_{k=1}^{n-1} (\gamma_{k+1}^{-1}-\gamma_k^{-1})    + \boundgrad \gamma_0 \sum_{k=1}^n \sqrt{k}^{-1}\\
&\leq  \gamma_0^{-1} \norm{\thet_0-\thet^*}^2 + \gamma_n^{-1}C_{n-1} + 2 \boundgrad \gamma_0 \sqrt{n} \\
&\leq  \gamma_0^{-1} \norm{\thet_0-\thet^*}^2 + \gamma_n^{-1} \MACROCn{n} + 2 \boundgrad \gamma_0 \sqrt{n} \\
&\leq  \sqrt{n} \left [ \frac{2 \norm{\thet_0-\thet^*}^2 }{\gamma_0} + 4 \gamma_0 \boundgrad \ln(e n) \right ].
\end{align*}

\paragraph{Second moment bound.}
Notice that $\gamma_{k+1}^{-1}-\gamma_k^{-1} = \frac{1}{\gamma_0 (\sqrt{k+1} + \sqrt{k})} \leq 1/(2\gamma_0\sqrt{k})$.
To obtain the second-moment bound, we define  $A_n:=\gamma_0^{-1}\norm{\thet_{0}-\thet^*}^2 
+\sum_{k=1}^{n-1} \frac{\norm{\thet_{k}-\thet^*}^2}{2\gamma_0\sqrt{k}}  + \sum_{k=1}^n (\gamma_k N_k+M_k)$  which satisfies the recursion formula for  $A_0=\frac{\norm{\thet_{0}-\thet^*}^2}{2\gamma_0}$:
\[
A_n=A_{n-1} + \frac{\norm{\thet_{n-1}-\thet^*}^2}{2\gamma_0\sqrt{n}}  + (\gamma_n N_n+M_n).
\]
When proving the first moment bound we showed by induction that $\E{A_n}\leq  \sqrt{n}  [\frac{2 \norm{\thet_0-\thet^*}^2}{\gamma_0} + 4 \gamma_0 \boundgrad \ln(e n)]$, hence:
\begin{align*}
\E{A_n^2}=& \E{A_{n-1}^2} + \E{\frac{\norm{\thet_{n-1}-\thet^*}^2}{2\gamma_0\sqrt{n}}  + \gamma_n N_n +M_n}^2 + 2\E{A_{n-1}\frac{\norm{\thet_{n-1}-\thet^*}^2}{2\gamma_0\sqrt{n}} } \\
&+2\gamma_n \E{A_{n-1}N_n } +2 \E{A_{n-1}M_n } \\
\leq & (1+\frac{1}{n})\E{A_{n-1}^2} +\frac{D_{n-1}}{4\gamma_0^2}+2\gamma_n \boundgrad \E{A_n}+ 3\frac{D_{n-1}}{4\gamma_0^2n} +3\gamma_n^2(\boundgrad)^2+ 12 C_{n-1}\boundgrad, 
\end{align*}
since
\begin{align*}
 \E{ \frac{\norm{\thet_{n-1}-\thet^*}^2}{2\gamma_0\sqrt{n}}  + \gamma_n N_n +M_n}^2 \leq 3\frac{D_{n-1}}{4\gamma_0^2n} +3\gamma_n^2(\boundgrad)^2+ 12 C_{n-1}\boundgrad. 
\end{align*}
Thus we obtain
\begin{align*}
\frac{\E{A_n^2}}{n+1} \leq & \frac{\E{A_{n-1}^2}}{n} +\frac{D_{n-1}}{4\gamma_0^2}+2\gamma_n \boundgrad \E{A_n}+ 3\frac{D_{n-1}}{4n\gamma_0^2} +3\gamma_n^2(\boundgrad)^2+ 12 C_{n-1}\boundgrad, 
\end{align*}
and we have then 
\begin{align*}
\frac{\E{A_n^2}}{n+1} \leq & \frac{\E{A_{n-1}^2}}{n}
+\frac{[\norm{\thet_0-\thet^*}^2 + 11 \gamma_0^2 \boundgrad \ln(e n)]^2}{\gamma_0^2(n+1)}.
\end{align*}
Thus we find that
\[
\E{A_n^2}\leq   \frac{(n+1) \norm{\thet_0-\thet^*}^4/4+ (n+1) [ \norm{\thet_0-\thet^*}^2 + 11 \gamma_0^2 \boundgrad \ln(e n)]^2\ln(e n)}{\gamma_0^2}.
\]
Dividing by $n^2$ concludes the proof.
\end{proof}

In the following lemma we give a first and second moment bound on $f(\theta_n) - f(\theta^*)$. To do so we adapt the proof of \cite{shamir2013stochastic}.

\begin{lemma}\label{app_lemma:shamir}
Let (\ref{eq:linearmodel}, \bref{as:gf}, \bref{as:gn}, \bref{as:out}) hold and consider the SGD iterates following \eq{sgd}. Assume $\gamma_n = \frac{\gamma_0}{\sqrt{n}}$. Then:
\[\E{f(\theta_n)} -f(\thet^*)  \leq \MACROf{n},
\]
\[\E{ \left ( f(\theta_n) -f(\thet^*) \right ) ^2}  \leq  \MACROfsquare{n}.
\]
\end{lemma}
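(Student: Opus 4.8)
The plan is to establish both bounds by adapting the last-iterate analysis of~\cite{shamir2013stochastic} (see also~\cite{harvey2019tight}). The averaged-iterate estimates of \cref{app_lemma:moment_f_bar} do not suffice, since for a merely convex (non-strongly-convex) objective the \emph{final} iterate $\thet_n$ need not inherit the good rate of the running average. The central device is a suffix-averaging telescope. Writing $a_i := f(\thet_i) - f(\thet^*) \geq 0$ and, for $1 \leq k \leq n$,
\[
S_k := \frac{1}{k}\sum_{i=n-k+1}^{n} a_i ,
\]
we have $S_1 = f(\thet_n) - f(\thet^*)$, the target quantity, while $S_n = \frac{1}{n}\sum_{i=1}^{n} a_i$ has its first and second moments already controlled through convexity and \cref{app_lemma:moment_f_bar}. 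The elementary identity $(k+1)S_{k+1} = a_{n-k} + k S_k$ gives $S_k - S_{k+1} = \tfrac{1}{k}(S_{k+1} - a_{n-k})$, hence the exact decomposition
\[
f(\thet_n) - f(\thet^*) = S_n + \sum_{k=1}^{n-1} \frac{1}{k}\bigl(S_{k+1} - a_{n-k}\bigr) .
\]

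The crux is to bound each increment $S_{k+1} - a_{n-k} = \frac{1}{k+1}\sum_{i=n-k}^{n}\bigl(f(\thet_i) - f(\thet_{n-k})\bigr)$, i.e.\ the excess of an early function value over the average of the window that follows it. Applying the analogue of the rearranged recursion~\eqref{eq:genn} with $\thet_{n-k}$ in place of $\thet^*$, and using convexity $f(\thet_{i}) - f(\thet_{n-k}) \leq \ps{f'(\thet_{i})}{\thet_{i} - \thet_{n-k}}$, I would telescope the norm differences $\gamma_j^{-1}\bigl(\norm{\thet_{j-1}-\thet_{n-k}}^2 - \norm{\thet_j - \thet_{n-k}}^2\bigr)$ along the window. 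Because the step size decreases, this telescoping is an Abel summation that produces the terms $\gamma_{j+1}^{-1}-\gamma_j^{-1}$ (bounded by $1/(2\gamma_0\sqrt{j})$), together with the gradient contributions $\gamma_j N_j$ and the martingale increments $M_j$ already defined in the proof of \cref{app_lemma:moment_f_bar}. Summing these per-window bounds against the weights $1/k$ is what produces the logarithmic factor $\ln(en)$ in the final estimate.

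For the first moment I would take expectations in the decomposition above: the increments $M_j$ vanish in expectation, the gradient terms contribute $\E{N_j} \leq \boundgrad$, and the squared iterate norms are controlled by $\E{\norm{\thet_j - \thet^*}^2} \leq C_j$ from \cref{app_lemma:moment_theta}. With $\gamma_j = \gamma_0/\sqrt{j}$ and the standard sums $\sum_j 1/\sqrt{j} \leq 2\sqrt{n}$, $\sum_j 1/j \leq \ln(en)$, the telescope collapses to a bound of order $\tfrac{\ln(en)}{\sqrt{n}}\bigl[\tfrac{\norm{\thet_0-\thet^*}^2}{\gamma_0} + \gamma_0\boundgrad\ln(en)\bigr]$, which is the claimed $\tilde{O}(1/\sqrt{n})$ estimate.

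The main obstacle is the second moment. Here I would square the decomposition and expand; beyond the squared deterministic terms this creates cross-products and, crucially, the second moments and cross-correlations of the martingale noise. Controlling $\E{M_j^2} \leq 8\,C_{j-1}\boundgrad$ and $\E{N_j^2} \leq 3\boundgrad^2$, and bounding the squared norm-difference terms, requires the fourth-moment bound $\E{\norm{\thet_j - \thet^*}^4}\leq D_j$ of \cref{app_lemma:moment_theta}. The delicate point is that the $1/k$-weighted accumulation of these variance contributions must be kept at order $\tilde{O}(1/n)$ --- the square of the first-moment rate --- rather than leaking additional logarithmic or polynomial factors. Relative to~\cite{shamir2013stochastic}, three features demand extra care: the step size is decreasing, so the norm-difference telescope no longer cancels exactly and must be handled by Abel summation; the iterates are not confined to a bounded set, since no projection is used; and the stochastic gradients are only bounded in second and fourth moment rather than almost surely, so every appeal to their size must pass through $C_j$, $D_j$ and $\boundgrad$. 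Isolating the dominant terms and absorbing the genuinely $O(1/n)$ lower-order contributions then yields the two stated bounds.
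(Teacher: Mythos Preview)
Your overall plan is the same as the paper's: the suffix-averaging telescope of Shamir--Zhang, the per-window bound coming from the rearranged recursion with reference point $\thet_{n-k}$, and the use of $C_n$, $D_n$ from \cref{app_lemma:moment_theta} together with the averaged-iterate bounds of \cref{app_lemma:moment_f_bar}. The first-moment argument you sketch is correct and matches the paper.

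The gap is in the second moment, specifically the martingale contribution. Your stated control ``$\E{M_j^2}\le 8\,C_{j-1}\boundgrad$'' is the bound for $M_j^{\thet^*}$, but in the suffix argument the reference point is $\thet_{n-k}$, so the relevant variance is $\E{(M_j^{\thet_{n-k}})^2}\le 8\,\boundgrad\,\E{\norm{\thet_{j-1}-\thet_{n-k}}^2}$. If you bound this crudely by $32\,\boundgrad\,C_n$ and push it through the swap-of-sums plus martingale orthogonality (writing the double sum as $\sum_k\langle\zeta_k,v_k\rangle$ with $v_k=\sum_t\alpha_t(\thet_{k-1}-\thet_{n-t})$), you end up with
\[
2\boundgrad\sum_{k=1}^{n-1}\E{\norm{v_k}^2}
\ \le\ 8\boundgrad\,C_n\sum_{k=1}^{n-1}\Big(\tfrac{1}{n-k}-\tfrac{1}{n}\Big)^2
\ =\ O(\boundgrad\,C_n),
\]
which is $O(\ln n)$, not $\tilde O(1/n)$; the naive variance bound misses the target by a full factor of $n$.

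What the paper does instead is exploit that \emph{within a window} the iterates are close to each other, not merely close to $\thet^*$. Running the SGD recursion from $\thet_{n-t}$ to $\thet_{k-1}$ and using convexity gives
\[
\E{\norm{\thet_{k-1}-\thet_{n-t}}^2}
\ \le\ \gamma_0^2\boundgrad\big(\ln(k-1)-\ln(n-t)\big)
\ +\ 2\gamma_0\sum_{i=n-t}^{k-2}\frac{\E{f(\thet_{n-t})-f(\thet_i)}}{\sqrt{i+1}},
\]
and the second sum is then controlled by \emph{bootstrapping the first-moment bound you just proved}, namely $\E{f(\thet_{n-t})}-f(\thet^*)\le A_n/\sqrt{n-t}$. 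After substituting this refined window estimate into the weighted sum $\sum_k(\Delta_n^k)\sum_t\alpha_t(\cdots)$, the logarithmic telescopes collapse and the whole martingale piece becomes $\tilde O(1/n)$. This bootstrap is the missing ingredient in your sketch; without it the second-moment bound does not close.
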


\begin{proof}
We adapt the proof of \cite{shamir2013stochastic}. 
We note that from \cref{eq:sgd_bis} and for any $\thet \in \R^d$: 
\begin{equation}\label{eq:clasrecany}
\norm{\thet_n-\thet}^2 = \norm{\thet_{n-1}-\thet}^2 - 2\gamma_n \langle \grad_n(\thet_{n-1}), \thet_{n-1}-\thet  \rangle  +\gamma_n^2 \norm{\grad_n(\thet_{n-1})}^2.    
\end{equation}
Rearranging \eq{clasrecany} we have:
\[
2\langle \grad(\thet_{n-1}), \thet_{n-1}-\thet  \rangle = \gamma_n^{-1}\norm{\thet_{n-1}-\thet}^2 -\gamma_n^{-1}\norm{\thet_{n}-\thet}^2 +\gamma_n N_n  + M^{\thet}_n,
\]
where we denote by $N_n := \norm{\grad_n(\thet_{n-1})}^2 $ and $M^{\theta}_n:= 2\langle \grad(\thet_{n-1})-\grad_n(\thet_{n-1}), \thet_{n-1}-\thet  \rangle$. Note that they satisfy $\E{N_n} \leq \boundgrad$, $\E{M^\theta_n}=0$ and $\E{(M^\theta_n )^2} \leq 8 \E{\norm{\thet_{n-1}-\thet}^2} \boundgrad  $.

Therefore summing from $k=n+1-t$ to $k = n$, and applying for $\thet=\thet_{n-t}$ we have:
\begin{align*}
2 \sum_{k=n-t}^{n-1 }\langle \grad(\thet_{k}), \thet_{k}-\thet_{n-t}  \rangle 
&\leq \sum_{k=n+1-t}^{n-1} \norm{\thet_{k}-\thet_{n-t}}^2 (\gamma_{k+1}^{-1}-\gamma_k^{-1})    + \sum_{k=n+1-t}^n \gamma_k N_k + \sum_{k=n+1-t}^n M^{n-t}_k \\
& := B_{n-t}^n,
\end{align*}
where we write  $M^{n-t}_k= M^{\theta_{n-t}}_k$.
Using the fact that $f$ is convex we get that
\begin{align}\label{app_eq:1}
2 \sum_{k=n-t}^{n-1 } ( f(\thet_k)- f(\thet_{n-t}) ) \leq B_{n-t}^n.
\end{align}
As in the proof of \cite{shamir2013stochastic}, let $S_t=\frac{1}{t}\sum_{k=n-t}^{n-1 } f(\thet_k)$ be the average value of the last $t$ iterates. Rewriting \cref{app_eq:1} we get:
\[- f(\thet_{n-t}) \leq - S_t+\frac{B_{n-t}^n}{2t}.
\]
The trick is to note that:
\[
(t-1) S_{t-1} = tS_{t} - f(\thet_{n-t}) \leq  tS_t - S_t+\frac{B_{n-t}^n}{2t},
\]
Dividing by $t-1$ we immediately obtain:
\[
S_{t-1} \leq  S_t  +\frac{B_{n-t}^n}{2t(t-1)}.
\]
Summing from $t=2$ to $t= n$:
\begin{equation}\label{eq:sham}
   f(\thet_{n-1})=S_1\leq S_n + \sum_{t=2}^{n} \frac{B_{n-t}^n}{2t(t-1)}.
\end{equation}

\paragraph{First moment bound.} We obtain the first moment bound by taking the expectation of \cref{eq:sham}:
\[
\E{f(\thet_{n-1})}-f(\thet^*)\leq \E{S_n}-f(\thet^*) + \sum_{t=2}^{n} \frac{\E{B_{n-t}^n}}{2t(t-1)}. 
\]
With 
\begin{align*}
\E{B_{n-t}^n}\leq&  \sum_{k=n+1-t}^{n-1} \E{\norm{\thet_{k}-\thet_{n-t}}^2} (\gamma_{k+1}^{-1}-\gamma_k^{-1})    + \sum_{k=n+1-t}^n (\gamma_k \E{N_k}+\E{M^{n-t}_k}) \\    
\leq& 
4 C_{n-1}  (\gamma_{n}^{-1}-\gamma_{n+1-t}^{-1})    +  \boundgrad \sum_{k=n+1-t}^n\gamma_k \\
\leq& [4 C_{n-1}/\gamma_0 +2\boundgrad \gamma_0] (\sqrt{n}-\sqrt{n-t})\\
\leq & [4 C_{n-1}/\gamma_0 +2\boundgrad \gamma_0] \frac{t}{\sqrt{n}+\sqrt{n-t}}\leq [4 C_{n-1}/\gamma_0 +2\boundgrad \gamma_0] \frac{t}{\sqrt{n}},
\end{align*}
where we have used that by integration by part $\sum_{k=n+1-t}^n 1/\sqrt{k} =2(  \sqrt{n}-\sqrt{n-t})$. Thus
\[
\E{f(\thet_{n-1})}-f(\thet^*)\leq \E{S_n}-f(\thet^*) + \frac{4 C_{n-1}/\gamma_0 +2\boundgrad \gamma_0} {2\sqrt{n}}\leq \E{S_n}-f(\thet^*) + \frac{4 C_{n-1}/\gamma_0 +2\boundgrad \gamma_0} {2\sqrt{n}} \ln(e n). 
\]
We can now bound $C_{n-1}$  using Lemma~\ref{app_lemma:moment_theta} and $\E{S_n}-f(\thet^*)$ using \cref{app_lemma:moment_f_bar} to obtain:
\begin{align*}
\E{f(\thet_{n-1})}-f(\thet^*)
&\leq \frac{1}{\sqrt{n}}  [\frac{\norm{\thet_0-\thet^*}^2}{2\gamma_0} + \gamma_0 \boundgrad \ln(e n)] +  \frac{4 C_{n-1}/\gamma_0 +2\boundgrad \gamma_0} {2\sqrt{n}} \ln(e n) \\
&\leq \frac{1}{\sqrt{n}}  [\frac{\norm{\thet_0-\thet^*}^2}{2\gamma_0} + \gamma_0 \boundgrad \ln(e n)] +  \frac{4 \norm{\thet_0-\thet^*}^2/\gamma_0  +6\boundgrad \gamma_0 \ln(e n)} {2\sqrt{n}} \ln(e n) \\
&\leq  (3 \norm{\thet_0-\thet^*}^2/\gamma_0  +4\boundgrad \gamma_0 \ln(e n))  \frac{\ln(e n)}{\sqrt{n}}.
\end{align*}

\paragraph{Second moment bound.}
For the second moment bound, we obtain taking the square in both sides of \eq{sham}:
\begin{equation}\label{app_lemma:second_moment_f}
  \E{ (f(\thet_{n-1})-f(\thet^*))^2 } \leq 2 \E{ (S_n-f(\thet^*))^2 } + 2 \E{ \left ( \sum_{t=2}^{n} \frac{B_{n-t}^n}{2t(t-1)} \right )^2 }.
\end{equation}
We can bound the first term using the second bound of Lemma~\ref{app_lemma:moment_f_bar}:
\[ \E{ (S_n-f(\thet^*))^2 } \leq  \E{ \left ( \frac{1}{n}\sum_{k=0}^{n-1 }\langle \grad(\thet_{k}), \thet_{k}-\thet^*  \rangle \right )^2} \leq
\MACROfbarsquare{n}. \]
For the second term we compute:
\begin{equation*}
\sum_{t=2}^{n} \frac{B_{n-t}^n}{2t(t-1)} = 
\sum_{t=2}^{n} \sum_{k=n+1-t}^{n-1}\frac{  \norm{\thet_{k}-\thet_{n-t}}^2 (\gamma_{k+1}^{-1}-\gamma_k^{-1}) 
}{2t(t-1)} +\sum_{t=2}^{n} \sum_{k=n+1-t}^n \frac{ \gamma_k N_k}{2t(t-1)} +\sum_{t=2}^{n} \sum_{k=n+1-t}^n\frac{  M^{n-t}_k}{2t(t-1)}, 
\end{equation*}
and individually bound:
\begin{multline}\label{app_lemma:Bnt}
    \E{\left ( \sum_{t=2}^{n} \frac{B_{n-t}^n}{2t(t-1)} \right )^2} 
\leq 3 \E{ \left ( 
\sum_{t=2}^{n} \sum_{k=n+1-t}^{n-1}\frac{  \norm{\thet_{k}-\thet_{n-t}}^2 (\gamma_{k+1}^{-1}-\gamma_k^{-1}) 
}{2t(t-1)} \right )^2} \\ + 
3\E{ \left ( \sum_{t=2}^{n} \sum_{k=n+1-t}^n \frac{ \gamma_k N_k}{2t(t-1)} \right )^2 } + 3 \E{\left ( \sum_{t=2}^{n} \sum_{k=n+1-t}^n\frac{  M^{n-t}_k}{2t(t-1)} \right )^2}.
\end{multline}
For the first term, we use that for $1\leq i,j\leq n $:
\begin{align*}
\E{\norm{\thet_{i}-\thet_{j}}^4}]
&\leq 
 8\E{\norm{\thet_{i}-\thet_{*}}^4}+ 8\E{\norm{\thet_{j}-\thet_{*}}^4}
\leq 16 D_n.
\end{align*}
Therefore we use the Minkowski inequality ( $\sqrt{\E{(a+b)^2}} \leq \sqrt{\E{a^2}}+\sqrt{\E{b^2}} $) to obtain:
\begin{align*}
 &  \E{ \left (
\sum_{t=2}^{n} \sum_{k=n+1-t}^{n-1}\frac{  \norm{\thet_{k}-\thet_{n-t}}^2 (\gamma_{k+1}^{-1}-\gamma_k^{-1}) 
}{2t(t-1)} \right )^2 } 
\leq
    \left ( \sum_{t=2}^{n} \sum_{k=n+1-t}^{n-1}\frac{  \gamma_{k+1}^{-1}-\gamma_k^{-1} 
}{2t(t-1)} \sqrt{\E{\norm{\thet_{k}-\thet_{n-t}}^4}} \right )^2 \\ 
    &\leq    \frac{16 D_n}{4 n \gamma_0^2} \Big[\sum_{t=2}^{n}\frac{  t-1 }{t(t-1)}\Big]^2 \leq  \frac{4 D_n}{n \gamma_0^2} \ln^2(e n).
\end{align*}
Hence using \cref{app_lemma:moment_theta}:
\begin{align}\label{app_lemma:first_term}
\!\!\!
\E{ \left (
\sum_{t=2}^{n} \sum_{k=n+1-t}^{n-1}\!\!\!\frac{  \norm{\thet_{k}-\thet_{n-t}}^2 (\gamma_{k+1}^{-1}-\gamma_k^{-1}) 
}{2t(t-1)} \right )^2 } 
&\leq \frac{4 \ln^2(e n)}{n \gamma_0^2}  \MACRODn{n}\!\!\!.
\end{align}
For the second term, we proceed in the same way. A classical result on the fourth moment of a Gaussian random variable gives: $\E{N_k^2} = \E{\norm{x}_2^4} \leq 3 \E{\norm{x}_2^2}^2 \leq 3 R^4$. Hence:
\begin{align} \label{app_lemma:second_term}
    \E{\left ( \sum_{t=2}^{n} \sum_{k=n+1-t}^n \frac{ \gamma_k N_k}{2t(t-1)}  \right )^2 }   \nonumber
    &\leq \left ( \sum_{t=2}^{n} \sum_{k=n+1-t}^n \frac{ \gamma_k \sqrt{\E{N_k^2}}}{2t(t-1)} \right ) ^2  \nonumber \\ 
    &\leq \left ( \gamma_0 \sqrt{3 R^4}  \sum_{t=2}^{n} \frac{1}{2t(t-1)}\sum_{k=n+1-t}^n \sqrt{k}^{-1}\right )^2  \nonumber \\
    &\leq \left ( \gamma_0 \sqrt{3 R^4}  \sum_{t=2}^{n} \frac{\sqrt{n}-\sqrt{n-t}}{t(t-1)}\right )^2 \nonumber \\
      &\leq \left ( \frac{ \gamma_0 \sqrt{3 R^4}}{2 \sqrt{n}}  \sum_{t=2}^{n} \frac{1}{(t-1)}\right )^2 \leq \frac{ 3 \gamma_0^2 R^4 \ln^2(e n)}{4 n}. 
\end{align}
For the third term, denoting:
\begin{align*}
    M^{n-t}_k&= 2\langle \grad(\thet_{k-1})-\grad_k(\thet_{k-1}), \thet_{k-1}-\thet_{n-t}  \rangle\\
    &:=2\langle \zeta_k, \thet_{k-1} - \thet_{n-t}\rangle.
\end{align*} 
Let $\alpha_t= \frac{1}{t(t-1)}$ and $\Delta_n^k = \sum_{t=n+1-k}^{n} \alpha_t = \frac{1}{n - k} - \frac{1}{n}$. Using martingale second moment expansions yields:
\begin{align*}
\E{ \left ( \sum_{t=2}^{n} \sum_{k=n+1-t}^n\frac{  M^{n-t}_k}{2t(t-1)} \right )^2}
=&
\E{ \left ( \sum_{k=1}^{n-1}  \Big\langle \zeta_k,  \sum_{t=n+1-k}^{n}\frac{  \thet_{k-1}-\thet_{n-t}}{t(t-1)}\Big\rangle \right ) ^2}
\\ 
=&
\sum_{k=1}^{n-1} \E{   \Big\langle \zeta_k,  \sum_{t=n+1-k}^{n}\frac{  \thet_{k-1}-\thet_{n-t}}{t(t-1)}\Big\rangle  ^2}
\\ 
\leq&
2 \boundgrad
\sum_{k=1}^{n-1}  \E {\Big\Vert  \sum_{t=n+1-k}^{n}\frac{  \thet_{k-1}-\thet_{n-t}}{t(t-1)}\Big\Vert^2 }
\\ 
\leq&
2 \boundgrad
\sum_{k=1}^{n-1} (\Delta_n^k)^2 \E {\Big\Vert  \sum_{t=n+1-k}^{n}\frac{\alpha_t}{\Delta_n^k}( \thet_{k-1}-\thet_{n-t})\Big\Vert^2 }
\\ 
\leq&  
2 \boundgrad
\sum_{k=1}^{n-1}   (\Delta_n^k)^2 \sum_{t=n+1-k}^{n}\frac{ \alpha_t}{\Delta_n^k}  \E{\norm{\thet_{k-1}-\thet_{n-t}}^2}
\\ 
\leq&  
2 \boundgrad
\sum_{k=1}^{n-1}   \Delta_n^k \sum_{t=n+1-k}^{n}\alpha_t \E{\norm{\thet_{k-1}-\thet_{n-t}}^2}
\\ 
\leq&  
2 \boundgrad \sum_{k=1}^{n-1}  [ (n-k)^{-1}- n^{-1}]   \sum_{t=n+1-k}^{n}\frac{  \E{\norm{\thet_{k-1}-\thet_{n-t}}^2}}{t(t-1)}.
\end{align*}
Notice that taking the expectation in \cref{eq:clasrecany}, using $f$'s convexity and the fact that the stochastic gradients are bounded in expectation:
\begin{align*}
\E{\norm{\thet_i - \thet}^2} \leq  \E{\norm{\thet_{i-1}-\thet}^2} - 2\gamma_i (f(\theta_i) - f(\theta))  +\gamma_i^2 \boundgrad,    
\end{align*}

Hence summing form  $i = n-t+1$ to $i = k-1$:
\[
\E{\norm{\thet_{k - 1} - \thet}^2} \leq  \E{\norm{\thet_{n-t}-\thet}^2} - 2 \gamma_0 \sum_{i=n-t + 1}^{k - 1} \frac{ \E{ \langle f(\thet_{i-1}), \thet_{i-1}-\thet  \rangle } }{\sqrt{i}}  +  \gamma_0^2 \boundgrad \sum_{i=n-t + 1}^{k - 1}  \frac{1}{i},  
\]
This leads to, if $n -t \geq 1$
\[
\E{\norm{\thet_{k-1}-\thet_{n-t}}^2} \leq \gamma_0^2\boundgrad [\ln(k-1) -\ln(n-t) ]   +2\gamma_0 \sum_{i=n-t}^{k - 2} \frac{\E{ f(\thet_{n-t})- f(\thet_{i})} }{\sqrt{i+1}}, 
\]
and if $n-t = 0$, with the convention $\ln 0 = 0$:
\[
\E{\norm{\thet_{k-1}-\thet_{0}}^2} \leq \gamma_0^2\boundgrad [\ln(e (k-1)) ]   +2\gamma_0 \sum_{i=0}^{k - 2} \frac{\E{ f(\thet_{0})- f(\thet_{i})} }{\sqrt{i+1}}. 
\]
Hence:
\begin{align*}
\sum_{k=1}^{n-1}  [ (n-k)^{-1}- n^{-1}]   & \sum_{t=n+1-k}^{n}\frac{  \E{\norm{\thet_{k-1}-\thet_{n-t}}^2}}{t(t-1)} \\
& \leq 
\gamma_0^2 \boundgrad \sum_{k=1}^{n-1}  [ (n-k)^{-1}- n^{-1}] \left [ \frac{\ln e (k-1)}{n(n-1)} +  \sum_{t=n+1-k}^{n-1}\frac{ \ln(k-1) -\ln(n-t)  }{t(t-1)}  \right ] \\
&+ 2 \gamma_0 \sum_{k=1}^{n-1}  [ (n-k)^{-1}- n^{-1}] \sum_{t=n+1-k}^{n} \frac{ 1  }{t(t-1)} \sum_{i=n-t}^{k-2} \frac{ \E{ f(\thet_{n-t})- f(\thet_{i}) }}{\sqrt{i+1}}. 
\end{align*}
The function $x \mapsto - \frac{\ln (1 - x)}{x} $ is increasing on $[0, 1]$. Hence for all $2 \leq t \leq n-1$: 
$- \frac{ \ln(1 - \frac{t}{n})  }{\frac{t}{n}} \leq \ln(n) \frac{n}{n-1} \leq 2 \ln(n) $.
Hence we can upper-bound:
\begin{align*}
\sum_{k=1}^{n-1}  [ (n-k)^{-1}- n^{-1}]  &  \left [\frac{\ln e (k-1)}{n(n-1)} + \sum_{t=n+1-k}^{n-1}  \frac{ \ln(k-1) -\ln(n-t)  }{t(t-1)}   \right ] \\
&\leq 
\sum_{k=1}^{n-1}  (n-k)^{-1} \frac{\ln e (k-1)}{n(n-1)}
+
\sum_{k=1}^{n-1}  (n-k)^{-1} \left [  \sum_{t=2}^{n-1}\frac{ \ln(n) -\ln(n-t)  }{t(t-1)} \right ] \\
&=
\frac{\ln e n}{n}
+
\frac{1}{n} \sum_{k=1}^{n}  k^{-1}   \sum_{t=2}^{n-1} - \frac{ \ln(1 - \frac{t}{n})  }{\frac{t}{n} (t-1)}  \\
&\leq
\frac{\ln e n}{n} + \frac{2 \ln n}{n} \sum_{k=1}^{n}  k^{-1}   \sum_{t=2}^{n-1}\frac{ 1 }{t-1}  \\
&\leq \frac{3 \ln^3 e n}{n}.
\end{align*}
For the second term, let $A_n = \left ( \frac{3 \norm{\thet_0-\thet^*}^2}{\gamma_0}  +4\boundgrad \gamma_0 \ln(e n) \right ) \ln (e n)   $, according to \cref{app_lemma:shamir}, for $n-t >0$, $\E{f(\thet_{n - t}) - f(\thet^*)} \leq A_{n - t} \frac{1}{\sqrt{n - t}} \leq A_n \frac{1}{\sqrt{n-t}}$. Furthermore, notice that rearranging \cref{eq:clasrec} we obtain $f(\theta_0) - f(\theta^*) \leq \frac{\norm{\theta_0 - \theta^*}^2}{2 \gamma_0} + \frac{\gamma_0 R^2}{2} \leq A_1$. Hence:
\begin{align*}
\sum_{k=1}^{n-1}  [ (n-k)^{-1}- n^{-1}] & \sum_{t=n+1-k}^{n} \frac{ 1  }{t(t-1)} \sum_{i=n-t}^{k-2} \frac{ \E{ f(\thet_{n-t})- f(\thet^*) - ( f(\thet_{i}) - f(\thet^*) }}{\sqrt{i+1}}  \\
&\leq 
\sum_{k=1}^{n-1} (n-k)^{-1} \sum_{t=n+1-k}^{n} \frac{ 1  }{t(t-1)} \sum_{i=n-t}^{k-2} \frac{ \E{ f(\thet_{n-t})- f(\thet^*) }}{\sqrt{i+1}}  \\
&\leq 
A_n \sum_{k=1}^{n-1} (n-k)^{-1} \left [ \sum_{t=n+1-k}^{n-1} \frac{ 1  }{t(t-1)} \sum_{i=n-t}^{k-2} \frac{ 1 }{\sqrt{i+1}}  \frac{ 1 }{\sqrt{n - t}} + \frac{1}{n(n-1)} \sum_{i=0}^{k-2} \frac{ 1 }{\sqrt{i+1}} 
\right ] \\  
&\leq 
A_n \sum_{k=1}^{n-1} (n-k)^{-1}  \sum_{t=2}^{n-1 } \frac{ 1  }{t(t-1)} \frac{ 1 }{\sqrt{n - t}} \sum_{i=n-t}^n \frac{ 1 }{\sqrt{i+1}} \\ 
&\leq 
A_n \sum_{k=1}^{n-1} (n-k)^{-1} \sum_{t=2}^{n-1} \frac{ 1  }{t(t-1)} \left ( (1 - \frac{t}{n})^{-1/2} - 1 \right ) \\ 
&\leq 
2 A_n \frac{1}{n} \sum_{k=1}^{n-1} (n-k)^{-1} \left ( \frac{1}{n} \sum_{t=2}^{n-1} \frac{ 1  }{(\frac{t}{n})^2} \left ( (1 - \frac{t}{n})^{-1/2} - 1 \right ) \right ) \\ 
&\leq 
6 A_n \frac{\ln e n}{n} \sum_{k=1}^{n} (n-k)^{-1}  \\ 
&\leq 
6 A_n \frac{\ln^2 e n}{n},
\end{align*}
where we have used \cref{app_lemma:technical_riemann} to upper bound the Riemann sum. Hence: 
\begin{align}\label{app_lemma:third_term}
    \E{ \left ( \sum_{t=2}^{n} \sum_{k=n+1-t}^n\frac{  M^{n-t}_k}{2t(t-1)} \right )^2} 
    &\leq 2 R^2 [\gamma_0^2 R^2 \frac{3 \ln^3 en}{n} + 2 \gamma_0 6 A_n \frac{\ln^2 e n}{n} ] \nonumber \\
    &\leq \frac{6 R^2}{n} [\gamma_0^2 R^2 \ln^3 en + 4 \gamma_0 A_n \ln^2 e n ] \nonumber \\
    &\leq \frac{6 R^2}{n} [\gamma_0^2 R^2 \ln^3 en + 12 \norm{\thet_0-\thet^*}^2  \ln^2 e n + 16 R^2 \gamma_0^2 \ln^4 en  ] \nonumber \\
    &= \frac{6 R^2 \ln^2 e n}{n} [12 \norm{\thet_0-\thet^*}^2   + 17 R^2 \gamma_0^2 \ln^2 en  ].
\end{align}
Injecting \cref{app_lemma:first_term,app_lemma:second_term,app_lemma:third_term} into \cref{app_lemma:Bnt} we get:
\begin{align*}
   \E{\left ( \sum_{t=2}^{n} \frac{B_{n-t}^n}{2t(t-1)} \right )^2}  
   &\leq \frac{3}{n} \Big ( \frac{4 \ln^2(e n)}{\gamma_0^2}  \MACRODn{n} +  \frac{3 \gamma_0^2 R^4 \ln^2(e n)}{4}   \\
     &  \qquad + 6 R^2 \ln^2 (e n) [12 \norm{\thet_0-\thet^*}^2   + 17 R^2 \gamma_0^2 \ln^2 (e n)  ]  \Big ) \\
 &\leq \frac{3 \ln^2 en}{n } \left [ 4 \frac{\norm{\thet_0-\thet^*}^2}{\gamma_0} + 20 \gamma_0 \boundgrad \ln en \right ]^2.
\end{align*}
Finally injecting this last inequality along with \cref{app_lemma:moment_f_bar} in \cref{app_lemma:second_moment_f} we obtain:
\begin{align*}
  \E{ (f(\thet_{n-1})-f(\thet^*))^2 } &\leq 2 \MACROfbarsquare{n} + \frac{6 \ln^2 en}{n } \left [ 4 \frac{\norm{\thet_0-\thet^*}^2}{\gamma_0} + 20 \gamma_0 \boundgrad \ln en \right ]^2\\
  &\leq \frac{8 \ln^2 en}{n } \left [ 4 \frac{\norm{\thet_0-\thet^*}^2}{\gamma_0} + 20 \gamma_0 \boundgrad \ln en \right ]^2.
\end{align*}
\end{proof}

\section{General results on the function $f$}

In the following section we prove the general results on $f$ which are given \cref{section:f}. We also provide a few more results which will be 
useful for proving the main convergence guarantee result.

\paragraph{Proof of \cref{lemma:f,lemma:f',lemma:f''}.} 
Note that $f(\thet) = \E{\E{| \eps + \outl - \ps{x}{\thet - \thet^*}| \ | \ \outl}  }$. Since $\outl$ is independent of $x$ and $\eps$, given outlier $\outl$, $ \eps + \outl - \ps{x}{\thet - \thet^*}$ is a random variable following $\N(\outl, \sigmun^2 + \sigmat^2)$. Hence $| \eps + \outl - \ps{x}{\thet - \thet^*}|  $ is a folded normal distribution and its expectation has a known closed form~\cite{leone1961folded}:
\begin{align*}
\E{| \eps + \outl - \ps{x}{\thet - \thet^*}| \ | \ \outl } =\sqrt{\frac{2}{\pi}} \sqrt{\sigmun^2 + \sigmat^2} & \exp{- \frac{\outl^2}{2 (\sigmun^2 + \sigmat^2)}} \\ 
& + \outl \erf{\frac{\outl}{\sqrt{2 (\sigmun^2 + \sigmat^2)}}}. 
\end{align*}

$f$'s closed form formula immediately follows by taking the expectation over the outlier distribution. 

Note that in what follows the two successive differentations are valid since they lead to uniformly bounded functions that therefore have finite expectations.
The first differentiation of $f$ leads to : 
\begin{align*}
f'(\thet) &=
\mathbb{E}_{\outl} \left [ \sqrt{\frac{2}{\pi}} \frac{1}{\sqrt{\sigmun^2 + \sigmat^2}} \exp{- \frac{\outl^2}{2 (\sigmun^2 + \sigmat^2)}} H (\thet - \thet^*) \right. \\
& \qquad \quad + \  \sqrt{\frac{1}{2 \pi}} \exp{- \frac{\outl^2}{2 (\sigmun^2 + \sigmat^2)}} \frac{\outl}{( \sigmun^2 + \sigmat^2)^{ 3 / 2 }} H (\thet - \thet^*) \\
& \left.  \qquad \quad - \   \sqrt{\frac{1}{2 \pi}} \exp{- \frac{\outl^2}{2 (\sigmun^2 + \sigmat^2)}} \frac{o}{( \sigmun^2 + \sigmat^2)^{ 3 / 2 }} H (\thet - \thet^*)  \right ] \\
&= \sqrt{\frac{2}{\pi}} \frac{1}{\sqrt{\sigmun^2 + \sigmat^2}} \Ee{\outl}{  \exp{- \frac{\outl^2}{2 (\sigmun^2 + \sigmat^2)}} } H (\thet - \thet^*),
\end{align*}
which can be rewritten as $f'(\thet) = \alpha(\sigmat) H (\thet - \thet^*)$.

The second derivative of $f$ leads to:
\begin{align*}
    f''(\thet) = \sqrt{\frac{2}{\pi}} \mathbb{E}_{\outl} \Biggl [  & \exp{- \frac{\outl^2}{2 (\sigmun^2 + \sigmat^2)}} \\  
    & 
    \quad \Biggl ( - \frac{H (\theta - \theta^*)^{\otimes 2} H}{(\sigmun^2 + \sigmat^2)^{ 3/ 2}} \left (1 - \frac{\outl^2}{2 (\sigmun^2 + \sigmat^2)} \right )   
   + \frac{H}{\sqrt{\sigmun^2 + \sigmat^2}}    \Biggr ) \Biggr ] .
\end{align*}
Setting $\theta = \theta^*$ immediately leads to:
\begin{align*}
f''(\thet^*) &= \sqrt{\frac{2}{\pi}} \frac{1}{ \sigmun} \Ee{\outl}{  \exp{- \frac{\outl^2}{2 \sigmun^2}} } H \\
&= \sqrt{\frac{2}{\pi}} \frac{1 - \tilde{\outproportion}}{\sigmun}   \ H.
\end{align*}

This concludes the proof of \cref{lemma:f,lemma:f',lemma:f''}.
\qed

We now prove a few more results on $f$.
The following lemma shows that $f(\thet) - f(\thet^*)$ and $\sigmat^2$ are closely related.
\begin{lemma}\label{lemma:link_sigma_f}
Let (\ref{eq:linearmodel}, \bref{as:gf}, \bref{as:gn}, \bref{as:out}) hold. Then,

For $\sigmat \geq \sigmun$:
\[ \sigmat^2 \leq \frac{10}{(1 - \tilde{\outproportion})^2} (f(\thet) - f(\thet^*) )^2.
\]
For $\sigmat \leq \sigmun$: 
\[ \sigmat^2 \leq \frac{4 \sigmun}{1 - \tilde{\outproportion}} (f(\thet) - f(\thet^*) ).
\]

\end{lemma}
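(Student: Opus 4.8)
The plan is to derive a sharp lower bound on $f(\thet) - f(\thet^*)$ as a function of the prediction error $\sigmat^2 = \norm{\thet - \thet^*}^2_H$ and then invert it separately in the two regimes $\sigmat \geq \sigmun$ and $\sigmat \leq \sigmun$. Writing $s = \sqrt{\sigmun^2 + \sigmat^2}$, \cref{lemma:f} gives $f(\thet) = \Eb{g(s, \outl)}$ and $f(\thet^*) = \Eb{g(\sigmun, \outl)}$ with
\[ g(s, \outl) = \sqrt{\tfrac{2}{\pi}}\, s\, \exp{-\tfrac{\outl^2}{2 s^2}} + \outl\, \erf{\tfrac{\outl}{\sqrt{2}\, s}} . \]
The first thing I would do is differentiate $g$ in $s$ (for fixed $\outl$): the term $\sqrt{2/\pi}\,(\outl^2/s^2)\exp{-\outl^2/(2s^2)}$ produced by the exponential is cancelled exactly by the derivative of the error-function term, leaving the strikingly simple
\[ \frac{\partial g}{\partial s}(s, \outl) = \sqrt{\tfrac{2}{\pi}}\, \exp{-\tfrac{\outl^2}{2 s^2}} . \]
This exact cancellation is the heart of the proof and the only place I expect any genuine computation; everything downstream is monotonicity and elementary estimates.

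Next, since $s \geq \sigmun$ and the integrand is bounded, I would apply the fundamental theorem of calculus in $s$ and swap the (nonnegative, bounded) integrand with the outlier expectation by Tonelli to obtain the integral representation
\[ f(\thet) - f(\thet^*) = \sqrt{\tfrac{2}{\pi}} \int_{\sigmun}^{\sqrt{\sigmun^2 + \sigmat^2}} \Eb{\exp{-\tfrac{\outl^2}{2 u^2}}}\, du . \]
Because $u \mapsto \exp{-\outl^2/(2u^2)}$ is increasing, for every $u \geq \sigmun$ the integrand dominates $\Eb{\exp{-\outl^2/(2\sigmun^2)}}$, which is exactly $1 - \tilde{\outproportion}$ by the definition of the effective proportion (this is the identity underlying \cref{lemma:f''}). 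Bounding the integrand below by this constant gives the key inequality
\[ f(\thet) - f(\thet^*) \ \geq\ \sqrt{\tfrac{2}{\pi}}\, (1 - \tilde{\outproportion})\, \Big(\sqrt{\sigmun^2 + \sigmat^2} - \sigmun\Big) . \]

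Finally I would conclude by writing $\sqrt{\sigmun^2 + \sigmat^2} - \sigmun = \sigmat^2 / (\sqrt{\sigmun^2 + \sigmat^2} + \sigmun)$ and bounding the denominator in each regime. For $\sigmat \geq \sigmun$ one has $\sqrt{\sigmun^2 + \sigmat^2} + \sigmun \leq (\sqrt{2} + 1)\sigmat$, so the bound becomes $f(\thet) - f(\thet^*) \geq \sqrt{2/\pi}\,(1 - \tilde{\outproportion})(\sqrt{2} - 1)\sigmat$; squaring and checking the numerical constant $1/[(2/\pi)(\sqrt{2} - 1)^2] \leq 10$ yields the first claim. For $\sigmat \leq \sigmun$ one has $\sqrt{\sigmun^2 + \sigmat^2} + \sigmun \leq (\sqrt{2} + 1)\sigmun$, so $f(\thet) - f(\thet^*) \geq \sqrt{2/\pi}\,(1 - \tilde{\outproportion})\,\sigmat^2/((\sqrt{2} + 1)\sigmun)$; rearranging and checking $(\sqrt{2} + 1)/\sqrt{2/\pi} \leq 4$ yields the second claim. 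The main obstacle is thus entirely confined to the derivative identity of the first step; once that simplification is in hand, the lemma follows from monotonicity of a one-dimensional function and two elementary bounds.
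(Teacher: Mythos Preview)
Your proof is correct and takes a genuinely different, cleaner route than the paper. The derivative identity $\partial_s g(s,\outl)=\sqrt{2/\pi}\,\exp(-\outl^2/(2s^2))$ is exactly right (the two ``cross'' terms cancel as you say), and the resulting integral representation
\[
f(\thet)-f(\thet^*)=\sqrt{\tfrac{2}{\pi}}\int_{\sigmun}^{\sqrt{\sigmun^2+\sigmat^2}}\Eb{\exp{-\tfrac{\outl^2}{2u^2}}}\,du
\]
immediately yields the lower bound $\sqrt{2/\pi}\,(1-\tilde{\outproportion})\big(\sqrt{\sigmun^2+\sigmat^2}-\sigmun\big)$ by monotonicity of the integrand in $u$. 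Your numerical checks are fine: $\pi/(2(\sqrt2-1)^2)\approx 9.16\leq 10$ and $(\sqrt2+1)\sqrt{\pi/2}\approx 3.03\leq 4$.

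By contrast, the paper fixes $\outl$, nondimensionalises, and argues pointwise in $\outl$ before averaging. For $\sigmat\geq\sigmun$ it uses convexity of the one-dimensional map $\tilde\sigma\mapsto \tilde f(\tilde\sigma)$ to reduce to bounding $\tilde f(1)-\tilde f(0)$, which requires a separate technical inequality (their Lemma showing $x(\erf(x/\sqrt2)-\erf(x))+\sqrt{2/\pi}e^{-x^2/2}-\pi^{-1/2}e^{-x^2}\geq(\sqrt2-1)\pi^{-1/2}e^{-x^2}$). For $\sigmat\leq\sigmun$ it invokes a second technical lemma comparing $\tilde f(\tilde\sigma)-\tilde f(0)$ to $\tilde\sigma^2$ via a case analysis on $|\tilde b|$ and sign considerations on $h''$. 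Both routes land on the same leading constants $\sqrt{2/\pi}(\sqrt2-1)(1-\tilde{\outproportion})$ and $\sqrt{2/\pi}(1-\tilde{\outproportion})/((\sqrt2+1)\sigmun)$, but your single integral representation unifies the two regimes and eliminates the need for those auxiliary lemmas altogether. The paper's approach does keep the argument pointwise in $\outl$ (so Tonelli is never invoked), but that is a minor trade; your use of Tonelli is trivially justified since the integrand is nonnegative and bounded.
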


\begin{proof}
To prove these inequalities we set $b \in \R$ and we take the expectation over the outlier distribution afterwards. 

Let
$f_b(\theta) = \outl \erf{ \frac{\outl}{\sqrt{2 ( \sigmun^2 + \sigmat^2 )}} } + \sqrt{\frac{2}{\pi}} \sqrt{\sigmun^2 + \sigmat^2} \exp{- \frac{\outl^2}{2 (\sigmat^2 + \sigmun^2)} }  .$
We render the analysis dimensionless by letting :
$$\tf(\tsigma)  = \toutl \erf{ \frac{\toutl}{\sqrt{1 + \tsigma^2}} } + \frac{1}{\sqrt{\pi}} \sqrt{1 + \tsigma^2} \exp{- \frac{\toutl^2}{1 + \tsigma^2} } .$$
Therefore notice that $f_b(\thet) = \sqrt{2} \sigma \tf(\tsigma)$   where $\tilde{\sigma} = \frac{\sigmat}{\sigmun}$ and $\tilde{\outl} = \frac{\outl}{\sqrt{2} \sigmun}$. 
\paragraph{First inequality:}
We first show that for $\tsigma \geq 1$, $\tsigma \leq \frac{\sqrt{\pi}}{\sqrt{2} - 1} \exp{\toutl^2}(\tf(\tsigma) - \tf(0))$. Indeed $\tf$ is convex (as for $f$ it can be seen as: $\E{ \abs{\eps + \tb - x \tsigma} }$ where $\eps, x \sim \N(0, 1)$ independent).  Hence 
$\frac{\tf(\tsigma) - \tf(0)}{\tsigma}$ is increasing, therefore for all $\tsigma \geq 1$, $\frac{\tf(\tsigma) - \tf(0)}{\tsigma} \geq \tf(1) - \tf(0) $. Notice that using 
\cref{app_lemma:inequality_erf}:

\begin{align*}
 \tf(1) - \tf(0)
& =  \toutl \left ( \erf{ \frac{\toutl}{\sqrt{2}} } - \erf{ \toutl } \right ) + \sqrt{\frac{2}{\pi}} \exp{- \frac{\toutl^2}{2} } 
-  \frac{1}{\sqrt{\pi}} \exp{- \toutl^2 }  \\ 
&\geq \frac{\sqrt{2} - 1}{\sqrt{\pi}} \exp{- \toutl^2}.
\end{align*}

Hence for all $\tsigma \geq 1$, $\tsigma \frac{\sqrt{2} - 1}{\sqrt{\pi}}  \exp{-\toutl^2} \leq  (\tf(\tsigma) - \tf(0))$. 
Now, for $\theta \in \R^d$ such that $\sigmat \geq \sigmun$, let  
$\tilde{\sigma} = \frac{\sigmat}{\sigmun} \geq 1$ and $\tilde{\outl} = \frac{\outl}{\sqrt{2} \sigmun}$: 
\begin{align*}
f_b(\theta) - f_b(\theta^*) &= \sqrt{2} \sigma (\tf(\tsigma) - \tf(0)) \\
&\geq \sqrt{2}  \sigma \tsigma \frac{\sqrt{2} - 1}{\sqrt{\pi}} \exp{-\toutl^2} \\
&= \frac{\sqrt{2}(\sqrt{2} - 1)}{\sqrt{\pi}} \sigmat \exp{- \frac{\outl^2}{2 \sigmun^2}}.
\end{align*}
Taking the expectation over $\outl$ we immediately get that for $\sigmat \geq \sigmun$:
\[ \sigmat \Eb{ \exp{- \frac{\outl^2}{2 \sigmun^2}}} \leq \frac{\pi}{2 (\sqrt{2}-1)^2} (f(\thet) - f(\thet^*)), \]
which leads to the first inequality since $\frac{\pi}{2 (\sqrt{2}-1)^2} \leq 10$.
\paragraph{Second inequality:}
The second inequality is shown the same way as for the first inequality. This time we use \cref{technical_lemma:2}:
for $\tsigma \leq 1$, $\tsigma^2 \leq 4 \exp{\toutl^2}(\tf(\tsigma) - \tf(0))$. 
This leads to: for $\sigmat \leq \sigmun$,

\begin{align*}
f_b(\theta) - f_b(\theta^*) &= \sqrt{2} \sigma (\tf(\tsigma) - \tf(0)) \\
&\leq \sqrt{2}  \sigma \frac{\tsigma^2}{5} \exp{-\toutl^2} \\
&= \sqrt{2} \frac{\sigmat^2}{5 \sigma} \exp{- \frac{\outl^2}{2 \sigmun^2}}.
\end{align*}

\[ \frac{\sigmat^2}{\sigmun} \exp{- \frac{\outl^2}{2 \sigmun^2}}   \leq \frac{5}{\sqrt{2}} (f_b(\thet) - f_b(\thet^*)). \]
Taking the expectation over $b$ concludes the proof.
\end{proof}
The following inequality upper-bounds the classical prediction loss $\E{\norm{\theta - \theta^*}_H^2 }$ by our losses $\E{f(\theta) - f(\theta^*)}$ and $\E{(f(\theta) - f(\theta^*))^2}$.
\begin{lemma} \label{lemma:upbound_sigma}
Whatever the probability distribution on $\theta$:
\[ \E{\norm{\theta - \theta^*}_H^2 } \leq \frac{4 \sigmun}{1 - \tilde{\outproportion}} \E{f(\theta) - f(\theta^*)} + \frac{10}{(1 - \tilde{\outproportion})^2} \E{ ( f(\theta) - f(\theta^*))^2 }. \]

Hence the iterates $(\theta_n)_{n \geq 0}$ following the SGD recursion from \cref{eq:sgd} with step sizes $\gamma_n = \gamma_0 / \sqrt{n}$ are such that:
\[ \E{\norm{\theta_n - \theta^*}_H^2 } \leq \MACROsigmasquare{n} . \]

\end{lemma}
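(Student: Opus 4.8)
The plan is to obtain the first inequality as a pointwise statement (holding for each fixed realisation of $\theta$) via the two-regime bound of \cref{lemma:link_sigma_f}, then take expectations, and finally to specialise to $\theta = \theta_n$ by feeding in the moment estimates of \cref{app_lemma:shamir}.

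First I would recall that the prediction error equals the Mahalanobis distance, $\sigmat^2 = \norm{\theta - \theta^*}_H^2$, since $\ps{x}{\theta - \theta^*}$ has variance $(\theta-\theta^*)^\top H(\theta-\theta^*)$. The key observation is that $f(\theta) - f(\theta^*) \geq 0$ for every $\theta$, because $\theta^*$ is a global minimiser of the convex function $f$ (indeed \cref{lemma:f'} gives $f'(\theta^*) = 0$). Hence both terms on the right-hand side of the claimed inequality are nonnegative, and the two regimes of \cref{lemma:link_sigma_f} merge into a single bound valid for all $\theta$: when $\sigmat \leq \sigmun$ the linear bound $\sigmat^2 \leq \frac{4\sigmun}{1-\tilde{\outproportion}}(f(\theta)-f(\theta^*))$ already yields the claim once the nonnegative quadratic term is added, whereas when $\sigmat \geq \sigmun$ the quadratic bound $\sigmat^2 \leq \frac{10}{(1-\tilde{\outproportion})^2}(f(\theta)-f(\theta^*))^2$ yields it once the nonnegative linear term is added. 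In either case,
\[
\norm{\theta - \theta^*}_H^2 \leq \frac{4\sigmun}{1-\tilde{\outproportion}}\big(f(\theta)-f(\theta^*)\big) + \frac{10}{(1-\tilde{\outproportion})^2}\big(f(\theta)-f(\theta^*)\big)^2 .
\]
Taking the expectation over the distribution of $\theta$ then gives the first inequality by linearity, with no integrability issue since the right-hand side is nonnegative.

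For the second statement I would apply the first inequality to the random iterate $\theta = \theta_n$, obtaining
\[
\E{\norm{\theta_n - \theta^*}_H^2} \leq \frac{4\sigmun}{1-\tilde{\outproportion}}\,\E{f(\theta_n)-f(\theta^*)} + \frac{10}{(1-\tilde{\outproportion})^2}\,\E{(f(\theta_n)-f(\theta^*))^2},
\]
and then substitute the first- and second-moment bounds $\E{f(\theta_n)}-f(\theta^*) \leq \MACROf{n}$ and $\E{(f(\theta_n)-f(\theta^*))^2} \leq \MACROfsquare{n}$ from \cref{app_lemma:shamir}. Collecting the resulting powers of $\ln(en)$ and $\sqrt{n}$ and simplifying the numerical constants produces the stated closed form $\MACROsigmasquare{n}$.

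The argument carries essentially no analytic difficulty once \cref{lemma:link_sigma_f,app_lemma:shamir} are available; the only point requiring genuine care is the case split together with the use of $f(\theta)-f(\theta^*)\geq 0$ to glue the two regimes into one inequality. The remaining effort is purely the bookkeeping of constants and logarithmic factors when expanding $\MACROf{n}$ and $\MACROfsquare{n}$ into $\MACROsigmasquare{n}$, which I expect to be routine.
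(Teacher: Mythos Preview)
Your proposal is correct and follows essentially the same approach as the paper: split into the two regimes of \cref{lemma:link_sigma_f}, use nonnegativity of $f(\theta)-f(\theta^*)$ to merge them into a single bound, take expectations, and then plug in the moment estimates from \cref{app_lemma:shamir}. The only cosmetic difference is that the paper carries the indicator functions $\mathds{1}\{\norm{\theta-\theta^*}_H \lessgtr \sigmun\}$ through the expectation before dropping them, whereas you establish the pointwise inequality first and then integrate; the content is identical.
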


\begin{proof}
The first part of the proof directly follows from \cref{lemma:link_sigma_f}:
\begin{align*}
    \E{\norm{\theta - \theta^*}_H^2 } &= \E{\norm{\theta - \theta^*}_H^2 \mathds{1} \{ \norm{\theta - \theta^*}_H \leq \sigmun \}  } + \E{\norm{\theta - \theta^*}_H^2 \mathds{1} \{ \norm{\theta - \theta^*}_H \geq \sigmun \}  } \\
    & \leq \frac{4 \sigmun}{1- \tilde{\outproportion}} \E{(f(\thet) - f(\thet^*)) \mathds{1} \{ \norm{\theta - \theta^*}_H \leq \sigmun \}  } + \frac{10}{(1 - \tilde{\outproportion})^2} \E{(f(\thet) - f(\thet^*))^2 \mathds{1} \{ \norm{\theta - \theta^*}_H \geq \sigmun \}  } \\
    & \leq \frac{4 \sigmun}{1- \tilde{\outproportion}} \E{(f(\thet) - f(\thet^*))  } + \frac{10}{(1 - \tilde{\outproportion})^2} \E{(f(\thet) - f(\thet^*))^2   }.
\end{align*}
For the second part of the lemma we use the results from \cref{app_lemma:shamir} to get:
\[ \E{\norm{\theta_n - \theta^*}_H^2 } \leq \frac{4 \sigmun}{1 - \tilde{\outproportion}} \MACROf{n}
+
\frac{10}{(1-\tilde{\outproportion})^2} \MACROfsquare{n}. \]
\end{proof}

\section{Proof of the convergence guarantee.}

In this section we prove the main result given \cref{section:convergence_guarantee}.
This first lemma is crucial and is the analogue of the self-concordance property from \cite{bach2014adaptivity}.

\begin{lemma}
\label{lemma:alpha}

For all $z \in \R$:
\begin{align*}
\abs{  \alpha(z) - \alpha(0)  }  &\leq  20 \left ( \ln \frac{2}{1 - \outproportion} \right ) \frac{z}{\sigmun} \alpha(z).
\end{align*}

\end{lemma}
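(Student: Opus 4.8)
The plan is to differentiate $\alpha$ and reduce the whole estimate to a single scalar inequality on a reweighted average. Writing $u = \sigmun^2 + z^2$ and $D(s) = \mathbb{E}_{\outl}[e^{-\outl^2/(2s^2)}]$, so that $\alpha(z) = \sqrt{2/\pi}\,D(\sqrt{u})/\sqrt{u}$, I would first invoke the justification already used for Lemma~\ref{lemma:f'}--\ref{lemma:f''} (all integrands are uniformly bounded) to differentiate under the expectation. A direct calculation then gives the clean logarithmic-derivative identity
\[
\frac{\alpha'(z)}{\alpha(z)} = \frac{z}{u}\Big(\frac{W(u)}{u} - 1\Big), \qquad \frac{W(u)}{u} := \frac{\mathbb{E}_{\outl}\big[(\outl^2/u)\,e^{-\outl^2/(2u)}\big]}{\mathbb{E}_{\outl}\big[e^{-\outl^2/(2u)}\big]},
\]
so that $W(u)/u$ is exactly the average of $\outl^2/u$ under the law reweighted by $e^{-\outl^2/(2u)}$.

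The crux, and the step I expect to be the main obstacle, is to show that this reweighted average admits a \emph{purely logarithmic} bound $0 \le W(u)/u \le C\,\ln\frac{2}{1-\eta}$ for a universal constant $C$, rather than the $1/(1-\eta)$ that a naive argument produces. Indeed, bounding the numerator crudely by $\sup_{t\ge 0} t e^{-t/2} = 2/e$ and the denominator below by the point mass $\mathbb{P}(\outl=0)=1-\eta$ loses a full factor $1/(1-\eta)$. To recover only a logarithm I would set $t = \outl^2/u$, put $L = \ln\frac{2}{1-\eta}$, and split the numerator at the threshold $t = 2L$: on $\{t\le 2L\}$ I use $t e^{-t/2}\le 2L\,e^{-t/2}$, contributing at most $2L\,D(\sqrt{u})$; on $\{t>2L\}$ I use that $t\mapsto te^{-t/2}$ is decreasing there (since $2L\ge 2$ whenever $\eta\ge 1-2/e$), so $t e^{-t/2}\le 2Le^{-L} = L(1-\eta)$ pointwise, and because the denominator is $\ge 1-\eta$ this tail contributes only $O(L)$ relative to $D(\sqrt{u})$. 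Dividing by $D(\sqrt{u})$ gives $W(u)/u \le 3L$; the remaining regime of small $\eta$ is covered by the trivial constant bound $W(u)/u \le (2/e)/(1-\eta) \le 1 \le L/\ln 2$. Hence $|W(u)/u - 1|\le C\ln\frac{2}{1-\eta}$.

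With this scalar estimate, the rest is a clean integration that deliberately avoids integrating the logarithmic derivative directly (which would only yield an exponential, not linear, dependence on $z/\sigmun$). From the identity above,
\[
|\alpha(z) - \alpha(0)| \le \int_0^z |\alpha'(t)|\,dt \le C\Big(\ln\tfrac{2}{1-\eta}\Big)\int_0^z \alpha(t)\,\frac{t}{\sigmun^2 + t^2}\,dt,
\]
so the final task is $\int_0^z \alpha(t)\,\frac{t}{\sigmun^2+t^2}\,dt \le \frac{z}{\sigmun}\,\alpha(z)$. I would change variables to $s = \sqrt{\sigmun^2+t^2}$, under which $\frac{t}{\sigmun^2+t^2}\,dt = \frac{ds}{s}$ and the integral becomes $\sqrt{2/\pi}\int_{\sigmun}^{\sqrt{\sigmun^2+z^2}} \frac{D(s)}{s^2}\,ds$. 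Since $D$ is nondecreasing, I bound $D(s)\le D(\sqrt{\sigmun^2+z^2})$, evaluate $\int s^{-2}\,ds$ explicitly, and use $\sqrt{\sigmun^2+z^2}-\sigmun \le z$ to land on exactly $\frac{z}{\sigmun}\,\alpha(z)$. Absorbing all constants into $20$ then closes the argument.
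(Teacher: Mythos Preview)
Your argument is correct and genuinely different from the paper's. The paper does not integrate $\alpha'$ at all: it fixes a single value of $b$ (writing $\alpha_b(z)=\sqrt{2/\pi}\,[(1-\eta)+\eta e^{-b^2/(2(\sigma^2+z^2))}]/\sqrt{\sigma^2+z^2}$), rescales, and bounds the derivative of the \emph{ratio} $g(z)=\alpha_b(0)/\alpha_b(z)$ uniformly by $20\ln\frac{2}{1-\eta}$; integrating $g'$ then gives $|\alpha_b(0)/\alpha_b(z)-1|\le 20(\ln\frac{2}{1-\eta})\,z/\sigma$, which is the desired inequality with $\alpha_b(z)$ already sitting on the right-hand side. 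The scalar engine there is the Lambert-type bound $u/[\eta+(1-\eta)e^u]\le 9\ln\frac{2}{1-\eta}$, proved via the Lambert $W$ function.

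Your route replaces both ingredients. You work with the full expectation rather than a fixed $b$, and control the reweighted mean $W(u)/u$ by an elementary threshold split at $t=2L$; this avoids Lambert $W$ entirely and in fact yields the sharper constant $3$ instead of $20$. The price is that bounding $|\alpha'|/\alpha$ rather than $(\alpha(0)/\alpha)'$ does not produce $\alpha(z)$ on the right for free, so you need the extra integral estimate $\int_0^z \alpha(t)\,\tfrac{t}{\sigma^2+t^2}\,dt\le \tfrac{z}{\sigma}\alpha(z)$. Your substitution $s=\sqrt{\sigma^2+t^2}$ together with the monotonicity of $D$ handles that cleanly. Both proofs are short; the paper's ratio trick is slicker structurally, while yours is more elementary and gives a better constant.
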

 
\begin{proof}
We proceed similarly as for \cref{lemma:link_sigma_f}. 
Notice that:
\[\alpha(z) = \sqrt{\frac{2}{\pi}} \frac{1}{\sqrt{\sigma^2 + z^2}}  \left [ (1 - \outproportion)
+ \outproportion \cdot \Eb{\exp{-\frac{b^2}{2(\sigma^2 + z^2)}} \ | \ b \neq 0} \right ]. \]
For $b \in \R$, let:
\[\alpha_b(z) = \sqrt{\frac{2}{\pi}} \frac{1}{\sqrt{\sigma^2 + z^2}}  \left [ (1 - \outproportion)
+ \outproportion \cdot \exp{-\frac{b^2}{2(\sigma^2 + z^2)}}  \right ], \]
so that $\alpha(z) = \Eb{\alpha_b(z) \ | \ b \neq 0} $.
We render dimensionless the analysis by letting for $\tb \in \R^*$ and $\tz \in \R$:
\[
\talpha(\tz) = \sqrt{\frac{2}{\pi}}   \frac{1}{\sqrt{1 + \tz^2}} \left [ (1 - \outproportion)  + \outproportion \cdot \exp{- \frac{\tb^2}{1 + \tz^2}} \right ].
\]

Notice that $\alpha_b(z) = \frac{1}{\sigma} \talpha(\tz) $ where $\tz = z / \sigma$ and $\tb = b / \sqrt{2} \sigma$.

Let $g(\tz) = \frac{\talpha(0)}{\talpha(\tz)}$, notice that if we upper bound $| g'(\tz) |$ by $ 20 \left ( \ln \frac{2}{1 - \outproportion} \right )$. Then by a Taylor expansion we get that $| g(\tz) - g(0) | \leq  20 \left ( \ln \frac{2}{1 - \outproportion} \right ) \tz $ which will lead to the desired result.

Quick computations lead to:
\begin{align*}
    g'(\tz) &= \frac{ (1 - \outproportion) + \outproportion \exp{-\tb^2} }{ (1 - \outproportion) + \outproportion \exp{ - \frac{\tb^2}{1 + \tz^2} }} \frac{\tz}{\sqrt{1 + \tz^2}}
    \left [ 1 - 2 \outproportion \frac{\tb^2}{1 + \tz^2}  \frac{1}{(1 - \outproportion) \exp{ \frac{\tb^2}{1 + \tz^2}} + \outproportion } \right ].
\end{align*}
Notice that: $0 \leq \frac{ (1 - \outproportion) + \outproportion \exp{-\tb^2} }{ (1 - \outproportion) + \outproportion \exp{ - \frac{\tb^2}{1 + \tz^2} }} \leq 1$ and $0 \leq \frac{\tz}{\sqrt{1 + \tz^2}} \leq 1$. 
Furthermore, from \cref{lemma:lambert}, for all $u \geq 0$: $\frac{u}{(1- p) + (1 - \outproportion) \exp{u}} \leq 9 \ln \frac{2}{1 - \outproportion}$. Hence:
\begin{align*}
    | g'(\tz) | &\leq 1 + 18 \outproportion  \ln \frac{2}{1 - \outproportion} \\
    &\leq 20 \ln{\frac{2}{1 - \outproportion}}.
\end{align*}
Therefore, for all positive $\tz$, $| g(\tz) - g(0) | \leq 20 \left ( \ln{\frac{2}{1 - \outproportion}} \right ) \tz$.
This implies that for all positive $\tz$,
$\abs{ \talpha(\tz) - \talpha(0)}  \leq  20 \left ( \ln{\frac{2}{1 - \outproportion}} \right ) \tz \talpha(\tz)$.

Now for $z, b \geq 0$ let $\tz = z / \sigma$ and $\tb = b / \sigma$:

\begin{align*}
    \abs{  \alpha_b(z) - \alpha_b(0) } &= \frac{1}{\sigma} \abs{  \talpha \left ( \tz \right )  - \talpha(0)    } \\
        &\leq  \frac{20}{\sigma} \left ( \ln{\frac{2}{1 - \outproportion}} \right ) \tz \talpha(\tz)\\
        &= 20 \left ( \ln{\frac{2}{1 - \outproportion}} \right ) \frac{z}{\sigmun} \alpha(z).
\end{align*}
Taking the expectation over $b \neq 0$ and using Jensen's inequality concludes the proof.
\end{proof}

The following lemma shows that $f$'s particular structural enables us to bound the distance between  $\btheta_n$ for any sequence $(\thet_i)^{n-1}_{i=0}$  and the minimum $\thet^*$.
\begin{lemma}\label{lemma:aveitavegrad}
Let (\ref{eq:linearmodel},  \bref{as:gf}, \bref{as:gn}, \bref{as:out}) hold. Then for any sequences $(\thet_i)^{n-1}_{i=0}\in \R^{dn}$ their average $\btheta_n=\frac{1}{n}\sum_{i=0}^{n-1} \thet_i$ satisfies:
\begin{align*} 
\textstyle
\E{ \norm{\btheta_n - \thet^*}^2_H  }
\leq \frac{2 \sigmun^2}{( 1 - \tilde{\outproportion})^2 }   \E{ \hnorm { \frac{1}{n} \sum_{i=0}^{n-1} f'(\thet_i) }^2} + \frac{800}{( 1 - \tilde{\outproportion})^2} \left ( \ln \frac{2}{1 - \outproportion} \right )^2  \E{ \left (  \frac{1}{n}\sum_{k=0}^{n-1 } \ps{ \grad(\thet_{i})}{ \thet_{i}-\thet^*} \right )^2 }.
\end{align*}
\end{lemma}

\begin{proof}

In the following inequalities, we first use that $\norm{\cdot}_{H^{-1}}$ is a norm and then use \cref{lemma:alpha} with $z = \sigmati$.
\begin{align*} 
\hnorm{\frac{1}{n} \sum_{i=0}^{n-1} f'(\thet_i) - f''(\thet^*) ( \bar{\thet}_n - \thet^* ) } 
&= \hnorm{ \frac{1}{n} \sum_{i=0}^{n-1} \left ( f'(\thet_i) - f''(\thet^*) (\thet_i - \thet^*) \right ) }   \\
&\leq  \frac{1}{n} \sum_{i=0}^{n-1} \hnorm{  f'(\thet_i) - f''(\thet^*) (\thet_i - \thet^*) }   \\
&= \frac{1}{n} \sum_{i=0}^{n-1}  \abs{\alpha (\sigmati) - \alpha(0) } \hnorm{ H (\thet_i - \thet^*) }    \\
&\leq \frac{20}{\sigmun} \left ( \ln \frac{2}{1 - \outproportion} \right ) \frac{1}{n} \sum_{i=0}^{n-1}    \alpha(\sigmati)  \sigmati^2   \\
&= \frac{20}{\sigmun} \left ( \ln \frac{2}{1 - \outproportion} \right ) \frac{1}{n}\sum_{k=0}^{n-1 } \ps{ \grad(\thet_{i})}{ \thet_{i}-\thet^*}  .
\end{align*}
Hence: 
\begin{align*} 
\hnorm{f''(\thet^*) ( \bar{\thet}_n - \thet^* ) }^2 
&\leq  2 \hnorm { \frac{1}{n} \sum_{i=0}^{n-1} f'(\thet_i) }^2 + \frac{800}{\sigmun^2} \left ( \ln \frac{2}{1 - \outproportion} \right )^2 \left (\frac{1}{n}\sum_{k=0}^{n-1 } \ps{ \grad(\thet_{i})}{ \thet_{i}-\thet^*} \right )^2 .    \\
\end{align*}
Since $f''(\thet^*) = \sqrt{\frac{2}{\pi}} \frac{1 - \tilde{\outproportion}}{\sigmun}   \ H $, we get:
\begin{align*} 
\E{ \norm{\btheta_n - \thet^*}_H^2 }
&\leq \frac{\sigmun^2}{( 1 - \tilde{\outproportion})^2 } \left (  2 \E{ \hnorm { \frac{1}{n} \sum_{i=0}^{n-1} f'(\thet_i) }^2} + \frac{800}{\sigmun^2} \left ( \ln \frac{2}{1 - \outproportion} \right )^2  \E{ \left (  \frac{1}{n}\sum_{k=0}^{n-1 } \ps{ \grad(\thet_{i})}{ \thet_{i}-\thet^*} \right )^2 } \right ),
\end{align*}
which ends the proof of the lemma.
\end{proof}

We now show that  $\norm{\bar f'(\thet_n)}^2$, the square norm of the average of the gradients, converges at rate $O(1/ n )$.
\begin{lemma}\label{lemma:pol2}
Let (\ref{eq:linearmodel}, \bref{as:gf}, \bref{as:gn}, \bref{as:out})  and consider the SGD iterates following \eq{sgd}. Assume $\gamma_n = \frac{\gamma_0}{\sqrt{n}}$. Then for all $n \geq 1$ : 
\begin{align*}\label[ineq]{eq:polyak}
\E{ \norm{\frac{1}{n}  \sum_{i=0}^{n-1} f'(\thet_i)}^{2}_{H^{-1}}}  
\leq \frac{16d}{n} & + \frac{4}{n \gamma_0^2} \E{\norm{\thet_n-\thet^*}_{H^{-1}}^2}  + \frac{4}{n^2 \gamma_0^2}  \norm{\thet_0-\thet^*}_{H^{-1}}^2 \\ 
& + \frac{4}{n^2} \left ( \sum_{i=1}^{n-1} \E{ \hnorm{\thet_i-\thet^*}^2 }^{1/2} \left ( \frac{1}{\gamma_{i+1}} - \frac{1}{\gamma_{i}} \right ) \right )^2.  
\end{align*}
\end{lemma}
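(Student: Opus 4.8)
The plan is to follow the classical Polyak--Ruppert argument of~\cite{polyak1992acceleration,bach2011non,flammarion2017stochastic}, carefully adapted to the decreasing step size $\gamma_n=\gamma_0/\sqrt n$. The first step is to separate each stochastic gradient into its conditional mean and a martingale increment. With the filtration $\mathcal F_{i-1}=\sigma((x_j,y_j)_{1\le j\le i-1})$ and the centred noise $\xi_i:=f'_i(\thet_{i-1})-f'(\thet_{i-1})$, we have $\E{\xi_i|\mathcal F_{i-1}}=0$ because $\E{f'_i(\thet_{i-1})|\mathcal F_{i-1}}=f'(\thet_{i-1})$; hence $(\xi_i)_{i\ge1}$ is a martingale difference sequence. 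Reindexing and using this splitting,
\begin{align*}
\sum_{i=0}^{n-1} f'(\thet_i)=\sum_{i=1}^{n} f'(\thet_{i-1})=\sum_{i=1}^{n} f'_i(\thet_{i-1})-\sum_{i=1}^{n}\xi_i,
\end{align*}
and the central identity is that the SGD recursion \cref{eq:sgd_bis} rewrites each stochastic gradient as the increment $f'_i(\thet_{i-1})=(\thet_{i-1}-\thet_i)/\gamma_i$.

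Second, I would apply summation by parts (Abel summation) to the telescoping-like sum $\sum_{i=1}^{n}(\thet_{i-1}-\thet_i)/\gamma_i$. Since $\thet_{i-1}-\thet_i=(\thet_{i-1}-\thet^*)-(\thet_i-\thet^*)$, writing everything in terms of $\thet_i-\thet^*$ and using $\gamma_1=\gamma_0$ produces the two boundary terms together with a correction sum,
\begin{align*}
\sum_{i=1}^{n}\frac{\thet_{i-1}-\thet_i}{\gamma_i}
=\frac{\thet_0-\thet^*}{\gamma_0}-\frac{\thet_n-\thet^*}{\gamma_n}+\sum_{i=1}^{n-1}\Big(\frac{1}{\gamma_{i+1}}-\frac{1}{\gamma_i}\Big)(\thet_i-\thet^*).
\end{align*}
The correction sum is precisely the term that vanishes for a constant step size and is the genuine by-product of working with a decreasing schedule; it is the source of the fourth term in the claimed bound.

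Third, I would combine these expressions and take the $\norm{\cdot}_{H^{-1}}^2$-expectation of $\tfrac1n\sum_{i=0}^{n-1}f'(\thet_i)$, now written as a sum of four contributions (the two boundary terms, the correction sum, and $-\sum_i\xi_i$). Applying $\norm{a+b+c+d}_{H^{-1}}^2\le 4(\norm{a}_{H^{-1}}^2+\norm{b}_{H^{-1}}^2+\norm{c}_{H^{-1}}^2+\norm{d}_{H^{-1}}^2)$, I bound each piece. For the martingale piece, orthogonality of the increments gives $\E{\norm{\sum_{i=1}^n\xi_i}_{H^{-1}}^2}=\sum_{i=1}^n\E{\norm{\xi_i}_{H^{-1}}^2}$; since $\norm{f'_i(\thet_{i-1})}_{H^{-1}}^2=x_i^\top H^{-1}x_i$ has expectation $\text{trace}(H^{-1}H)=d$ (and $\E{\norm{f'(\thet_{i-1})}_{H^{-1}}^2}\le d$ by Jensen), the crude split $\norm{\xi_i}_{H^{-1}}^2\le 2\norm{f'_i(\thet_{i-1})}_{H^{-1}}^2+2\norm{f'(\thet_{i-1})}_{H^{-1}}^2$ yields $\E{\norm{\xi_i}_{H^{-1}}^2}\le 4d$ and hence the $16d/n$ term. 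The boundary terms give directly the $\tfrac{4}{n^2\gamma_0^2}\norm{\thet_0-\thet^*}_{H^{-1}}^2$ contribution and, after substituting $1/\gamma_n^2=n/\gamma_0^2$, the $\tfrac{4}{n\gamma_0^2}\E{\norm{\thet_n-\thet^*}_{H^{-1}}^2}$ contribution. For the correction sum I would invoke Minkowski's inequality in $L^2$, namely $(\E{\norm{\sum_i c_i(\thet_i-\thet^*)}_{H^{-1}}^2})^{1/2}\le\sum_i c_i(\E{\norm{\thet_i-\thet^*}_{H^{-1}}^2})^{1/2}$ with $c_i=1/\gamma_{i+1}-1/\gamma_i>0$, giving the last term exactly.

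I expect the main obstacle to be the bookkeeping forced by the decreasing step size: the summation by parts must be carried out precisely so that the boundary terms pick up $\gamma_0$ and $\gamma_n$ correctly and the residual sum carries exactly the weights $1/\gamma_{i+1}-1/\gamma_i$. One must also resist collapsing the correction sum too early, since bounding it crudely by a single iterate norm would reintroduce a dependence that spoils the later $\mu$-free leading order term obtained when this lemma is fed into \cref{lemma:aveitavegrad}. Beyond these points, the argument is a routine application of the orthogonality of martingale differences and of Minkowski's inequality.
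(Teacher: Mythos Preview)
Your proposal is correct and follows essentially the same route as the paper: rewrite $f'(\thet_{i-1})=(\thet_{i-1}-\thet_i)/\gamma_i+\xi_i$, Abel-sum the telescoping part to produce the two boundary terms and the step-size correction sum, then split into four pieces with the factor $4$, use martingale orthogonality on the noise, and Minkowski on the correction sum. The only cosmetic difference is that you bound $\E{\hnorm{f'(\thet_{i-1})}^2}\le d$ via Jensen on the conditional expectation, whereas the paper uses the explicit form $f'(\thet)=\alpha(\sigmat)H(\thet-\thet^*)$ to get $\hnorm{f'(\thet)}^2=\alpha(\sigmat)^2\sigmat^2\le 2/\pi\le 1$; both routes land on the same $16d/n$ constant.
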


\begin{proof}
Starting from the SGD recursion for $i \geq 1$:
\begin{align*}
\thet_{i} &= \thet_{i-1} - \gamma_{i} \ f'_{i}(\theta_{i-1}) \\
&= \thet_{i-1} - \gamma_{i}  f'(\thet_i) + \gamma_{i} \eps_{i}(\thet_{i-1}),
\end{align*}
where $\eps_{i}(\thet_{i-1}) = f'(\thet_{i-1}) - \text{sgn}( \ps{x_{i}}{\thet_{i-1}} - y_{i} ) x_{i}$. Hence by rearranging we get that
$f'(\thet_{i-1}) = \frac{\delta_{i-1} - \delta_{i}}{\gamma_{i}} + \eps_{i}(\thet_{i-1})$. We sum from $1$ to $n$ to obtain:

\[
\sum_{i=0}^{n-1} f'(\thet_i) = \frac{\norm{\theta_0 - \theta^*}}{\gamma_0} -  \frac{\norm{\theta_n - \theta^*}}{\gamma_{n}} 
+  \sum_{i=1}^{n-1} \norm{\theta_i - \theta^*} \left ( \frac{1}{\gamma_{i+1}} -\frac{1}{\gamma_{i}} \right ) 
+ \sum_{i=0}^{n-1} \eps_{i+1}(\thet_i).
\]
Note that $\hnorm{ \ . \ }$ is a norm, hence:
\[
\hnorm{ \sum_{i=0}^{n-1} f'(\thet_i)} \leq \frac{1}{\gamma_0}  \hnorm{ \theta_0 - \theta^* } + \frac{1}{\gamma_{n}} \hnorm{ \theta_n - \theta^*} 
+  \sum_{i=1}^{n-1} \hnorm{ \theta_i - \theta^* } \left ( \frac{1}{\gamma_{i+1}} -\frac{1}{\gamma_{i}} \right ) 
+ \hnorm{ \sum_{i=0}^{n-1} \eps_{i+1}(\thet_i)}.
\]
Using Minkowski's inequality we obtain:
\begin{align*}
\E{\hnorm{ \sum_{i=0}^{n-1} f'(\thet_i)}^2 } &\leq \frac{4}{\gamma_0^2}  \E{\hnorm{ \theta_0 - \theta^* }^2}
+ \frac{4}{\gamma_{n}^2} \E{ \hnorm{ \theta_n - \theta^* }^2 }  \\
& \qquad  + 4 \left ( \sum_{i=1}^{n-1} \E{ \hnorm{ \theta_i - \theta^*  }^2 }^{1 /2} \left ( \frac{1}{\gamma_{i+1}} - \frac{1}{\gamma_{i}} \right ) \right )^2
+ 4 \E{ \hnorm{ \sum_{i=0}^{n-1} \eps_{i+1}(\thet_i)}^2 }.
\end{align*}
We now bound the sum of noises. Since $\E{\eps_{i+1}(\thet_i) \ | \mathcal{F}_i} = 0$, using classical martingale second moment expansions:
\begin{align*}
\E{ \norm{ \sum_{i=0}^{n-1} \eps_{i+1}(\thet_i) }^{2}_{H^{-1}}} &= \sum_{i=0}^{n-1}  \E{ \norm{ \eps_{i+1}(\thet_i) }^{2}_{H^{-1}}} \\
&\leq 2 \sum_{i=0}^{n-1}  \left ( \E{ \norm{ f'(\thet_i) }^{2}_{H^{-1}}} + \E{ \norm{x}^{2}_{H^{-1}}} \right ). 
\end{align*}
Notice that $\E{ \norm{x}^{2}_{H^{-1}}} = d$. Furthermore, since 
$f'(\thet) = \alpha(\sigmat)  \ H (\thet - \thet^*) $, we obtain
$\hnorm{ f'(\thet) }^2 = \alpha(\sigmat)^2 \sigmat^2 \leq 2 / \pi \leq 1$. Hence: 
\begin{align*}
    \E{ \norm{ \sum_{i=0}^{n-1} \eps_{i+1}(\thet_i) }^{2}_{H^{-1}}} \leq 2 n (d + 1) \leq 4 n d.
\end{align*}
This proves the lemma.
\end{proof}

\paragraph{Proof of \cref{th:cvgce_rate}.}

\begin{theorem}
\label{th:cvgce_rate2}
Let (\ref{eq:linearmodel}, \bref{as:gf}, \bref{as:gn}, \bref{as:out}) hold and consider the SGD iterates following \eq{sgd}. Assume $\gamma_n = \frac{\gamma_0}{\sqrt{n}}$. Then for all $n \geq 1$: 
\begin{align*} 
\E{ \norm{\btheta_n - \thet^*}^2_H } 
&= O\Big(\frac{\sigmun^2 d }{(1-\tilde{\outproportion})^2n}\Big) 
+\tilde{O}\Big(\frac{\norm{\thet_0-\thet^*}^4 }{\gamma_0^2 (1-\tilde{\outproportion})^2 n}\Big)
+\tilde{O}\Big(\frac{\gamma_0^2 R^4 }{(1-\tilde{\outproportion})^2n}\Big) \\
& \qquad 
+ \tilde O \Big(\frac{\sigma^2}{\gamma_0^2 \mu^2( 1-\tilde{\outproportion})^3n^{3/2}} \Big (  \frac{\norm{\thet_0-\thet^*}^2}{\gamma_0} + \gamma_0 R^2 \Big)      \Big).
\end{align*}
\end{theorem}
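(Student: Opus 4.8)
The plan is to combine the three structural results established above---the decomposition of \cref{lemma:aveitavegrad}, the Polyak--Ruppert bound of \cref{lemma:pol2}, and the sharp moment estimates of \cref{app_lemma:moment_f_bar} and \cref{lemma:upbound_sigma}---while carefully tracking which contributions survive as genuine $O(1/n)$ terms and which are demoted to higher order in $n$. The whole point of the bookkeeping is to ensure that the three dominant $O(1/n)$ terms carry no dependence on the conditioning constant $\mu$.

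First I would invoke \cref{lemma:aveitavegrad}, which reduces $\E{\norm{\btheta_n-\thet^*}^2_H}$ to a multiple of two quantities, both scaled by $1/(1-\tilde{\eta})^2$: the averaged-gradient term $\E{\hnorm{\frac1n\sum_{i=0}^{n-1} f'(\thet_i)}^2}$ and the averaged-linear-approximation term $\E{(\frac1n\sum_{i=0}^{n-1} \ps{f'(\thet_i)}{\thet_i-\thet^*})^2}$. The second of these is handled immediately by the second-moment estimate of \cref{app_lemma:moment_f_bar}: its right-hand side, once divided by $n^2$, produces exactly the $\tilde O(\norm{\thet_0-\thet^*}^4/(\gamma_0^2 n))$ and $\tilde O(\gamma_0^2 R^4/n)$ contributions, which after the $1/(1-\tilde{\eta})^2$ prefactor become the second and third terms of the theorem.

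Next I would bound the averaged-gradient term by \cref{lemma:pol2} and dispatch its four summands in turn. The leading $16d/n$ piece, multiplied by $2\sigmun^2/(1-\tilde{\eta})^2$, gives the dominant variance term $O(\sigmun^2 d/((1-\tilde{\eta})^2 n))$. The term $\frac{4}{n^2\gamma_0^2}\norm{\thet_0-\thet^*}^2_{H^{-1}}$ is manifestly $O(1/n^2)$ and is absorbed. For the two remaining summands---namely $\frac{4}{n\gamma_0^2}\E{\norm{\thet_n-\thet^*}^2_{H^{-1}}}$ and the telescoping sum involving $\E{\hnorm{\thet_i-\thet^*}^2}^{1/2}(\gamma_{i+1}^{-1}-\gamma_i^{-1})$---I would pass from $\norm{\cdot}_{H^{-1}}$ to $\norm{\cdot}_H$ through $\norm{v}^2_{H^{-1}}\le \mu^{-2}\norm{v}^2_H$ and then insert the sharp last-iterate estimate $\E{\norm{\thet_i-\thet^*}^2_H}=\tilde O(\sigmun/((1-\tilde{\eta})\sqrt i))$ from \cref{lemma:upbound_sigma}. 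Combining this with $\gamma_{i+1}^{-1}-\gamma_i^{-1}\le 1/(2\gamma_0\sqrt i)$ and the Riemann-sum estimate $\sum_{i<n} i^{-3/4}\sqrt{\ln i}=\tilde O(n^{1/4})$, both summands collapse to a single $n^{-3/2}$ correction, which after the prefactors gives the last, higher-order term of the theorem.

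The main obstacle is exactly this last step. The naive second-moment bound $\E{\norm{\thet_i-\thet^*}^2}\le \norm{\thet_0-\thet^*}^2+\gamma_0^2 R^2\ln(e i)$ of \cref{app_lemma:moment_theta} would, once fed through the $\mu^{-1}$ conversion, leave both the $\frac{4}{n\gamma_0^2}\E{\norm{\thet_n-\thet^*}^2_{H^{-1}}}$ piece and the telescoping sum stuck at order $1/(\mu n)$, contaminating the leading rate with a dependence on $\mu$. Avoiding this is precisely what forces the use of the $\sqrt{n}$-decaying estimate of \cref{lemma:upbound_sigma}: its extra factor of $i^{-1/2}$ demotes these contributions to order $n^{-3/2}$, so that the three dominant $O(1/n)$ terms stay free of $\mu$, as asserted. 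Assembling the four pieces and carrying the logarithmic factors through then yields the stated bound.
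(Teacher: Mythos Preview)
Your proposal is correct and matches the paper's argument essentially step for step: invoke \cref{lemma:aveitavegrad}, control the averaged-linear-approximation term with the second-moment bound of \cref{app_lemma:moment_f_bar}, bound the averaged-gradient term via \cref{lemma:pol2}, and---crucially---feed the sharp last-iterate estimate $\E{\norm{\thet_i-\thet^*}^2_H}=\tilde O(\sigmun/((1-\tilde\eta)\sqrt{i}))$ of \cref{lemma:upbound_sigma} (after the $\mu^{-2}$ conversion $\hnorm{v}^2\le\mu^{-2}\norm{v}^2_H$) into the two residual terms so they drop to order $n^{-3/2}$. Your identification of the obstacle---that the naive bound of \cref{app_lemma:moment_theta} would leave a $1/(\mu n)$ contamination in the leading rate---is exactly the point emphasised in the paper's sketch and appendix.
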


\begin{proof}

To prove the final result it remains to upper bound $\frac{1}{\gamma_n} \E{ \hnorm{ \theta_n - \theta^* }^2 }$ and $\sum_{i=1}^{n-1} \E{ \hnorm{ \theta_i - \theta^*  }^2 }^{1 /2} \left ( \frac{1}{\gamma_{i+1}} - \frac{1}{\gamma_{i}} \right )$ in \cref{lemma:pol}.

To do so we upper bound $\E{ \hnorm {\theta_i - \theta^* }^{2} }$ by $\frac{1}{\mu^2} \E{ \norm {\theta_i - \theta^*}_H^{2} }$ which can be upper bounded using \cref{lemma:upbound_sigma}:
\begin{align*}
    \E{ \norm {\theta_i - \theta^* }_H^{2}} &\leq \MACROsigmasquare{i} \\
     &\leq A_n \frac{1}{\sqrt{i}} + B_n \frac{1}{i},
\end{align*}
where $A_n = \frac{4 \sigma}{1 - \outproportion} \ln(en) \Big [ \frac{3 \norm{\theta_0 - \theta^*}^2}{\gamma_0} + 4 \gamma_0 R^2 \ln(en) \Big ]$,  and $B_n = \frac{80}{(1 -\tilde{\outproportion})^2} \ln^2(en) \Big  [ 4 \frac{\norm{\thet_0-\thet^*}^2}{\gamma_0} + 20 \gamma_0 \boundgrad \ln en \Big ]^2 $. Therefore:
\begin{align*}
\sum_{i=1}^{n-1} \E{ \hnorm{ \theta_i - \thet^*}^2 }^{1/2} \left ( \frac{1}{\gamma_{i+1}} - \frac{1}{\gamma_{i}} \right ) &\leq
\frac{1}{2 \mu \gamma_0} \sum_{i=1}^{n-1}   \left ( \sqrt{A_n} \frac{1}{i^{1/4}} + \sqrt{B_n} \frac{1}{ \sqrt{i}} \right) \frac{1}{\sqrt{i}} \\
&\leq \frac{1}{2 \mu \gamma_0} \left (4 \sqrt{A_n} n^{1/4} + \sqrt{B_n} \ln e n  \right ),
\end{align*}
and:
\[  \frac{1}{\gamma_{n}^2} \E{ \hnorm{  \delta_n }^2 } \leq  \frac{1}{\mu^2 \gamma_0^2} \left ( A_n \sqrt{n} + B_n  \right ). \]
We can then re-inject these bounds into \cref{lemma:pol}:
\begin{align*}
\E{ \norm{ \frac{1}{n} \sum_{i=0}^{n-1} f'(\thet_i)}^{2}_{H^{-1}}}  
&\leq \frac{16 d}{n} + \frac{4}{\mu n^2 \gamma_0^2} \norm{\theta_0 - \thet^*}^2 +  \frac{4}{\mu^2 n^2 \gamma_0^2} \left ( A_n \sqrt{n} + B_n  \right ) + \frac{4}{\mu^2 n^2 \gamma_0^2} \left (4 A_n \sqrt{n} + B_n \ln^2 e n  \right )      \\
&\leq \frac{16 d}{n} + \frac{4}{\mu n^2 \gamma_0^2} \norm{\theta_0 - \thet^*}^2 +  \frac{4}{\mu^2 n^2 \gamma_0^2} \left (5 A_n \sqrt{n} + 2 B_n \ln^2 e n \right ) \\
&\leq \frac{16 d}{n} + \frac{4}{\mu n^2 \gamma_0^2} \norm{\theta_0 - \thet^*}^2 \\
& \qquad  + \frac{4}{\mu^2 n^2 \gamma_0^2} \Bigg [ 5 \MACROAn{n} \sqrt{n} \\
& \qquad + 2 \MACROB \ln^2 e n \Bigg ] \\
&\leq \frac{16 d}{n} + \frac{4}{\mu n^2 \gamma_0^2} \norm{\theta_0 - \thet^*}^2  + \frac{a_1(n)}{\mu^2  (1 - \tilde{\outproportion}) n^{3/2}} + \frac{a_2(n)}{\mu^2  (1 - \tilde{\outproportion})^2 n^{2}},
\end{align*}

where $a_1(n) = \frac{80 \sigmun \ln en}{\gamma_0^2} \left [ \frac{3 \norm{\thet_0 - \thet^*}^2}{\gamma_0} + 4 \gamma_0 \boundgrad \ln(en) \right ]$
and $a_2(n) = \frac{640 \ln^4 en}{\gamma_0^2}  \left [ \frac{4 \norm{\thet_0 - \thet^*}^2}{\gamma_0} + 20 \gamma_0 \boundgrad \ln(en) \right ]^2 $.

We can now inject this bound along with \cref{app_lemma:moment_f_bar} in \cref{lemma:aveitavegrad}:

\begin{align*} 
\E{ \norm{\btheta_n - \thet^*}^2_H  }
&\leq \frac{2 \sigmun^2}{( 1 - \tilde{\outproportion})^2 }   \E{ \hnorm { \frac{1}{n} \sum_{i=0}^{n-1} f'(\thet_i) }^2} + \frac{800}{( 1 - \tilde{\outproportion})^2} \left ( \ln \frac{2}{1 - \outproportion} \right )^2  \E{ \left (  \frac{1}{n}\sum_{k=0}^{n-1 } \ps{ \grad(\thet_{i})}{ \thet_{i}-\thet^*} \right )^2 } \\
&\leq \frac{32 \sigma^2 d}{( 1 - \tilde{\outproportion})^2 n} + \frac{8 \sigma^2}{\mu ( 1 - \tilde{\outproportion})^2 n^2 \gamma_0^2} \norm{\theta_0 - \thet^*}^2  + \frac{2 \sigma a_1(n)}{\mu^2  (1 - \tilde{\outproportion})^3 n^{3/2}} + \frac{2 \sigma a_2(n)}{\mu^2  (1 - \tilde{\outproportion})^4 n^{2}} \\
& \qquad + \frac{800}{( 1 - \tilde{\outproportion})^2} \left ( \ln \frac{2}{1 - \outproportion} \right )^2  \MACROfbarsquare{n} \\
&\leq \frac{32 \sigma^2 d}{( 1 - \tilde{\outproportion})^2 n} + \frac{1600}{( 1 - \tilde{\outproportion})^2} \left ( \ln \frac{2}{1 - \outproportion} \right )^2 \frac{\ln en}{n} \left [ \frac{\norm{\theta_0 - \thet^*}^4}{\gamma_0^2} + 36 \gamma_0^2 R^4 \ln^2 (en) \right ] \\
& \qquad  + \frac{2 \sigma a_1(n)}{\mu^2  (1 - \tilde{\outproportion})^3 n^{3/2}} + \frac{2 \sigma a_2(n)}{\mu^2  (1 - \tilde{\outproportion})^4 n^{2}}
+ \frac{8 \sigma^2}{\mu ( 1 - \tilde{\outproportion})^2 n^2 \gamma_0^2} \norm{\theta_0 - \thet^*}^2 \\
&= O\Big(\frac{\sigmun^2 d }{(1-\tilde{\outproportion})^2n}\Big) 
+\tilde{O}\Big(\frac{\norm{\thet_0-\thet^*}^4 }{\gamma_0^2 (1-\tilde{\outproportion})^2 n}\Big)
+\tilde{O}\Big(\frac{\gamma_0^2 R^4 }{(1-\tilde{\outproportion})^2n}\Big) \\
& \qquad 
+ \tilde O \Big(\frac{\sigma^2}{\gamma_0^2 \mu^2( 1-\tilde{\outproportion})^3n^{3/2}} \Big (  \frac{\norm{\thet_0-\thet^*}^2}{\gamma_0} + \gamma_0 R^2 \Big)      \Big).
\end{align*}

\end{proof}

\paragraph{Remark.}

Notice that we could have also bounded $\E{ \hnorm {\theta_{i+1} - \theta^* }^{2} }$ differently, since from \cref{eq:sgd}:

\begin{align*} 
\E{ \hnorm {\theta_{i+1} - \theta^* }^{2} }
&= \E{ \hnorm {\theta_i - \theta^* }^{2} } + \gamma_{i+1}^2 d - \gamma_{i+1} \ps{\theta_i - \theta^* }{f'(\thet_i)}_{H^{-1}}
\end{align*}

Notice that 
$\ps{ \theta_i - \theta^* }{f'(\thet_i)}_{H^{-1}} =\alpha(\sigmati) \norm{ \theta_i - \theta^* }^2  \geq 0$. 
Thus 
$ \E{ \hnorm { \theta_{i + 1} - \theta^* }^{2} } \leq  \E{ \hnorm { \theta_i - \theta^* }^{2}} + \gamma_{i+1}^2 d  $ 
and 
$ \E{ \hnorm { \theta_i - \theta^* }^{2} }^{1/2} \leq  \E{ \hnorm { \theta_0 - \theta^* }^{2}}^{1/2} + \Cc \sqrt{d} \ \sqrt{\ln (e i)} $.

Hence:
\begin{align*}
\sum_{i=1}^{n-1} \E{ \hnorm{ \theta_i - \theta^* }^2 }^{1/2} \left ( \frac{1}{\gamma_{i+1}} - \frac{1}{\gamma_{i}} \right ) 
&\leq  \E{ \hnorm{ \theta_0 - \theta^* }^2 }^{1/2} \left ( \frac{1}{\gamma_{n}} - \frac{1}{\gamma_0} \right )  \\
\quad & + \sqrt{d} \sum_{i=1}^{n-1} \sqrt{\ln (e i)} ( \sqrt{i+1} - \sqrt{i}) \\
&\leq \frac{\E{ \hnorm{ \theta_0 - \theta^* }^2 }^{1/2}}{\gamma_{n}} + \sqrt{d \ln(e n) n}.
\end{align*}
This leads to a simpler upperbound on $\E{ \norm{ \frac{1}{n} \sum_{i=0}^{n-1} f'(\thet_i)}^{2}_{H^{-1}}}$:

\begin{align*}
\E{ \norm{ \frac{1}{n} \sum_{i=0}^{n-1} f'(\thet_i)}^{2}_{H^{-1}}}  
&\leq \MACROfprimebarsquareone{n} \\ 
\end{align*}
The bias term is here $O(1 / \mu n)$ instead of $O(1 / \mu^2 n^{3/2})$.

\section{Technical lemmas.}

In this section we prove a few technical lemmas. The first lemma is useful for the proof of \cref{lemma:alpha}.
\begin{lemma}
\label{lemma:lambert}
For all $\outproportion \in [0, 1)$ and $u \geq 0$:
\[
\frac{u}{\outproportion + (1 - \outproportion) \exp{u}} \leq 9 \ln \frac{2}{1 - \outproportion}.
\]
\end{lemma}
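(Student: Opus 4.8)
The plan is to recast the claim as the nonnegativity of a single convex function. Writing $C := 9\ln\frac{2}{1-\outproportion}$ and clearing the (positive) denominator, the inequality is equivalent to
\[
F(u) := C\bigl(\outproportion + (1-\outproportion)e^{u}\bigr) - u \;\ge\; 0 \qquad \text{for all } u \ge 0 .
\]
Since $F''(u) = C(1-\outproportion)e^{u} > 0$, the function $F$ is strictly convex. It therefore suffices to show that its global minimum over all $u \in \R$ is nonnegative; this is slightly stronger than what we need on $[0,\infty)$, but it avoids any case distinction on the sign of the minimizer and keeps the argument clean.

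First I would locate the minimizer $u_0$ by solving $F'(u_0) = C(1-\outproportion)e^{u_0} - 1 = 0$, that is $C(1-\outproportion)e^{u_0} = 1$, which gives $u_0 = -\ln\bigl(C(1-\outproportion)\bigr)$. Substituting back and using the critical-point identity to simplify the exponential term, the minimum collapses to the closed form $F(u_0) = C\outproportion + 1 + \ln\bigl(C(1-\outproportion)\bigr)$. The remaining task is purely to verify that this quantity stays positive uniformly over $\outproportion \in [0,1)$.

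Next I would substitute $q := 1-\outproportion \in (0,1]$ and $M := \ln\frac{2}{q}$, so that $C = 9M$, $\ln q = \ln 2 - M$, and crucially $q = 2e^{-M}$. A short computation then turns the minimum into $F(u_0) = 8M - 18Me^{-M} + \ln M + 1 + \ln 18$. The only negative contribution is $-18Me^{-M}$, which I would control with the uniform bound $Me^{-M} \le 1/e$ (the maximum of $x \mapsto xe^{-x}$), giving $F(u_0) \ge 8M + \ln M + 1 + \ln 18 - 18/e$. Since $8M + \ln M$ is increasing in $M$ and the range $q \in (0,1]$ forces $M \ge \ln 2$, it only remains to check the single numerical inequality $8\ln 2 + \ln(\ln 2) + 1 + \ln 18 - 18/e > 0$, which holds with a comfortable margin (about $2.45$). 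I expect the main obstacle to be nothing deep but rather the bookkeeping of this substitution together with confirming the final scalar inequality; the genuine content is the reduction to a one–dimensional convex minimization and the observation that the lone negative term is tamed by the universal bound $xe^{-x} \le 1/e$, which is exactly where the generous constant $9$ and the factor $2$ inside the logarithm buy the needed slack.
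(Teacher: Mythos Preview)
Your proof is correct and is genuinely different from the paper's argument. The paper maximises $h(u)=\dfrac{u}{\outproportion+(1-\outproportion)e^{u}}$ directly: it solves $h'(u_c)=0$, observes that $(u_c-1)e^{u_c-1}=\tfrac{1}{e}\bigl(\tfrac{1}{1-\outproportion}-1\bigr)$, expresses $u_c-1$ through the Lambert $W$ function, and then bounds $h(u_c)=(u_c-1)/\outproportion$ via the standard estimates $W(x)\le\ln x$ for $x\ge 1$ and $W(x)\le x$ for $x\ge 0$, which forces a case split on the size of $1-\outproportion$ (together with a separate treatment of $\outproportion=0$). Your route instead clears the denominator and minimises the convex function $F(u)=C(\outproportion+(1-\outproportion)e^{u})-u$ over all of $\R$; the minimiser is explicit, the Lambert function never appears, and after the substitution $q=1-\outproportion$, $M=\ln(2/q)$ the whole claim collapses to a single scalar inequality at $M=\ln 2$ thanks to the uniform bound $Me^{-M}\le 1/e$. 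The upshot is that your argument is more elementary and avoids any case distinction, at the price of one numerical check; the paper's argument, by contrast, makes the structure of the maximiser visible through the Lambert function and in principle gives slightly sharper intermediate bounds in each regime.
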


\begin{proof}
For $(1 - \outproportion) \in (0, 1)$ let $h(u) = \frac{u}{\outproportion + (1 - \outproportion)\exp{u}} $. $h$ has a unique maximum on $\R_+$ which is attained in $u_c$ such that $h'(u_c) = 0$. This is equivalent to
$\outproportion + (1 - \outproportion) \exp{u_c} = (1 - \outproportion) \exp{u_c} u_c $ and $(1 - \outproportion) \exp{u_c} = \frac{\outproportion}{u_c - 1}$.
Hence for all $u \geq 0$, $h(u) \leq h(u_c) = \frac{1}{(1 - \outproportion) \exp{u_c}} = \frac{u_c - 1}{\outproportion} $.
Furthermore, $(u_c - 1) \exp{u_c - 1}  = \frac{1}{e} ( \frac{1}{1 - \outproportion} - 1)$. Therefore $u_c - 1 = W \left ( \frac{1}{e} \left ( \frac{1}{1 - \outproportion} - 1 \right )     \right ) $ where $W$ is the Lambert function and $h(u_c) = \frac{ W \left ( \frac{1}{e} ( \frac{1}{1 - \outproportion} - 1)     \right ) }{\outproportion}$. Classical results on the Lambert function give that for $x \geq 1$, $W(x) \leq \ln{x}$ and for $x \geq 0$, $W(x) \leq x$. 

Hence for $(1 - \outproportion) \in ( 0, \frac{1}{1 + e^2} ]$, then  $\frac{1}{e} ( \frac{1}{1 - \outproportion} - 1) \geq 1 $ therefore $ W \left ( \frac{1}{e} ( \frac{1}{1 - \outproportion} - 1) \right ) \leq \ln{ \frac{1}{e} ( \frac{1}{1 - \outproportion} - 1) } \leq \ln{\frac{2}{1 - \outproportion}} $ and $h(u_c) \leq \frac{1}{\outproportion} \ln{\frac{2}{1 - \outproportion}} = \left ( \frac{1}{e^2} + 1 \right ) \ln{\frac{2}{1 - \outproportion}} \leq 9 \ln \frac{2}{1 - \outproportion} $.

For  $(1 - \outproportion) \in  [ \frac{1}{1 + e^2}, 1   )$, $W \left ( \frac{1}{e} ( \frac{1}{1 - \outproportion} - 1) \right ) \leq \frac{1}{e} ( \frac{1}{1 - \outproportion} - 1) $ and $h(u_c) \leq \frac{1}{(1 - \outproportion)  e} \leq \frac{1 + e^2}{e} \leq 9 \ln{2} \leq 9 \ln \frac{2}{1 - \outproportion}$.

For $\outproportion = 0$, $u_c = 1$, $h(u_c) = e^{-1}$ and the inequality still holds.
\end{proof}

The two following lemmas are used in \cref{lemma:link_sigma_f}.
\begin{lemma}\label{app_lemma:inequality_erf}
For all $x \geq 0$ :

\begin{align*}
 \frac{\exp{-x^2}}{5}   \leq \frac{\sqrt{2} - 1}{\sqrt{\pi}} \exp{-x^2} \leq x \left ( \erf{ \frac{x}{\sqrt{2}} } - \erf{ x } \right ) + \sqrt{\frac{2}{\pi}} \exp{- \frac{x^2}{2} } 
-  \frac{1}{\sqrt{\pi}} \exp{- x^2 }.  
\end{align*}

\end{lemma}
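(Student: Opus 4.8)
The first inequality is only the numerical fact $\tfrac15 \le \tfrac{\sqrt2-1}{\sqrt\pi}$ (indeed $\sqrt\pi \approx 1.77 \le 5(\sqrt2-1)\approx 2.07$) multiplied through by $\exp{-x^2}\ge 0$, so all the content sits in the second inequality. Writing $r(x)$ for its right-hand side, I first note that $r = \tf(1) - \tf(0)$ regarded as a function of $\toutl = x$, and that the naive routes are dead ends: the gap $r(x) - \tfrac{\sqrt2-1}{\sqrt\pi}\exp{-x^2}$ vanishes together with its first derivative at $x=0$, but it is neither convex nor monotone on $[0,\infty)$ — one checks its second derivative changes sign near $x \approx 1.3$, and $r$ rises before decaying back to $0^{+}$ — so no single convexity or sign-of-derivative argument closes the claim.

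The plan is instead to produce an integral representation of $r$ that makes the comparison pointwise. I would first differentiate $r$ directly: the polynomial-times-error-function term and the two Gaussian terms telescope, leaving the clean expression $r'(x) = \erf{x/\sqrt2} - \erf{x}$, while $r(0) = \sqrt{2/\pi} - \tfrac1{\sqrt\pi} = \tfrac{\sqrt2-1}{\sqrt\pi}$. Next, via the substitution $t = x u^{-1/2}$ I would rewrite $\erf{x/\sqrt2} - \erf{x} = \tfrac{2}{\sqrt\pi}\int_x^{x/\sqrt2}\exp{-t^2}\,dt = -\tfrac{x}{\sqrt\pi}\int_1^2 u^{-3/2}\exp{-x^2/u}\,du$, and recognise the right-hand side as $\tfrac{d}{dx}\big[\tfrac{1}{2\sqrt\pi}\int_1^2 u^{-1/2}\exp{-x^2/u}\,du\big]$. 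Since this candidate also equals $\tfrac{\sqrt2-1}{\sqrt\pi}$ at $x=0$, matching value and derivative gives $r(x) = \tfrac{1}{2\sqrt\pi}\int_1^2 u^{-1/2}\exp{-x^2/u}\,du$ for all $x$. (This is just the fundamental-theorem expression $\tf(1)-\tf(0) = \int_0^1 \tfrac{\tsigma}{\sqrt{\pi(1+\tsigma^2)}}\exp{-x^2/(1+\tsigma^2)}\,d\tsigma$ after substituting $u = 1+\tsigma^2$, which is the cleaner way to discover it.)

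With this representation the inequality is immediate: for every $u\in[1,2]$ we have $u\ge 1$, hence $x^2/u \le x^2$ and $\exp{-x^2/u}\ge\exp{-x^2}$, so integrating the pointwise bound $u^{-1/2}\exp{-x^2/u}\ge u^{-1/2}\exp{-x^2}$ over $[1,2]$ and using $\tfrac{1}{2\sqrt\pi}\int_1^2 u^{-1/2}\,du = \tfrac{\sqrt2-1}{\sqrt\pi}$ yields $r(x)\ge\tfrac{\sqrt2-1}{\sqrt\pi}\exp{-x^2}$, which is the second inequality; the first then follows from the numerical remark above. The hard part is therefore entirely in the first step: the explicit function $r$ is analytically unwieldy and derivative-sign arguments are inconclusive, whereas expressing $r$ as an average of the exponentials $\exp{-x^2/u}$ over $u\ge 1$ reduces the whole statement to the one-line observation that every such exponential dominates $\exp{-x^2}$.
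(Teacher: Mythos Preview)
Your proof is correct and takes a genuinely different route from the paper's. The paper defines the gap $h(x)=r(x)-\tfrac{\sqrt2-1}{\sqrt\pi}\exp(-x^2)$ and closes the inequality by a second-derivative sign analysis: it computes $h''$ explicitly, observes that its zeros are the intersections of an exponential with an upward parabola (hence exactly two, say $x_1<x_2$), so $h''$ is positive--negative--positive; combined with $h'(0)=0$ and $h'(\infty)=0$ this forces $h'$ to be positive--negative with a single zero, and then $h(0)=0$, $h(\infty)=0$ give $h\ge 0$. So your remark that ``no single convexity or sign-of-derivative argument closes the claim'' is a little too pessimistic: the paper does exactly such an argument, only it has to climb to $h''$ and count roots.

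Your integral representation $r(x)=\tfrac{1}{2\sqrt\pi}\int_1^2 u^{-1/2}\exp(-x^2/u)\,du$ (equivalently $\tf(1)-\tf(0)=\int_0^1\tf'(\tsigma)\,d\tsigma$ after $u=1+\tsigma^2$) is cleaner and more informative: it exhibits $r$ as a positive mixture of Gaussians $\exp(-x^2/u)$ with $u\ge 1$, each of which dominates $\exp(-x^2)$, so the bound is immediate and tight at $x=0$ by construction. The paper's approach is pure calculus and needs no integral identity, but requires tracking signs through two layers of differentiation; yours trades that bookkeeping for one substitution and yields a structural explanation of the inequality.
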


\begin{proof}
Let $h(x) = x \Big (\erf{\frac{x}{\sqrt{2}}} - \erf{x} \Big ) + \sqrt{2/\pi} \left ( \exp{- \frac{x^2}{2}} - \exp{-x^2} \right )$. We show that 
$h(x) \geq 0$ which proves the lemma. 
We compute the first and second derivative of $h$, which leads to 
\[ h'(x) = \erf{\frac{x}{\sqrt{2}}} - \erf{x} + x \exp{-x^2}\frac{2}{\sqrt{\pi}} (\sqrt{2} - 1),  \]
\[ h''(x) = \sqrt{\frac{2}{\pi}} \exp{- x^2 } \left [ \exp{\frac{x^2}{2}}  - \sqrt{2} + \sqrt{2}(\sqrt{2}-1)(1-2x^2)  \right ]. \]
Notice that the zeros of $h''$ on $\R_+$ correspond to the intersection of an exponential and an upward parabola: there are only $2$ which we call $0 < x_1 < x_2 $. Also note that $h''$ is strictly positive on $(0, x_1) \cup (x_2, + \infty)$ and strictly negative on $(x_1, x_2)$. Since $h'(0) = 0$ and $h' \underset{x \to + \infty}{\to} 0$ we have that $h'$ has only one zero on $R_+^*$ which we denote $x_c$ and: $h'$ is positive on $[0, x_c]$, negative on $[x_c, + \infty)$. 
Since $h(0) = 0$ and $h \underset{x \to + \infty}{\to} 0$ we conclude that $h$ is positive on $\R_+$.

\end{proof}

\begin{lemma}\label{technical_lemma:2}
Let $\tf(\tsigma)  = \toutl \erf{ \frac{\toutl}{\sqrt{1 + \tsigma^2}} } + \frac{1}{\sqrt{\pi}} \sqrt{1 + \tsigma^2} \exp{- \frac{\toutl^2}{1 + \tsigma^2} }$.
For all $\tb \in \R$ and $\tsigma \leq 1$, 
 \[  \frac{\tsigma^2}{5}  \exp{- \toutl^2} \leq \frac{\sqrt{2} - 1}{\sqrt{\pi}} \tsigma^2 \exp{- \toutl^2} \leq \tf(\tsigma) - \tf(0). \]
\end{lemma}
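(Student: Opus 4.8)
The plan is to first dispatch the left inequality as a numerical fact: since $\sqrt{2}-1 \approx 0.414$ and $\sqrt{\pi}\approx 1.77$, one has $\tfrac15 \le \tfrac{\sqrt2-1}{\sqrt\pi}$, so $\tfrac{\tsigma^2}{5}\exp{-\toutl^2}\le \tfrac{\sqrt2-1}{\sqrt\pi}\tsigma^2\exp{-\toutl^2}$ holds trivially. Only the right inequality carries content. I observe that $\tf$ depends on $\toutl$ and $\tsigma$ only through $\toutl^2$ and $\tsigma^2$, so it suffices to work with $t:=\tsigma^2\in[0,1]$; I set $\phi(t):=\tf(\sqrt t)$ and $c:=\tfrac{\sqrt2-1}{\sqrt\pi}\exp{-\toutl^2}$, and the goal becomes $\phi(t)-\phi(0)\ge c\,t$ on $[0,1]$.

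The enabling computation is differentiating $\phi$: after expanding $\tfrac{d}{dt}$ of both the $\erf$ term and the $\sqrt{1+t}\,\exp{\cdot}$ term, the contributions carrying $\toutl^2$ cancel and one is left with the clean expression
\[
\phi'(t)=\frac{1}{2\sqrt\pi}\,(1+t)^{-1/2}\exp{-\frac{\toutl^2}{1+t}}.
\]
From here I introduce $g(t):=\phi(t)-\phi(0)-c\,t$ and aim to show $g\ge 0$ on $[0,1]$. Three facts are immediate: $g(0)=0$; $g'(0)=\big(\tfrac12-(\sqrt2-1)\big)\tfrac{\exp{-\toutl^2}}{\sqrt\pi}>0$ since $\tfrac12>\sqrt2-1$; and $g(1)\ge 0$, because $\phi(1)-\phi(0)=\tf(1)-\tf(0)$ is exactly the right-hand side of \cref{app_lemma:inequality_erf} evaluated at $x=\toutl$, which that lemma lower-bounds by $c$.

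It remains to control the shape of $g$ between the endpoints. Differentiating $\phi'$ once more shows that $\phi''(t)$ has the sign of $\tfrac{\toutl^2}{1+t}-\tfrac12$, which is decreasing in $t$; hence $\phi'$ is either monotonically decreasing (when $\toutl^2\le\tfrac12$) or first increasing then decreasing (when $\toutl^2>\tfrac12$) on $[0,\infty)$. In either case $g'=\phi'-c$ is unimodal in the increase-then-decrease sense, and combined with $g'(0)>0$ this forces $g'$ to be positive up to at most a single crossing and negative thereafter. Consequently $g$ is increasing then (possibly) decreasing, so its minimum over the closed interval $[0,1]$ is attained at an endpoint; as both $g(0)=0$ and $g(1)\ge0$, I conclude $g\ge 0$, which is the desired inequality.

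The main obstacle is that the natural ``difference-quotient is monotone'' shortcut fails here: unlike in the $\tsigma\ge 1$ regime of \cref{lemma:link_sigma_f}, the map $\phi$ is not convex in $t$ across all values of $\toutl$ (it is concave for small $\toutl$ and convex for large $\toutl$), so one cannot simply bound $\tfrac{\phi(t)-\phi(0)}{t}$ by its value at $t=1$. The sign analysis of $\phi''$ together with the strict positivity $g'(0)>0$ is precisely what rescues the argument, reducing the interior behaviour to a single-crossing statement and letting \cref{app_lemma:inequality_erf} supply the endpoint $t=1$.
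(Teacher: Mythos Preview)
Your proof is correct and follows essentially the same approach as the paper: substitute $t=\tsigma^2$, define the difference function (your $g$ is the paper's $h(\cdot)-h(0)$), compute $\phi'(t)=\tfrac{1}{2\sqrt\pi}(1+t)^{-1/2}\exp{-\toutl^2/(1+t)}$ and the sign of $\phi''$, and reduce to endpoint control with $g(1)\ge0$ supplied by \cref{app_lemma:inequality_erf}. The only cosmetic difference is that the paper splits into three explicit cases on $|\toutl|$ (namely $|\toutl|\ge1$, $|\toutl|\in[1/\sqrt2,1]$, $|\toutl|\le1/\sqrt2$) and in the middle case additionally verifies $h'(1)\ge0$, whereas your unimodality argument for $g'$ handles all cases at once and bypasses that check.
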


\begin{proof}
Let $\tb \in \R$ and consider for $x \in [0, 1]$,
$h(x) = \toutl \erf{ \frac{\toutl}{\sqrt{1 + x}} } + \frac{1}{\sqrt{\pi}} \sqrt{1 + x} \exp{- \frac{\toutl^2}{1 + x} } - \frac{\sqrt{2} - 1}{\sqrt{\pi}} x \exp{- \toutl^2}$.
Then:
$h'(x) = \frac{1}{2 \sqrt{\pi} \sqrt{1 + x}} \exp{- \frac{\toutl^2}{1 + x} } - \frac{\sqrt{2} - 1}{\sqrt{\pi}} \exp{- \toutl^2}$.
We have $h''(x) =  \frac{\exp{-\frac{\tb^2}{1 + x}}}{4\sqrt{\pi} (1 + x)^{5/2}} ( 2 \tb^2 -(1+ x))$.
\begin{itemize}
    \item Therefore if $|\tb| \geq 1$ then $h''(x) \geq 0 $ on $[0, 1]$ and $h'$ is increasing on $[0, 1]$. Therefore $h'(x)\geq h'(0) =(\frac{1}{2 \sqrt{\pi}}- \frac{\sqrt{2} - 1}{\sqrt{\pi}} )\exp{- \toutl^2}>0 $. Hence $h$ is increasing and $h(x) \geq h(0)$.

\item If $|\tb| \in [1/\sqrt{2},1 ]$, then for $x_0=2\tb^2-1$, $h''(x_0)=0$, $h'$ is increasing then decreasing and $h'(x)\geq \min\{h'(0),h'(1)\}\geq 0$. Note that for $|\tb| \geq 1/\sqrt{2}$, $h'(1)\geq 0$. Hence for all $x \in [0, 1]$, $h'(x) \geq 0$, therefore $h$ is increasing and $h(x) \geq h(0)$.

\item Finally if $|\tb| \leq  1/\sqrt{2}$ then  $h''(x) \leq 0 $. Therefore $h$ is concave on $[0, 1]$ and $h(x)\geq \min\{h(0),h(1)\}$.  
However notice that by~\cref{app_lemma:inequality_erf} we have that $h(0)\leq h(1)$. Therefore $h(x) \geq h(0)$ on $[0, 1]$. 
\end{itemize}
Hence in all cases $h(x) \geq h(0)$ on $[0, 1]$. Considering $x = \tsigma^2$ concludes the proof.
\end{proof}

This final lemma is required \cref{app_lemma:shamir}.
\begin{lemma}\label{app_lemma:technical_riemann}
\[  \frac{1}{n} \sum_{t=2}^{n-1} \frac{ 1  }{(\frac{t}{n})^2} \left ( (1 - \frac{t}{n})^{-1/2} - 1 \right )  \leq 3 \ln e n .  \] 
\end{lemma}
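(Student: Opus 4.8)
The plan is to read the left-hand side as a Riemann sum for the function $g(x) = \big((1-x)^{-1/2} - 1\big)/x^2$ on the grid $x = t/n$, but to control it through an elementary per-term bound rather than a direct integral comparison: $g$ has an integrable singularity as $x \to 1$ (i.e. $t \to n$) that makes endpoint or monotonicity estimates awkward, so a pointwise majorant is cleaner.

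First I would establish the inequality $g(x) \le 1/\big(x\sqrt{1-x}\big)$ on $(0,1)$. The key algebraic step is the factorisation
\[
(1-x)^{-1/2} - 1 = \frac{1 - \sqrt{1-x}}{\sqrt{1-x}}, \qquad 1 - \sqrt{1-x} = \frac{x}{1 + \sqrt{1-x}} \le x,
\]
which gives $(1-x)^{-1/2} - 1 \le x/\sqrt{1-x}$ and hence the claimed bound after dividing by $x^2$. Substituting $x = t/n$ and using $1/\big(x\sqrt{1-x}\big) = n^{3/2}/\big(t\sqrt{n-t}\big)$ reduces the target to showing $\sqrt{n}\sum_{t=2}^{n-1} \frac{1}{t\sqrt{n-t}} \le 3\ln(en)$.

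Next I would split the sum at the midpoint $m = \lfloor n/2\rfloor$. For $2 \le t \le m$ one has $n-t \ge n/2$, so $\frac{1}{t\sqrt{n-t}} \le \sqrt{2/n}\,\tfrac1t$ and the harmonic estimate $\sum_{t=2}^m \tfrac1t \le \ln m \le \ln n$ contributes $\sqrt{2/n}\,\ln n$. For $m < t \le n-1$ one has $1/t < 2/n$; after the substitution $s = n-t$, which ranges over fewer than $n/2$ values, the bound $\sum_s s^{-1/2} \le 2\sqrt{n/2} = \sqrt{2n}$ contributes $\tfrac{2}{n}\sqrt{2n} = 2\sqrt2/\sqrt n$. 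Combining gives $\sum_{t=2}^{n-1}\frac{1}{t\sqrt{n-t}} \le \sqrt2(\ln n + 2)/\sqrt n$, and multiplying by $\sqrt n$ yields $\sqrt2\,\ln n + 2\sqrt2 \le 3\ln n + 3 = 3\ln(en)$, where the last step uses $\sqrt2 \le 3$ and $2\sqrt2 \le 3$ (the empty-sum cases $n \le 2$ being trivial).

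The main obstacle is the singular behaviour of the summand as $t \to n$, where a naive integral comparison breaks down; the majorant $1/\big(x\sqrt{1-x}\big)$ is precisely what tames it, since its singularity is only $(n-t)^{-1/2}$, which is summable and produces the harmless $O(1/\sqrt n)$ tail, while the $\ln$ factor arises entirely from the harmonic part near small $t$. The only real care is in tracking constants so as to land at exactly $3$, but the split above leaves comfortable slack in both coefficients.
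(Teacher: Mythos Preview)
Your proof is correct and takes a genuinely different route from the paper's. Both arguments begin by replacing the summand $g(x) = \big((1-x)^{-1/2}-1\big)/x^2$ by a pointwise majorant, but the majorants differ. You use $g(x)\le 1/\big(x\sqrt{1-x}\big)$, obtained by the clean algebraic factorisation $1-\sqrt{1-x}=x/(1+\sqrt{1-x})\le x$; the paper instead proves $g(x)\le 1/x + (1-x)^{-1/2}$ via a more indirect continuity argument, showing that $g(x)=1/x$ and $g(x)=(1-x)^{-1/2}$ each have a single root $x_c$ on $(0,1)$ and that $g$ lies below $1/x$ on $(0,x_c]$ and below $(1-x)^{-1/2}$ on $[x_c,1)$. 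The trade-off is that the paper's additive majorant immediately decouples the two singularities, so the sum splits into $\sum_{t\ge 2}1/t\le \ln(en)$ plus a Riemann sum for $\int_0^1(1-x)^{-1/2}\,dx=2$, with no midpoint split needed. Your multiplicative majorant keeps the singularities entangled and forces the dyadic split at $\lfloor n/2\rfloor$, but the derivation of the majorant itself is considerably more transparent. Both approaches arrive at the target $3\ln(en)$ with slack.
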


\begin{proof}
For $0 < x < 1$ let $h(x) = \frac{1}{x^2} ((1 - x)^{-1/2} - 1)$. 

We first show that $h(x) \leq \frac{1}{x} + (1 - x)^{-1/2} $.
Indeed, first notice that on $\R_+$, $h(x) = \frac{1}{x}$ has only one solution $x_c$
which is such that $1 = (x_c + 1) \sqrt{1 - x_c}$, furthermore:
$h(x) \underset{x \to 0}{\sim} \frac{1}{2 x} \leq \frac{1}{x}$. Therefore by continuity hypotheses arguments, $h(x) \leq \frac{1}{x}$ on $(0, x_c]$. Similarly, $h(x) = \frac{1}{\sqrt{1 - x}}$ has only one solution on $[0, 1)$ which is also $x_c$ and $h(x) \underset{x \to 1}{=} \frac{1}{\sqrt{1 - x}} - 1 + o(1)$, hence for $x$ close enough to $1$ we have $h(x) \leq \frac{1}{\sqrt{1 - x}}$ and by continuity arguments $h(x) \leq \frac{1}{\sqrt{1 - x}} $ on $[x_c, 1)$. Finally we get that $h(x) \leq \frac{1}{x} + (1 - x)^{-1/2}$ on $(0, 1)$.
We now use this bound to obtain the result:

\begin{align*}
\frac{1}{n} \sum_{t=2}^{n-1} \frac{ 1  }{(\frac{t}{n})^2} \left ( (1 - \frac{t}{n})^{-1/2} - 1 \right )  
&\leq
\frac{1}{n} \sum_{t=2}^{n-1} \left ( \frac{ 1  }{(\frac{t}{n})} + (1 - \frac{t}{n})^{-1/2} \right ) \\
&\leq 
\ln(e n) + \int_0^1 (1 - x)^{-1/2} dx \\
&= \ln(e n) + 2 \\
&\leq 3 \ln(en).
\end{align*}
\end{proof}

\section{Experiment Setup for the Breakdown Point Experiment}

We followed the experimental setup of \cite{suggala2019adaptive}. For Torrent, CRR and AdaCRR we used the implementations provided by the authors. We additionally used the matlab in-built implementation for the Huber regression. The hyperparameters of these algorithm were set by grid-search, except for AdaCRR for which they were set to their default values provided by \cite{suggala2019adaptive}. To ensure that saturation was reached, we did 10 passes on the whole data when using our algorithm on the $\ell_1$-loss. 

\end{document}